\documentclass[twoside]{article}

%
\usepackage[accepted]{aistats2020}
%


\usepackage[round]{natbib}

\input{definition}
\newcommand{\head}{\paragraph}
\begin{document}

%

%

\twocolumn[

\aistatstitle{PAC-Bayesian Transportation Bound}

%

\aistatsauthor{Kohei Miyaguchi}

\aistatsaddress{ IBM Research }

]

\begin{abstract}
Empirically, the PAC-Bayesian analysis is known to produce tight risk bounds for practical machine learning algorithms.
However, in its \naive form, it can only deal with stochastic predictors
while such predictors are rarely used and deterministic predictors often performs well in practice.
To fill this gap,
we develop a new generalization error bound,
the \emph{PAC-Bayesian transportation bound},
unifying the PAC-Bayesian analysis and the chaining method in view of the optimal transportation.
It is the first PAC-Bayesian bound that relates the risks of any two predictors
according to their distance, and capable of evaluating the cost of de-randomization
of stochastic predictors faced with continuous loss functions.
As an example, we give an upper bound on the de-randomization cost of spectrally normalized neural networks~(NNs) to
evaluate how much randomness contributes to the generalization of NNs.
\end{abstract}

\section{INTRODUCTION}

The goal of statistical learning is to acquire the predictor $f\in\Fcal$ that (approximately) minimizes a risk function $r(f)$
in a inductive way.
In doing so, one is only allowed to access some proxy function $\rhat_S(f)$ based on noisy data $S$.
Therefore, the goal of \emph{statistical learning theory} is to describe the behavior of the deviation of the proxy from the true risk,
\begin{align}
    \Delta_S(f)\eqdef r(f)-\rhat_S(f).
    \label{eq:deviation_zakkuri}
\end{align}
In particular, we are interested in computable high-probability upper bounds on $\Delta_S(f)$, say $\bar{\Delta}_S(f)$,
to bound the true risk with a computable function, $r(f)\le \rhat_S(f)+\bar{\Delta}_S(f)$.

The PAC-Bayesian analysis~\citep{mcallester1999some,catoni2007pac}
is one of the frameworks of such statistical learning theory
based on the strong duality of the Kullback--Leibler~(KL) divergence~\citep{donsker1975asymptotic}.
Below, we highlight two major advantages of the PAC-Bayesian approach.
\begin{itemize}
    \item[\bf (A1)] It is tight and transparent.
        The gap of the resulting bound is optimal in the sense of the strong duality.
        Moreover, one can easily interpret the meaning of each term in the bound and where it comes from.
    \item[{\bf (A2)}]
        It is usable in practical situations.
        It has been also confirmed repeatedly in the literature that
        it produces non-vacuous bounds on the generalization of complex predictive models
	such as deep and large-scale neural networks~\citep{dziugaite2017computing,zhou2018nonvacuous}.
\end{itemize}
In particular, the second point explains well the recent growing attention attracted on the PAC-Bayesian theory
in the machine learning community~\citep{neyshabur2018a,dziugaite2018entropy,mou2018generalization,nagarajan2018deterministic}.

The issue we address here is that it cannot handle deterministic predictors.
More precisely,
it provides upper bounds on the \emph{expectation} of the deviation function with respect to distributions $\Qb$ over $\Fcal$,
namely $\tilde{\Delta}_S(\Qb)\ge \EE_{f\sim \Qb}[\Delta_S(f)]$,
and the upper bound diverges for almost all the deterministic settings $\Qb=\deltab_f$.
This is problematic in the following two viewpoints.
\begin{itemize}
    \item[{\bf (I1)}]
        The stochastic predictors are rarely used in practice.
        \footnote{
            Although some algorithms, including stochastic gradient descent, are stochastic in their nature,
            they are often not used in the way the PAC-Bayesian framework suggests.
        }
        The use of deterministic predictors is fairly common
        and their performances are not even close to what is predicted by the \naive PAC-Bayesian theory.
    \item[{\bf (I2)}]
        More importantly, the contribution of the stochasticity is unexplained.
        In previous studies, it has been pointed out repeatedly from both empirical and theoretical perspectives that
        introducing stochasticity into prediction (sometimes drastically) improves
 	the predictive performance~\citep{welling2011bayesian,srivastava2014dropout,neelakantan2015adding,russo2015much}.
        However,
        it is unclear
        whether it is also the stochasticity that makes possible the tightness of the PAC-Bayesian bounds or not,
        since deterministic predictors cannot be accurately described with the \naive PAC-Bayesian theory.
\end{itemize}

To address these issues {\bf (I1)} and {\bf(I2)},
we present a new theoretical analysis that unifies the PAC-Bayesian analysis and the \emph{chaining} method.
The chaining~\citep{dudley1967sizes,talagrand2001majorizing}
 is the technique that gives the tightest known upper bound (up to a constant factor) on the supremum of the deviation function, $\sup_{f\in \Fcal} \Delta_S(f)$,
and is understood as a process of discretization refinement
starting from finitely discretized models $\ddot\Fcal\subset \Fcal$ to reach the limit of continuous models $\Fcal$.
We extends this idea to the process of noise shrinking
starting from stochastic predictors $\Qb$ to reach the limit of deterministic predictors $\deltab_f$.
As a result, we obtain the risk bound, namely \emph{PAC-Bayesian transportation bound},
which is interpreted as the KL-weighted cost of transportation over predictor space $\Fcal$.

In particular, our contribution is summarized as follows.
\begin{itemize}
    \item[{\bf (C1)}]
    The proposed bound is the first general bound that allows us to relate
    the risks of any two stochastic or deterministic predictors in general.
    More specifically, it allows us, for the first time,
    to assess the effect of stochasticity in prediction
    by comparing stochastic predictors with deterministic ones.
    \item[{\bf (C2)}]
    To demonstrate the effectiveness of the bound, we give an upper bound of the noise reduction cost of neural networks~(NNs) and the corresponding de-randomized generalization error bound.
    The resulting risk bound is as tight as the conventional PAC-Bayesian risk bound for randomized NNs~\citep{neyshabur2018a}
    up to a logarithmic factor,
    and hence indicates that the stochasticity is not essential
    in this specific setting.
\end{itemize}

%

The rest of the paper is organized as follows.
First, the problem setting is detailed in Section~\ref{sec:problem_setting}.
Then,
in Section~\ref{sec:main_result},
the main result is presented with a proof sketch.
The interpretation and comparison to previous studies are also included here.
In Section~\ref{sec:examples}, we utilize the proposed bound for analysing the risk of neural networks.
Finally, we give several concluding remarks in Section~\ref{sec:conclusion}.

\section{PROBLEM SETTING}
\label{sec:problem_setting}

In this section, we first overview the PAC-Bayesian framework
to specify our focus, and then
introduce the mathematical notation
to describe the precise problem we are concerned with.

\subsection{The PAC-Bayesian Framework in a Nutshell}

The goal of the PAC-Bayesian analysis is to derive high-probability upper bounds on the deviation function $\Delta_S(f)$ given by~\eqref{eq:deviation_zakkuri}.
The difficulty to achieve this goal is that predictors $f$ are learned from data $S$ and thus
there occurs a nontrivial statistical dependency among those two variables.
To avoid this problem, the PAC-Bayesian theory suggest that one follows two key steps, namely, the \emph{linearization} and \emph{decoupling}.

In the first step, linearization,
the predictors are generalized to be stochastic,
i.e.,  upon prediction, a random predictor is drawn from some probability measure $\Qb\in \Pi(\Fcal)$,
called a \emph{posterior}, which can depend on the data in a nontrivial way.
The performance of such stochastic predictors is measured by the expectation $\Qb r\eqdef \int r(f) \Qb(\d f)$,
and hence the deviation is also studied in the form of expectation, $\Qb \Delta_S$.
Note that the ordinary deterministic predictors are special instances of stochastic ones
as they are recovered by taking Dirac's delta measures $\Qb=\deltab_f$.
In this way, any kinds of interactions between $f$ and $S$ are now
formulated as the bilinear pairing of a predictor $\Qb$ and a data-dependent function $\Delta_S$.

Now, in the second step,
the bilinear pairing is decoupled with the Fenchel--Young type inequalities,
namely $\Qb \Delta_S \le \zeta(\Qb)+\zeta^*(\Delta_S)$,
which allow us to deal with predictors and data separately.
Here, $(\zeta, \zeta^*)$ denotes a pair of Fenchel conjugate functions.
Specifically, the standard PAC-Bayesian analysis
exploits the strong duality between the KL divergence and the log-integral-exp function,
\begin{align}
    \Qb \Delta_S \le \beta^{-1} \kld(\Qb, \Ub) + \beta^{-1} \ln \Ub \sbr{e^{\beta\Delta_S}},
    \label{eq:decoupling}
\end{align}
$\beta > 0$,
where $\Ub\in\Pi(\Fcal)$ is any data-independent distribution called a \emph{prior}~(see Appendix~\ref{sec:app_basic_pac_bayes} for the proof).
Finally, the data dependent term, $\beta^{-1}\ln \Ub[e^{\beta \Delta_S}]$, is bounded with the concentration inequalities,
such as Hoeffding's inequality and Bernstein's inequality,
and we obtain high-probability upper bounds on $\Qb\Delta_S$ as desired.

\if 0
    \head{Remark}
    The upper bound \eqref{eq:decoupling} is optimal.
    More precisely, it cannot be uniformly improved since it is derived from the strong Fenchel duality.
    Moreover, it can be thought of as a generalization of the union bound technique,
    which is recovered by letting the priors be discrete measures and the posteriors be Dirac's delta measures.
    Hence, by taking non-Dirac's delta posteriors, we obtain tighter upper bounds on $\Qb\Delta_S$.
\fi

\head{Our Focus}
Unfortunately, the bound~\eqref{eq:decoupling} is meaningless
when $\Qb$ is not absolutely continuous with respect to $\Ub$, since the KL term diverges.
In particular, if one takes non-atomic priors $\Ub$, e.g., Gaussian measures,
then, it diverges with any Dirac's delta posteriors $\Qb=\deltab_f$.
More generally, when the model space $\Fcal$ is continuous,
then almost every deterministic predictors are prohibited to use under the \naive PAC-Bayesian bound~\eqref{eq:decoupling}.
This is the problem we focus on in this paper.

\subsection{Mathematical Formulation}

\head{Conventions} For any measurable spaces $\Xcal$, we denote by $\Pi(\Xcal)$ the space of probability distributions over $\Xcal$.
For any two-ary function $X:(f,z)\mapsto X(f, z)$, let $X(f)$ and $X(z)$ denote the partially applied functions such that
$X(f):z\mapsto X(f, z)$ and $X(z):f\mapsto X(f, z)$.
Moreover, if the function $X$ is measurable,
we denote its expectation with respect to $\Qb\in \Pi(\Fcal)$ and $\Pb\in\Pi(\Zcal)$ by $\Qb\Pb{X}=\Pb\Qb{X}\eqdef \iint X(f, z) \Qb(\d f)\Pb(\d z)$,
where $\Fcal$ and $\Zcal$ are arbitrary measurable spaces.
We also reserve another notation for expectations; The expectation with respect to any random variables $f\sim \Qb$ maybe denoted by $\EE_{f\sim \Qb}$,
or just $\EE_f$ if any confusion is unlikely.

\head{Basic Notation and Assumptions}
Let $z_i\in\Zcal~(i=1,\ldots,n)$ be \iid random variables corresponding to single observations subject to unknown distribution $\Pb\in\Pi(\Zcal)$.
Let $S=(z_1, \ldots, z_n)\in\Zcal^n$ be the collection of such variables, $S\sim \Pb^n\in\Pi(\Zcal^n)$, 
from which we want to learn a good predictor.
Let $\Fcal$ be a measurable space of predictors,
such as neural networks and SVMs with their parameters unspecified,
and let $f\in \Fcal$ denote predictors with specific parameters.
We assume that $\Fcal$ is (a subset of) a separable Hilbert space with inner product $\inner{\cdot}{\cdot}$ and norm $\abs{\cdot}$,
e.g., $d$-dimensional parameter spaces or infinite-dimensional function spaces.

Let $\ell(f, z)\in \RR$ be the loss of the prediction made by predictors $f\in \Fcal$ upon observations $z\in\Zcal$,
accompanied with the Fr\'echet derivative $\nabla\ell(f,z)$ with respect to $f\in\Fcal$ for all $z\in\Zcal$.
Also, we define the \emph{risk} of the predictors $f\in\Fcal$ by $r(f)\eqdef\Pb \ell(f)$.

To facilitate the analysis of transportation in later, we also assme that $\Fcal$ is endowed with an inverse metric $\Sigma:\Fcal\to \Scal_+(\Fcal)$,\footnote{
    Equipped with $\Sigma$, $\Fcal$ can be thought of as a Riemannian manifold. However, we do not assume the smoothness nor the invertibleness of $\Sigma(f)$
    as standard Riemannian manifolds do.
}
where $\Scal_+(\Fcal)$ is the set of symmetric nonnegative linear operators on $\Fcal$.
It defines a local norm $\abs{\cdot}_{\Tcal_f}$ at each point $f\in\Fcal$, $\abs{v}_{\Tcal_f}=\sqrt{\inner{v}{\Sigma^{-1}(f)v}}$
for all $v\in \Sigma(f) \Fcal$
and otherwise $\abs{v}_{\Tcal_f}=\infty$.
The metric is used to bound the variation of $\Delta$.

\begin{assumption}[Lipschitz condition]
    \label{asm:lipschitz_base}
    The deviation function $\Delta(f,z)$ is $L_\Delta$-Lipschitz continuous with respect to $f\in\Fcal$, i.e.,
    \begin{align*}
        &\lim_{\rho \to 0} \sup_{\abs{g-f}_{\Tcal_f}\le \rho}\frac{\abs{\Delta(g,z)-\Delta(f,z)}}{\rho}
        \le L_\Delta
    \end{align*}
    for all $f\in\Fcal$ and $z\in \Zcal$.
\end{assumption}

Note that the standard Lipschitz condition is recovered if $\Sigma(f)$ is identity for all $f\in \Fcal$.\footnote{
    For example, deviation $\Delta$ of squared loss function $\ell(f,z)=\frac12|z-f|^2$, $\Fcal=\RR$, satisfies Assumption~\ref{asm:lipschitz_base}
    with $\Sigma(f)\equiv 1$ and $L_\Delta=1$
    if $z\in [-1,+1]$.
}
However, by appropriately choosing $\Sigma$, we can handle a broader class of deviation functions beyond standard Lipschitz ones.

\head{Problem Statement}
Our objective here is to find the predictor $f$ with small risk $r(f)$.
However, since $r$ is inaccessible as $\Pb$ is,
we leverage the empirical risk measure $\rhat_S$ to approximate the true objective $r$.
Let $\Pb_S=\frac1n\sum_{i=1}^n\deltab_{z_i}$ be the empirical distribution with respect to the sample $S$.
Then, the empirical risk of $f\in\Fcal$ is given by
$
    \rhat_S(f)=\Pb_S \ell(f)=\frac 1n\sum_{i=1}^n \ell(f, z_i),
$
which is a random function whose expectation coincides with the true risk, $\EE_{S\sim \Pb^n} [\rhat_S] \equiv r$.
As it fluctuates around the mean, we are motivated to study the tail probability of the deviation
$\Delta_S=r-\rhat_S$.
Define the \emph{deviation function} (of single observation) by
\begin{align}
    \Delta(f, z)&\eqdef r(f)-\ell(f, z).
    \label{eq:deviation}
\end{align}
Since $\Delta_S=\Pb_S \Delta$, we want to find a tight high-probability upper bound on the sample-averaged deviation of posterior distributions $\Qb\in\Pi(\Fcal)$ in the form of
\begin{align}
    \Qb\Pb_S\Delta
    &\le U(\Qb, S) + Z(S),
    \label{eq:goal}
\end{align}
where $U$ is a computable function and
$Z(S)$ is a negligible random variable independent of $\Qb$ satisfying that $\mathop{\rm Pr}\{Z(S)>0\}\le \delta$ with some confidence level $0<\delta<1$.

To conclude this section,
we introduce the Kullback--Leibler~(KL) divergence,
which is used to measure the complexity of predictors in the PAC-Bayesian analysis.

\begin{definition}[The KL divergence]
    Let $\Qb,\Ub\in \Pi(\Fcal)$ where $\Fcal$ is a measurable space.
    The KL divergence between $\Qb$ and $\Ub$ is given by
    \begin{align*}
        \kld(\Qb,\Ub)\eqdef \Qb\sbr{\ln \frac{\d \Qb}{\d\Ub}},
    \end{align*}
    where $\Qb$ is absolutely continuous with respect to $\Ub$.
    Otherwise, $\kld(\Qb,\Ub)=+\infty$.
    Moreover,
    \begin{align*}
        H_\delta(\Qb,\Ub)\eqdef \kld(\Qb,\Ub)+\ln \frac1\delta
    \end{align*}
    denotes the PAC-Bayesian complexity of posteriors $\Qb$ with respect to priors $\Ub$
    with confidence level $\delta\in(0,1)$.
\end{definition}

\section{MAIN RESULT}
\label{sec:main_result}

In Section~\ref{sec:preliminary}, we introduce the analytical tools necessary for stating our main result.
Then we present the main result with a few remarks in Section~\ref{sec:main_of_main}.
Finally, in Section~\ref{sec:main_discussion}, we compare it with relevant existing results and give a proof sketch.
The rigorous proof is included in Appendix~\ref{sec:app_main_proof}.

\subsection{Posterior Flow and Velocity Measures}
\label{sec:preliminary}

Our goal is to find the upper bounds in the form of~\eqref{eq:goal} applicable to both stochastic and deterministic predictors $\Qb\in \Pi(\Fcal)$.  To this end, we combine the PAC-Bayesian framework with the chaining method.
The chaining is understood as a process of relating one predictor $f_i$ to its neighbors $f_{i+1}$ and moving towards
some destination $f_\infty$ through the \emph{chain} $f_1\to f_2\to f_3\to \cdots\to f_\infty$.
We extend this idea and take the infinitesimal limit $\sup_{i}d(f_i, f_{i+1})\to 0$ where $d$ is an appropriate distance over $\Fcal$.
As a result, we get the continuous \emph{transportation} of predictors instead of the chain.

To facilitate the analysis of the transportation of posterior distributions,
we introduce ordinary differential equations~(ODE) over $\Fcal$ that transport posteriors,
which we call \emph{posterior flow}.
Let $t\in T\eqdef [0, \infty)$ be a time index indicating the progress of transportation.
Let $\Qb_0\in \Pi(\Fcal)$ be the initial posterior distribution.
Let $\xi=\cbrinline{\xi_t:\Fcal\otimes \Omega_0 \to \Fcal}_{t\in T}$
be a random time-indexed vector field on $\Fcal$,
where $(\Omega_0, P_0)$ is a probability space representing the source of randomness in transportation itself.

\begin{definition}[Posterior Flow]
    \label{def:ODE}
    We say $(\Qb_0, \xi)$ is a posterior flow
    if the solution of the following ODE $\cbr{f_t}_{t\in T}$ exists almost surely,
    \begin{align}
        \d f_t &= \xi_t(f_t, \omega_0) \d t,\quad f_0 \sim \Qb_0, \ \omega_0\sim P_0,
        \label{eq:ODE}
    \end{align}
    and the corresponding snapshot distributions
    $\cbrinline{\Qb_t^\xi\in\Pi(\Fcal)}_{t\in T}$ are well-defined,
    i.e.,
    $f_t\sim \Qb_t^\xi$ for all $t\in T$.
    Moreover, $\mu=\cbrinline{\mu_t}_{t\in T}$ is the mean posterior flow of $\xi$ if
    \begin{align*}
        \mu_t(f) = \EE_{f_0\sim \Qb_0,\omega_0\sim P_0}\sbr{\xi_t(f, \omega_0) \;\middle|\; f_t=f}.
    \end{align*}
\end{definition}

Note that all the posteriors $\Qb_t^\xi$, $t\in T$, are completely identified
once we specify the initial condition $\Qb_0$ and the flow $\xi$.
When it is clear from the context,
we may omit $\Qb_0$ and refer to $\xi$ as a posterior flow.
Let
\begin{align*}
    D_t(\xi;S)\eqdef \Qb_t^\xi \Pb_S \Delta
\end{align*}
be the deviation function of the posterior generated with $\xi$ at time $t$.
In our analysis, the deviation $D_t(\xi;S)$ is characterized
with respect to two velocity measures of posterior flows.

The first one is given by the 2-Wasserstein distance~\citep{villani2008optimal}
between $\Qb_t^\xi$ and $\Qb_{t+h}^\xi$ at the limit of $h\to0$.
\begin{definition}[Wasserstein velocity]
    The Wasserstein velocity of $\xi$ at time $t\in T$ is defined as
    \begin{align*}
        W_t(\xi) \eqdef  \sqrt{\EE_{f\sim \Qb_t^\xi} \abs{\mu_t(f)}_{\Tcal_f}^2}.
    \end{align*}
\end{definition}

The Wasserstein velocity is determined only by the metric structure of the predictor space $\Fcal$,
which indirectly reflects the continuity of the deviation function $\Delta$ through the Lipschitz condition~(Assumption~\ref{asm:lipschitz_base}).
On the other hand, the second velocity measure reflects the structure of $\Delta$ in a more direct manner.

\begin{definition}[Deviation-based velocity]
    The deviation-based velocity of $\xi$ with respect to $S\in\Zcal^n$ at time $t\in T$ is defined as
    \begin{align*}
        V_t(\xi;S) 
        &\eqdef \sqrt{\Qb_t^\xi \frac{\Pb+\Pb_S}{2}\inner{\mu_t}{\nabla\Delta}^2}
        = \sqrt{\Qb_t^\xi \inner{\mu_t}{\Lambda_S \mu_t}},
    \end{align*}
    where $\Lambda_S(f)\eqdef \frac{\Pb+\Pb_S}{2} \nabla\Delta(f)\otimes \nabla \Delta(f)$.
\end{definition}

Note that we have $0\le V_t(\xi;S)\le L_\Delta W_t(\xi)$.
This reveals that the roles of these two velocities are indeed complementary to each other;
While $V_t(\xi;S)$ is tighter and offers finer characterization of the posterior flow $\xi$ in a data- and distribution-dependent way,
it is guaranteed to be bounded by $W_t(\xi)$ in the worst case just in the same way as loss functions are bounded by constant in the standard PAC-Bayesian analysis.

\subsection{PAC-Bayesian Transportation Bound}
\label{sec:main_of_main}

To evaluate the deviation function $D_t(\xi;S)$, we utilize the fundamental theorem of calculus,
\begin{align}
    D_t (\xi;S)
    &= D_0(\xi;S) + \int_{0}^{t} \frac{\d}{\d u} D_u(\xi;S) \d u.
    \label{eq:FTC}
\end{align}
This motivate us to seek for an upper bound on the infinitesimal increment $\frac{\d}{\d t} D_t(\xi;S)$.

\begin{theorem}[Transportation bound]
    \label{thm:transportation_bound}
    Fix any prior distribution $\Ub\in \Pi(\Fcal)$.
    Then, 
    the increment of the deviation is bounded by 
    \begin{align}
        \frac{\d}{\d t}D_t(\xi;S)
        &\le
            2V_t(\xi;S)
            \sqrt{
                \frac{H_\delta(\Qb_t^\xi,\Ub)+c(n)}{n}
            }
            +
            \frac{2L_\Delta W_t(\xi)}{\sqrt{n}}
        \nonumber\\
        &=: \iota_t(\xi;S,\Ub,\delta)
        \label{eq:transportation}
    \end{align}
    with probability $1-\delta$ on the draw of $S\sim \Pb^n$
    for simultaneously all posterior flow $\xi$ and all $t\in T$.
    Here,
    $c(n)\eqdef \ln (e+\ln^2(2n^2))=\Ocal(\ln \ln n)$.
\end{theorem}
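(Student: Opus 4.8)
The plan is to convert the integral identity~\eqref{eq:FTC} into a bound on the infinitesimal increment $\frac{\d}{\d t}D_t(\xi;S)$ by a PAC-Bayesian--Bernstein argument, and to recover the claimed simultaneity over $\xi$ and $t$ by discretizing an auxiliary inverse temperature. \emph{Step 1 (differentiate along the flow).} Writing $D_t(\xi;S)=\EE_{f_0\sim\Qb_0,\,\omega_0\sim P_0}\sbr{\Pb_S\Delta(f_t)}$ with $\cbr{f_t}$ the solution of~\eqref{eq:ODE}, I would differentiate under the expectation — justified since Assumption~\ref{asm:lipschitz_base} dominates the difference quotients — and use the chain rule to obtain $\frac{\d}{\d t}D_t(\xi;S)=\EE_{f_0,\omega_0}\Pb_S\inner{\nabla\Delta(f_t)}{\xi_t(f_t,\omega_0)}$. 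Conditioning on $f_t=f$ and using the definition of the mean posterior flow gives $\frac{\d}{\d t}D_t(\xi;S)=\Qb_t^\xi\Pb_S\inner{\mu_t}{\nabla\Delta}$, and because $\Pb\nabla\Delta(f)=\nabla r(f)-\Pb\nabla\ell(f)=0$ this is the \emph{centered} pairing $\Qb_t^\xi(\Pb_S-\Pb)\inner{\mu_t}{\nabla\Delta}$ — exactly the bilinear pairing of a posterior with a data-dependent function of $f$ that is mean-zero over $S$, on which the PAC-Bayesian machinery operates.

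\emph{Step 2 (decoupling with correctors).} For a fixed $\beta>0$ and the prior $\Ub$ I would apply the strong-duality change of measure behind~\eqref{eq:decoupling}, but to a self-bounded version of the integrand: subtract inside the exponent the quadratic corrector $\frac{\beta}{n}\inner{\mu_t(f)}{\Lambda_S(f)\mu_t(f)}$, whose $\Qb_t^\xi$-average is $\frac{\beta}{n}V_t(\xi;S)^2$. Then for each $f$ the residual exponent is $\beta$ times the centered empirical mean $\Pb_S Y_f$ of $Y_f(z):=\inner{\mu_t(f)}{\nabla\Delta(f,z)}$ minus $\frac{\beta^2}{n}\inner{\mu_t(f)}{\Lambda_S(f)\mu_t(f)}$; here $\Pb Y_f=0$ and $\abs{Y_f}\le L_\Delta\abs{\mu_t(f)}_{\Tcal_f}$ (Cauchy--Schwarz against $\Sigma$ together with Assumption~\ref{asm:lipschitz_base}), and $\inner{\mu_t(f)}{\Lambda_S(f)\mu_t(f)}=\tfrac12(\Pb Y_f^2+\Pb_S Y_f^2)$ is the average of the true and empirical second moments of $Y_f$. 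A Bernstein-type moment-generating-function estimate for $\Pb_S Y_f$ then has its leading term exactly cancelled by the corrector; its residual ``range'' part, governed by $L_\Delta\abs{\mu_t(f)}_{\Tcal_f}$, is absorbed by a further auxiliary correction built from $\abs{\mu_t(f)}_{\Tcal_f}$ using Assumption~\ref{asm:lipschitz_base} and, once integrated against $\Qb_t^\xi$, supplies the $\frac{2L_\Delta W_t(\xi)}{\sqrt n}$ summand of $\iota_t$. Taking $\EE_S$, Fubini, and Markov's inequality turn this into a bound $\frac1\beta\ln\Ub[e^{(\cdot)}]\le\frac1\beta\ln\frac1{\delta'}$ valid with probability $1-\delta'$.

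\emph{Step 3 (uniformity over the flow — the main obstacle).} The delicate point is that the residual exponent depends on $\mu_t$, which is data-dependent, so the moment-generating-function bound must hold for \emph{every} admissible velocity vector at $f$ rather than a pre-fixed one. What makes this possible is precisely the symmetrized second moment $\tfrac{\Pb+\Pb_S}{2}$ appearing in $\Lambda_S$: after subtracting $\frac{\beta^2}{n}\inner{\mu_t(f)}{\Lambda_S(f)\mu_t(f)}$, the pointwise exponent is dominated by the self-normalized quadratic form in $\sum_i\nabla\Delta(f,z_i)$, which is bounded uniformly over the velocity direction. Establishing this self-normalized / empirical-Bernstein estimate with constants sharp enough to yield the factor $2$ in the bound is the technical heart of the argument.

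\emph{Step 4 (simultaneity and optimization).} The optimal temperature $\beta^\star=\sqrt{n\,(H_\delta(\Qb_t^\xi,\Ub)+c(n))}/V_t(\xi;S)$ depends on $\xi$ and $t$, so no single $\beta$ yields a bound valid for all flows and times; instead one runs Step~2 for each $\beta$ in a geometric grid of $\Ocal(\ln n)$ values, with the confidence budget split as $\delta_j\propto\delta/j^2$, so that the union over the grid costs an extra $\Ocal(\ln\ln n)$ in the exponent — the origin of the term $c(n)=\ln(e+\ln^2(2n^2))$ — and then one picks the grid point nearest $\beta^\star$. Combining this with the arithmetic--geometric-mean identity $\inf_{\beta>0}\bigl[\frac{\beta}{n}V_t(\xi;S)^2+\frac1\beta(H_\delta(\Qb_t^\xi,\Ub)+c(n))\bigr]=2V_t(\xi;S)\sqrt{(H_\delta(\Qb_t^\xi,\Ub)+c(n))/n}$ and with the range summand from Step~2 gives exactly $\iota_t(\xi;S,\Ub,\delta)$.
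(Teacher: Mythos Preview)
Your Step~1 matches the paper. The genuine gap is in Steps~2--3. You apply the change of measure over $f\in\Fcal$ alone, so the exponent inside $\ln\Ub[e^{(\cdot)}]$ still contains $\mu_t(f)$. Since $\mu_t$ is determined by the flow $\xi$ --- which may be chosen after seeing $S$ --- the log-MGF you must bound with a single Markov inequality is itself flow-dependent, and one probability-$1-\delta$ event cannot cover all flows simultaneously. Your proposed fix, that the $\Lambda_S$-corrector leaves a residual ``dominated by the self-normalized quadratic form in $\sum_i\nabla\Delta(f,z_i)$, bounded uniformly over the velocity direction'', is precisely the crux you leave unproved: supping over $v$ of $\beta\,\Pb_S\inner{v}{\nabla\Delta(f)}-\tfrac{\beta^2}{n}\inner{v}{\Lambda_S(f)v}$ yields (up to constants) $n\,\inner{\Pb_S\nabla\Delta(f)}{\Lambda_S(f)^{-1}\Pb_S\nabla\Delta(f)}$, and controlling the exponential moment of this under $\Pb^n$ with dimension-free constants is not a standard Bernstein or empirical-Bernstein fact --- in a separable Hilbert space it is essentially an uncountable union bound in disguise, and $\Lambda_S$ moreover involves the unknown $\Pb$. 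The Bernstein ``range'' correction you invoke also scales with $\abs{\mu_t(f)}_{\Tcal_f}$, so the same flow-dependence recurs there.

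The paper sidesteps this entirely by \emph{lifting} the PAC-Bayes argument to an enlarged space $\RR\times\Fcal^2\ni(\alpha,f,h)$: the velocity is promoted to a random variable $h$ with posterior $\Ncal(\mu_t(f),e^\alpha\Sigma(f))$ and prior $\Ncal(0,e^\alpha\Sigma(f))$, and one applies the abstract PAC-Bayes lemma to the \emph{fixed} function $X((\alpha,f,h),z)=\inner{h}{\nabla\Delta(f,z)}$ with the Rademacher centrality $\eta=\tfrac12(X^2+\Pb X^2)$ (not Bernstein; no boundedness is needed). All flow-dependence now sits in the KL on the $h$-coordinate, contributing $\tfrac12 e^{-\alphabar+\sigma^2/2}\norm{\mu_t}^2_{\Sigma^{-1},\Qb_t^\xi}$, while the log-MGF term is flow-free. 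The role of your inverse temperature is played by the log-scale $\alpha$; rather than a geometric grid plus union bound, the paper puts a Cauchy prior and Gaussian posterior on $\alpha$, and it is the Gaussian--Cauchy KL, bounded by $\ln(2+\alphabar^2/\sigma^2)$, that produces $c(n)=\ln(e+\ln^2 2n^2)$ after choosing $\alphabar=\tfrac12\ln(\norm{\mu_t}^2_{\Sigma^{-1},\Qb_t^\xi}/2n)$, $\sigma^2=\tfrac14$, and a further rescaling of $\mu_t$. So both the mechanism for uniformity over flows and the origin of $c(n)$ differ from what you sketch.
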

The proof is given in Appendix~\ref{sec:app_main_main}.

\head{Remark~(Interpretation)}
The upper bound~\eqref{eq:transportation} consists of two terms, each of which is interpreted in a different way.

Ignoring the sub-polynomial factor $c(n)$,
the first term is the deviation-based velocity $V_t(\xi;S)$ times the model complexity per sample $\sqrt{{H_\delta(\Qb,\Ub)/n}}$.
This is analogous to the key quantity of the chaining bound known as Dudley's entropy integral,
where $H_\delta(\Qb,\Ub)$ is corresponding to the metric entropy.
Thus, it can be interpreted as the \emph{chaining cost}.
Note that, under some regularity conditions,
the entropy term $H_\delta(\Qb_t^\xi,\Ub)$ is roughly evaluated to be of the same order with the dimensionality of model $\dim \Fcal=d$.
Since the velocity factor $V_t(\xi;S)$
is bounded by $L_\Delta W_t(\xi)$ in the worst case,
the integration over $T$ recovers
the traditional dimensionality-dependent bounds $\Ocal(\sqrt{d/n})$
as long as the length of the transportation $\xi$ is bounded with respect to 2-Wasserstein distance.
On the contrary, our bound can be significantly tighter than traditional ones if $V_t(\xi;S)$
is much smaller than $L_\Delta W_t(\xi)$
and/or the entropy term $H_\delta(\Qb_t^\xi,\Ub)$ is much smaller than the dimensionality $d$.
In particular, the effect of $V_t(\xi;S)$ is unique to our bound;
The results shown in Section~\ref{sec:examples} is owing to the fact that
$V_t(\xi;S) / W_t(\xi)=\Ocal(\sqrt{K/m})$
with $m$ the number of neurons per layer and $K$ the depth of neural networks.

On the other hand, the second term is proportional to $W_t(\xi)$.
Therefore, after integrating it over $T$,
it is proportional to the 2-Wasserstein length of posterior transportation,
hence it can be thought of as a pure \emph{transportation cost}.
It is negligible in comparison to the first term since
it is independent of the complexity of the model $\Fcal$ such as
the dimensionality.
Moreover, we note that the bound~\eqref{eq:transportation}
is not sensitive to the choice of metric $\Sigma$
since it does not appear in the chaining-cost term.

\head{Remark (Distance induced by optimal flow)}
The inequality \eqref{eq:transportation} induces a metric on the space of posterior distributions
$\Pi(\Fcal)$.
If we integrate it over $t$ and optimize it with respect to $\xi$ with the source $\Qb_0^\xi$ and the destination $\Qb_1^\xi$ fixed,
a distance function over $\Pi(\Fcal)$ is given as
\begin{align}
    d_{S,\Ub,\delta}(\Qb, \Qb')
    &\eqdef \inf_{\substack{\xi:\Qb_0^\xi=\Qb,\Qb_1^\xi=\Qb'}}
    \int_0^1 \iota_t(\xi;S,\Ub,\delta) \; \d t
    \label{eq:optimal_transportation_distance}
\end{align}
for all $\Qb,\Qb'\in \Pi(\Fcal)$,
which we call \emph{the optimal transportation~(OT) distance} of predictors.
Putting this back to \eqref{eq:FTC} with Theorem~\ref{thm:transportation_bound}
and decomposing the deviation function, we obtain a relative risk bound in terms of the closeness of two predictors.
\begin{corollary}[Transportation-based risk bound]
    \label{cor:transportation_based_risk_bound}
    Fix a prior $\Ub\in\Pi(\Fcal)$.
    Then,
    \small
    \begin{align*}
        \underbrace{\Qb r}_{\text{\tiny\rm\bf Risk}}
        &\le \underbrace{\Qb \rhat_S}_{\text{\tiny\rm\bf Emp.\;Risk}}
        + \underbrace{d_{S,\Ub,\delta}(\Qb, \Qb_0)}_{\text{\tiny\rm\bf OT distance}}
        + \underbrace{\Qb_0 \Pb_S \Delta}_{\text{\tiny\rm\bf Reference deviation}}
    \end{align*}
    \normalsize
    with probability $1-\delta$
    for simultaneously all posteriors $\Qb_0,\Qb\in \Pi(\Fcal)$.
\end{corollary}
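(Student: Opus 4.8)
The plan is to integrate the infinitesimal estimate of Theorem~\ref{thm:transportation_bound} along a posterior flow connecting the two posteriors and then optimize over flows. First I would recast the claim in ``deviation form'': since $\Delta(f,z)=r(f)-\ell(f,z)$ and $r$ does not depend on $z$, one has $\Qb\Pb_S\Delta=\Qb r-\Qb\rhat_S$ for every $\Qb\in\Pi(\Fcal)$, so adding $\Qb\rhat_S$ to both sides reduces the corollary to the relative‑deviation estimate $\Qb\Pb_S\Delta-\Qb_0\Pb_S\Delta\le d_{S,\Ub,\delta}(\Qb,\Qb_0)$. The key observation is that $\Qb_0\Pb_S\Delta$ and $\Qb\Pb_S\Delta$ are exactly the endpoint values $D_0(\xi;S)$ and $D_1(\xi;S)$ of the deviation along any posterior flow $\xi$ with $\Qb_0^\xi=\Qb_0$ and $\Qb_1^\xi=\Qb$; if no such flow exists, the infimum defining $d_{S,\Ub,\delta}$ is $+\infty$ and the bound is vacuous, so I may assume one does.

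Next I would pass to the probability‑$(1-\delta)$ event supplied by Theorem~\ref{thm:transportation_bound}; crucially this is a \emph{single} event on which $\frac{\d}{\d u}D_u(\,\cdot\,;S)\le\iota_u(\,\cdot\,;S,\Ub,\delta)$ holds simultaneously for all posterior flows and all $u$, so the final risk bound will automatically be uniform over the pair $\Qb_0,\Qb$ without any further union bound. On this event, fixing a flow $\xi$ from $\Qb_0$ to $\Qb$, I would invoke the fundamental theorem of calculus \eqref{eq:FTC} at $t=1$, which gives $D_1(\xi;S)=D_0(\xi;S)+\int_0^1\frac{\d}{\d u}D_u(\xi;S)\,\d u$, and then substitute the bound of Theorem~\ref{thm:transportation_bound} to obtain $\Qb\Pb_S\Delta=D_1(\xi;S)\le D_0(\xi;S)+\int_0^1\iota_u(\xi;S,\Ub,\delta)\,\d u=\Qb_0\Pb_S\Delta+\int_0^1\iota_u(\xi;S,\Ub,\delta)\,\d u$. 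Taking the infimum over all flows $\xi$ with $\Qb_0^\xi=\Qb_0$, $\Qb_1^\xi=\Qb$ then yields $\Qb\Pb_S\Delta\le\Qb_0\Pb_S\Delta+d_{S,\Ub,\delta}(\Qb_0,\Qb)$.

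It remains to identify $d_{S,\Ub,\delta}(\Qb_0,\Qb)$ with the quantity $d_{S,\Ub,\delta}(\Qb,\Qb_0)$ appearing in the statement, i.e.\ to check that the OT distance is symmetric. I would do this by time‑reversing the flow: given $\xi$ running $\Qb_0\to\Qb$ on $[0,1]$, the field $\tilde\xi_s(\,\cdot\,,\omega_0)\eqdef-\xi_{1-s}(\,\cdot\,,\omega_0)$ (extended trivially past $s=1$) generates the reversed trajectories $s\mapsto f_{1-s}$, so it is again a posterior flow in the sense of Definition~\ref{def:ODE}, runs $\Qb\to\Qb_0$, and has snapshots $\Qb_s^{\tilde\xi}=\Qb_{1-s}^\xi$ and mean flow $\mu_s^{\tilde\xi}(f)=-\mu_{1-s}^\xi(f)$. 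Because $W_t$ and $V_t$ depend on $\mu_t$ only through the quadratic forms $\abs{\mu_t(f)}_{\Tcal_f}^2$ and $\inner{\mu_t}{\Lambda_S\mu_t}$, while $H_\delta(\Qb_t^\xi,\Ub)$ depends on $\xi$ only through the snapshot $\Qb_t^\xi$, the integrand is unchanged: $\iota_s(\tilde\xi;S,\Ub,\delta)=\iota_{1-s}(\xi;S,\Ub,\delta)$, hence $\int_0^1\iota_s(\tilde\xi)\,\d s=\int_0^1\iota_t(\xi)\,\d t$. Since $\xi\mapsto\tilde\xi$ is a bijection between flows $\Qb_0\to\Qb$ and flows $\Qb\to\Qb_0$, the two infima coincide, so $d_{S,\Ub,\delta}(\Qb_0,\Qb)=d_{S,\Ub,\delta}(\Qb,\Qb_0)$. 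Combining the three steps and adding $\Qb\rhat_S$ back to both sides proves the corollary, uniformly in $\Qb_0,\Qb$, on an event of probability $1-\delta$.

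I expect the delicate part to be exactly this reversal/orientation bookkeeping — in particular verifying that $\tilde\xi$ genuinely meets Definition~\ref{def:ODE} (if necessary, by enlarging the noise space $(\Omega_0,P_0)$ so that it carries the joint law of the reversed trajectory and the driving noise) and that $\iota$ is invariant under it — together with the implicit regularity that makes \eqref{eq:FTC} legitimate, namely absolute continuity of $u\mapsto D_u(\xi;S)$, which I would take from the structure imposed on posterior flows and Assumption~\ref{asm:lipschitz_base}. Everything else is a direct substitution of Theorem~\ref{thm:transportation_bound} into the fundamental theorem of calculus followed by an infimum over flows. As an alternative to the explicit symmetrization, one can note that applying Theorem~\ref{thm:transportation_bound} on the same event to both $\xi$ and its reversal $\tilde\xi$ gives the two‑sided estimate $\abs{\frac{\d}{\d t}D_t(\xi;S)}\le\iota_t(\xi;S,\Ub,\delta)$, after which \eqref{eq:FTC} applied along a flow $\Qb\to\Qb_0$ directly produces the term $d_{S,\Ub,\delta}(\Qb,\Qb_0)$ in the stated orientation.
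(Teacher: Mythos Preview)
Your proposal is correct and follows exactly the route the paper sketches in the sentence preceding the corollary: substitute Theorem~\ref{thm:transportation_bound} into the fundamental theorem of calculus~\eqref{eq:FTC}, take the infimum over flows, and unpack $\Qb\Pb_S\Delta=\Qb r-\Qb\rhat_S$. Your attention to the orientation of $d_{S,\Ub,\delta}$ and the time-reversal argument showing symmetry is a detail the paper simply takes for granted (it calls $d_{S,\Ub,\delta}$ a ``metric'' and a ``distance'' without proof), so your treatment is in fact more careful than the paper's on this point.
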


This corollary has two distinct implications.
The first implication is that (i) the OT distance bounds the risk of predictors by itself.
If the reference predictor $\Qb_0$ is fixed,
then the reference deviation is of order $\Qb_0\Pb_S\Delta=\Ocal(n^{-1/2})$,
independent of the complexity of $\Fcal$,
and just negligible.
Hence the deviation of $\Qb$ is shown to be governed by the OT distance
from an arbitrary fixed predictor.
For example, if $\Delta(f, z),~z\sim \Pb$ is $\sigma$-subgaussian, then we have
\begin{align}
    \Qb \Pb_S \Delta \le d_{S, \Ub,\delta/2}(\Qb, \Ub)+\sigma\sqrt{\frac{\ln 2/\delta}{n}},
    \label{eq:transportation_based_deviation_bound}
\end{align}
taking $\Qb_0=\Ub$ in Corollary~\ref{cor:transportation_based_risk_bound}.

The second implication is that (ii) the OT distance can be used to describe the cost of de-randomization.
Consider $\Qb_0$ as any stochastic predictors
and let $\Qb$ be any ``less stochastic'' predictors.
Then, Corollary~\ref{cor:transportation_based_risk_bound} implies
a trade-off relationship of the empirical fitness and the OT distance.
Specifically, the fitness $\Qb\rhat_S$ is likely to get worse if the amount of noise contained in the prediction $\Qb$ is increased, whereas the OT distance $d_{S,\Ub,\delta}(\Qb_0,\Qb)$ can be decreased as $\Qb_0$ is stochastic.
As a result, the optimal amount of the noise can be determined by minimizing the RHS,
or one may simply take $\Qb$ to be deterministic, $\Qb=\deltab_f$, paying the cost of $d_{S,\Ub,\delta}(\Qb_0,\deltab_f)$.
This further implies that the transportation bound and the conventional PAC-Bayesian bound can be combined to
produce better risk bounds that cannot be achieved by themselves alone.

Note the OT distance is intractable in general due to the infimum.
We discuss the problem of the computational tractability in the next remark.
Moreover, in Section~\ref{sec:examples},
we give an example of upper bounds on it.

\head{Remark (Computational tractability)}
We note that \eqref{eq:transportation} and \eqref{eq:optimal_transportation_distance}
are computationally tractable with some numerical approximation methods.

As for the increment bound~\eqref{eq:transportation},
the only inaccessible quantity is $V_t(\xi;S)$, or more specifically, the data-dependent the metric $\Lambda_S$ in it.
Recall that we have a trivial computable upper bound, $L_\Delta W_t(\xi)$.
To get tighter bounds, one must exploit the problem-dependent structure of the loss function $\ell$ that characterizes $\Lambda_S$.

Once we get such upper bound, we may evaluate $\iota_t(\xi;S,\Ub,\delta)$ with Monte Carlo sampling approximation, choosing an appropriate posterior flow $\xi$.
To illustrate this, consider the transportation from an arbitrary initial distribution $\Qb_0\in\Pi(\Fcal)$ to the delta measures $\deltab_{f_0}$, $f_0\in\Fcal$.
One of the simplest such posterior flows is the linear contraction flow $\xi_t(f)=\mu_t(f)=f_0-f$.
In this case, the posterior $\Qb_t^\xi$ is nothing but the initial one $\Qb_0$ shrunk by a factor of $e^t$
towards $f_0$.
Therefore, as long as we can draw samples from $\Qb_0$, the velocities in the increment bound can be evaluated as $W_t^2(\xi)=e^{-2t}\;\EE_{f\sim \Qb_0} \abs{f_0-f}_{\Tcal_f}^2$ and so on.

On the other hand,  to compute the noise reduction cost~\eqref{eq:optimal_transportation_distance},
one has to evaluate the integral of $\int \iota_t(\xi;S,\Ub,\delta) \d t$ and take the infimum over $\xi$.
As for the infimum, we can just ignore it and compute an upper bound with a concrete instance of $\xi$.
As for the integral,
one may approximate the integral with finite sums.
More precisely, the time line is discretized with $0=t_0<t_1<\ldots<t_K=t$
and the integral is approximated with the summation $\sum_{k=1}^K (t_k-t_{k-1})\iota_{t_k}(\xi;S,\Ub,\delta)$.

\subsection{Comparison with Existing Bounds}
\label{sec:main_discussion}

In this subsection,
we discuss the difference between Theorem~\ref{thm:transportation_bound} and related existing results.

\head{\cite{audibert2007combining}}
An attempt to handle deterministic prediction within the PAC-Bayesian framework has been made earlier by \cite{audibert2007combining}.
In particular, they had already accomplished the goal of tightly bounding the risks of the deterministic predictors
with the PAC-Bayesian analysis assisted with the idea of chaining.
However, it cannot be utilized to relate conventional PAC-Bayesian bounds with deterministic predictors.

The essential difference is that their result is based on the chaining flow over the minimum covering tree of $\Fcal$,
whereas ours is based on the one over the entire predictor space $\Fcal$ with any direction as long as it can be expressed in the form of ODE.
This entails three apparent differences among two.

Firstly, because of the freedom in transportation flows,
we have to include the additional cost $\iota^{\rm T}_t(\xi)$,
which does not appear in the previous bound.
However, its impact is not serious
because it costs at most $\Ocal(n^{-1/2})$ without any dependency on the model complexity,
provided the Wasserstein length of posterior transportation
is bounded.\footnote{This is likely to be the case if the diameter of model $\Fcal$ is bounded.}
This is also confirmed in the example given in Section~\ref{sec:examples}.

Secondly, in the previous result,
the initial point of chaining should be a fixed deterministic predictor $f_0\in\Fcal$
because the flow has to be tree-shaped, i.e., there must be no more than one root point.
Therefore, it is not directly applicable for relating the deviations of two (stochastic or deterministic) predictors on the basis of their closeness, as we have done in Corollary~\ref{cor:transportation_based_risk_bound}.

Finally, the previous bound contains the KL divergence between \emph{discretized} posterior and prior distributions,
where the discretization is based on the minimum $\epsilon$-nets of the predictor space $\Fcal$.
Thus, it is difficult to evaluate their bound directly in practice.
On the other hand, our increment bound can be evaluated once a computable upper bound on $\Lambda_S$ is given.

\head{Chaining Method (Proof Sketch)}

We also compare our bound with the conventional chaining bound.
This gives a rough sketch of how we prove the main theorem~(Theorem~\ref{thm:transportation_bound}).

We start with a new insight on the essence of the chaining bound,
which forms the foundation of our bound.
The chaining is basically a sophisticated way of applying union bounds.
As the union bound can be thought of as a subset of the PAC-Bayesian bound~(e.g., take the prior as a counting measure and the posterior as a Dirac's delta),
it must also workaround the problem of the diverging phenomenon with \eqref{eq:decoupling}.

The key idea of chaining is to divide and conquer.
More precisely, instead of applying the Fenchel--Young inequality directly,
we first decompose the deviation function into a telescoping sum,
\begin{align}
    \Qb\Pb_S \Delta
    = \Qb_0 \Pb_S\Delta+\sum_{i=1}^\infty (\Qb_{i}-\Qb_{{i-1}})\Pb_S\Delta,
    \label{eq:telescope}
\end{align}
where the posterior sequence $\cbr{\Qb_i}$ is constructed to satisfy $\Qb_i\to\Qb$ as $i\to \infty$
in the sense of weak convergence.
This is the `dividing' part.

As for the `conquering' part,
we handle each of the summands separately.
Let $\Qb_{i,j}\in \Pi(\Fcal^2)$ be a joint distribution of a pair of predictors $f_i$ and $f_j$
whose marginals are corresponding to $\Qb_i$ and $\Qb_j$ respectively,
i.e., $f_i\sim \Qb_i$ and $f_j\sim \Qb_j$.
Also, let $X_S:(f, g)\mapsto \Pb_S (\Delta(g) - \Delta(f))$ be the increment function of $\Pb_S\Delta$.
Then, the summands can be seen as the bilinear pairing of $\Qb_{i-1,i}$ and $X_S$.
Applying the Fenchel--Young inequality with a series of conjugate pairs
$(\zeta_i, \zeta_i^*)~i=1,2,\ldots$,
we have
\small
\begin{align}
    \Qb\Pb_S \Delta -  \Qb_0\Pb_S\Delta
    &=
    \sum_{i=1}^\infty \Qb_{{i-1},i} X_S
    \nonumber
    \\
    &\le
    \sum_{i=1}^\infty \cbr{\zeta_i(\Qb_{{i-1},i}) + \zeta_i^*(X_S)}.
    \label{eq:conquer}
\end{align}
\normalsize
As a result, with an appropriate choice of joint distributions $\Qb_{i,j}$ and the conjugate series~(i.e., the way of applying union bounds),
the diverging behavior of the KL divergence is averaged out within the infinite summation.

On the other hand,
Theorem~\ref{thm:transportation_bound}
is an infinitesimal version of the conquering part, bounding
\begin{align*}
    \frac{\d}{\d t}D_t(\xi;S)=\lim_{u\to 0}\frac{(\Qb_{t+u}^\xi-\Qb_t^\xi)\Pb_S\Delta}{u},
\end{align*}
whereas the dividing part is owing to the fundamental theorem of calculus~\eqref{eq:FTC},
which is the continuous counterpart of \eqref{eq:telescope}.
The derivative $\frac\d{\d t}D_t(\xi;S)$ is then bounded with the Fenchel--Young inequality
in the same spirit of \eqref{eq:conquer},
where 
the joint distributions $\Qb_{i-1,i}$
are turned into
the posterior flow $\cbr{\mu_t}_{t\ge 0}$ utilizing the chain rule
\begin{align*}
    \frac{\d}{\d t}D_t(\xi;S)
    &=\Qb_{t}^\xi\inner{\mu_t}{\Pb_S\Delta}.
\end{align*}

\head{Standard PAC-Bayesian Bounds}
We also highlight two differences in our bound compared to the standard PAC-Bayesian bound.

Firstly, of course, it allows us to avoid the diverging KL phenomenon.
Note that the conventional PAC-Bayesian risk bound claims that
\begin{align}
    \Qb\Pb_S\Delta\le \tilde\Ocal\rbr{\sigma \sqrt{\frac{H_\delta(\Qb,\Ub)}{n}}},
    \label{eq:conventional_pac_bound}
\end{align}
where $\sigma$ is a variance-like scale factor of $\Delta$.
On the other hand,
our bound~\eqref{eq:transportation_based_deviation_bound} is roughly equivalent (ignoring the model-independent terms) to
\small
\begin{align}
    \Qb\Pb_S\Delta\le \tilde\Ocal\rbr{\int_0^1 \d t\; V_t(\xi;S) \sqrt{\frac{H_\delta(\Qb_t^\xi,\Ub)}{n}}},
    \label{eq:our_pac_bound_simplified}
\end{align}
\normalsize
where $\xi$ is taken to be the transportation
from $\Qb_0^\xi=\Ub$ to $\Qb_1^\xi=\Qb$.
Since the velocity $V_t(\xi;S)$ measures the rate of change in the deviation function in the $L^2$-sense,
it plays a similar role as that of $\sigma$.
Therefore, the essential difference is that the posterior can change over time in~\eqref{eq:our_pac_bound_simplified}.
As a result, the effect of the entropy $H_\delta$ is averaged and it remains finite even if $H_\delta(\Qb_t^\xi,\Ub)\to H_\delta(\Qb,\Ub)=\infty$ as $t\to 1$.
This is how our bound works around the diverging problem the conventional bound~\eqref{eq:conventional_pac_bound} suffers from.

Secondly, our bound can be significantly tighter than conventional ones even if $H_\delta(\Qb,\Ub)$ does not diverge.
This is explained with how these two bounds react to the change of the diameter $R$ of the model space $\Fcal$.
In the conventional bound,
$\sigma$ is not necessarily related to $R$
as it measures the scale of the absolute value of the deviation function $\abs{\Delta}$.
On the other hand,
the scaling factor of our bound, $V_t(\xi;S)$, linearly scales with the diameter
since the velocity of transportation reflects the distance over $\Fcal$ directly.
Therefore, if $R$ is sufficiently small, our bound can be much tighter than conventional bounds.



\section{DE-RANDOMIZING SPECTRALLY NORMALIZED NEURAL NETWORKS}
\label{sec:examples}
We demonstrate the effectiveness of the transportation bound
by recovering the risk bound of
spectrally normalized neural networks presented by \cite{neyshabur2018a} under weaker assumptions.

Let $a_k:\RR^{m}\to \RR^{m}~(k=1,\ldots,K)$ be a sequence of $1$-Lipschitz activation functions
satisfying the homogeneity condition $a_k(\alpha x)=\alpha a_k(x)$~(consider the ReLU activation for example).
Let $\Fcal$ be a set of $K$-depth neural networks $f:\RR^{m}\to \RR^{m}$
such that
$f_w(x)=W_K \circ a_K \circ \cdots \circ W_1\circ a_1(x)$,
where $w=\{W_k\in \RR^{m\times m}:1\le k\le K\}$ is a collection of matrices.
Note that the total dimensionality of the networks is $d=m^2K$.
To introduce the structure of Hilbert space into $\Fcal$,
we identify the network $f_w\in \Fcal$ with its parameter $w\in \RR^{d}$
(hence $\abs{f_w}=\abs{w}$ is the Frobenius norm of the Kronecker product $W_1\otimes \cdots \otimes W_K$).
Let $\Xcal\subset \RR^{m}$ and $\Ycal$ be the space of the inputs and the teacher signals, respectively,
and let $\Zcal=\Xcal\times \Ycal$ be the space of observations.
We denote by $R_\Xcal=\sup_{x\in\Xcal}\abs{x}$ the maximum scale of inputs.
Assume that the loss function is given by $\ell(f, (x,y))=\ell(y, f(x))$,
where $\ell(y, \cdot)$ is a $L_\ell$-Lipschitz continuous function defined on $\RR^{m}$, such as hinge loss and logistic loss.

We also introduce two characteristics of networks essential to the subsequent analysis.
Let 
$\Lbar(w)\eqdef (\prod_{k=1}^K \norm{W_k}_2)^{1/K}$
be the depth-normalized spectral radius
and let 
$R(w)\eqdef L_\ell R_\Xcal \Lbar^K(w)$
be the total spectral radius of $f_w$.
Here, $\norm{A}_2\eqdef \sup_{h\in \Fcal,h\neq 0}\abs{Ah}/\abs{h}$ denotes the operator norm of $A$.

Now, as an application of the transportation bound,
we present an upper bound on the de-randomization cost of Gaussian posteriors on $\Fcal$.
For simplicity,
we only consider the spherical Gaussian distributions with some scale correction factors,
\begin{align}
    \Nb(w, \rho)\eqdef \Ncal\rbr{w, \rho^2\frac{\Lbar^2(w)}{mK^2\gamma_m^2}I_d}, 
    \label{eq:stochastic_predictor_NN}
\end{align}
where
$\gamma_m=\sqrt{2\ln 2em}$,
$w\in \RR^d$ denotes the mean and $\rho> 0$ denotes the normalized scale.
Then, we connect the deviation of $\Nb(w, \rho)$ with
that of the deterministic counterpart $\deltab_w=\Nb(w, 0)$.

\begin{theorem}[De-randomization cost of NNs]
    \label{thm:noise_reduction_for_neural_network}
    Consider the stochastic predictor $\Nb(w, \rho)$ given in \eqref{eq:stochastic_predictor_NN}.
    Then, there exists a prior $\Ub\in\Pi(\RR^d)$ such that
    \small
    \begin{align*}
        &d_{S,\Ub,\delta}\rbr{\Nb(w,\rho), \deltab_w}
        \\
        &\le
        4 e^\rho R(w)
        \sqrt{\frac{mK^2}{n}}
        \sbr{
            \frac{\abs{w}}{\Lbar(w)} \;\Ical\rbr{\frac{\sqrt{m}\rho}{K\gamma_m}} +
            \frac{
                \rho
            \rbr{
                1+\sqrt{\frac{c_2}{m}}
            }
            }{K\gamma_m}
        }
        ,
    \end{align*}
    \normalsize
    for all
    $S\in\Zcal^n$ and $\delta\in (0,1)$.
    Here, we define $\Ical(a) \eqdef \int_0^a \d s\; \sqrt{\ln(1+s^{-2})}$
    and $c_2\eqdef  \ln \frac{e\sqrt{d}}{\delta}(e+\ln^22n^2)\rbrinline{\sqrt{\frac{m}{K}}\frac{\Lbar(w)\rho}{\gamma_m}+\sqrt{\frac{K}{m}}\frac{\gamma_m}{\Lbar(w)\rho}}$.
\end{theorem}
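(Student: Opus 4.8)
The plan is to bound the infimum in \eqref{eq:optimal_transportation_distance} from above by the value of $\int_0^1 \iota_t(\xi;S,\Ub,\delta)\,\d t$ at one convenient posterior flow, the \emph{linear contraction toward} $w$, namely $\xi_t(f) = \mu_t(f) = (1-t)^{-1}(w-f)$. Its integral curves are $f_t = w + (1-t)(f_0-w)$, so the snapshots are the shrunk Gaussians $\Qb_t^\xi = \Nb(w,(1-t)\rho)$, interpolating from $\Nb(w,\rho)$ at $t=0$ to $\deltab_w$ at $t=1$, as required. Since $W_t$ and $V_t$ are positively homogeneous of degree one in $\mu_t$, the substitution $r = (1-t)\rho$ (so $\d t = -\rho^{-1}\,\d r$, $(1-t)^{-1} = \rho/r$) collapses the integral to
\begin{align*}
    d_{S,\Ub,\delta}\rbr{\Nb(w,\rho),\deltab_w}
    \le \int_0^\rho \sbr{ \frac{2\Vbar(r)}{r}\sqrt{\frac{\Hbar(r)+c(n)}{n}} + \frac{2 L_\Delta \Wbar(r)}{r\sqrt{n}} } \d r ,
\end{align*}
where $\Wbar(r)^2 \eqdef \EE_{f\sim\Nb(w,r)}\abs{w-f}_{\Tcal_f}^2$, $\Vbar(r)^2 \eqdef \EE_{f\sim\Nb(w,r)}\frac{\Pb+\Pb_S}{2}\inner{w-f}{\nabla\Delta(f)}^2$ and $\Hbar(r) \eqdef H_\delta(\Nb(w,r),\Ub)$. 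It then remains to estimate the three $r$-profiles $\Wbar,\Vbar,\Hbar$ and integrate.

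For the velocities and the local Lipschitz constant I would run a spectral-norm perturbation analysis of the network in the spirit of \cite{neyshabur2018a}. The covariance in \eqref{eq:stochastic_predictor_NN} is (a multiple of) the inverse metric, so in local coordinates at $w$ the posterior $\Nb(w,r)$ is essentially isotropic of per-coordinate scale $r$; hence $\Wbar(r)/r = \Ocal(e^{r}\sqrt{d}) = \Ocal(e^{\rho} m\sqrt{K})$, nearly constant in $r$, the factor $e^{r}\le e^{\rho}$ absorbing the multiplicative fluctuation of $\Lbar(f)$ over the Gaussian support (the normalization $\gamma_m = \sqrt{2\ln 2em}$ is tuned so that a union bound keeps this within $e^{\Theta(r)}$). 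For $\Vbar$ and $L_\Delta$, both $\abs{\inner{w-f}{\nabla\Delta(f,z)}}$ and $\abs{\nabla\Delta(f,z)}_{\Tcal_f^*}$ are controlled by the operator norm of the Jacobian: by the $1$-Lipschitzness and homogeneity of the activations, $\norm{Df_f(x)[w-f]} \le R_\Xcal\sum_k\rbr{\prod_{j\neq k}\norm{W_j^f}_2}\norm{W_k^w-W_k^f}_2$, and for a Gaussian block of the variance in \eqref{eq:stochastic_predictor_NN} one has $\norm{W_k^w-W_k^f}_2 = \Ocal(r\Lbar(w)/(K\gamma_m))$ on the relevant event — its operator norm being $\Ocal(\sqrt m)$ times the per-entry scale, whereas the $\Tcal$-metric measures the $\sqrt m$-larger Frobenius scale; this gap is precisely the ratio $V_t/W_t = \Ocal(\sqrt{K/m})$ from Section~\ref{sec:main_of_main}. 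Chaining across the $K$ layers with $\prod_k\norm{W_k^f}_2 = \Lbar^K(f)\le e^{\rho}\Lbar^K(w)$ and $R(w) = L_\ell R_\Xcal\Lbar^K(w)$ gives $\Vbar(r)/r = \Ocal(e^{\rho}R(w)/\gamma_m)$ and $L_\Delta = \Ocal(e^{\rho}R(w)/(\sqrt{mK}\gamma_m))$; the transportation term then integrates to $\Ocal(e^{\rho}R(w)\sqrt{m}\,\rho/(\gamma_m\sqrt{n}))$, i.e. it accounts for the term $4e^{\rho}R(w)\sqrt{mK^2/n}\cdot\tfrac{\rho}{K\gamma_m}$.

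For $\Hbar$ I would take $\Ub$ to be a finite mixture of Gaussians over a geometric grid of scales (and, if needed, of mean locations) — data-independent, but fine enough that some component $\Ncal(\nu_j,\sigma_j^2 I_d)$ matches the posterior scale at $r=\rho$ and sits near $w$. Then $\kld(\Nb(w,r),\Ub)\le \kld(\Nb(w,r),\Ncal(\nu_j,\sigma_j^2 I_d)) + \ln(1/\pi_j)$, and the Gaussian KL equals $\frac d2\sbr{\frac{r^2 v}{\sigma_j^2}-1-\ln\frac{r^2 v}{\sigma_j^2}} + \frac{\abs{w-\nu_j}^2}{2\sigma_j^2}$ with $v=\Lbar^2(w)/(mK^2\gamma_m^2)$; over $0<r\le\rho$ the bracket is $\Theta(\ln(\rho^2/r^2))$, while $\ln(1/\pi_j)$, $\ln(1/\delta)$, $\ln\sqrt d$, $c(n)$ and the mean-offset term are exactly what get packaged into $c_2$. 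Substituting into the chaining integral gives $\int_0^\rho \frac{\Vbar(r)}{r}\sqrt{\Hbar(r)/n}\,\d r = \Ocal\rbr{\tfrac{e^{\rho}R(w)}{\gamma_m}\sqrt{\tfrac{d}{n}}\,\int_0^\rho\sqrt{\ln\tfrac{b^2}{r^2}}\,\d r}$ for a suitable scale $b$, and the rescaling $r = b s$ turns $\int_0^\rho\sqrt{\ln(b^2/r^2)}\,\d r$ into $b\int_0^{\rho/b}\sqrt{\ln(1/s^2)}\,\d s$; replacing the integrand by the regularized $\sqrt{\ln(1+s^{-2})}$ (to keep the bound valid for all $\rho$) yields $b\,\Ical(\rho/b)$, and with $b$ proportional to $K\gamma_m/\sqrt m$ — the $|w|/\Lbar(w)$ prefactor coming from the mean-offset term — together with $\sqrt d = m\sqrt K$, this reproduces the summand $\frac{|w|}{\Lbar(w)}\Ical(\sqrt m\rho/(K\gamma_m))$; the residual $c(n)$- and $\delta$-dependent part of $\Hbar$ contributes the $\sqrt{c_2/m}$ piece.

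The main obstacle is the perturbation step: bounding $\Vbar(r)$ and the effective $L_\Delta$ uniformly over the support of the widest Gaussian $\Nb(w,\rho)$, which requires simultaneously controlling the network's directional derivative along $w-f$, every per-layer factor $\norm{W_k^f}_2$, and the product $\Lbar^K(f)$, and getting the dependence on $m$, $K$, $\gamma_m$ exactly right so that the final bound matches \cite{neyshabur2018a} up to a logarithmic factor rather than losing the $\sqrt{K/m}$ advantage. This is where the calibration of the covariance in \eqref{eq:stochastic_predictor_NN} and the appearance of $e^{\rho}$ do their work. The reduction in the first step is elementary, and the remaining integrations are routine once $\Wbar,\Vbar,\Hbar$ are in hand.
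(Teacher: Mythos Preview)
Your plan is correct and is essentially the paper's: contraction toward $w$, layerwise spectral bounds on $V_t$ and $W_t$ (the paper's Lemmas~\ref{lem:norm_bound_NN}--\ref{lem:l2_norm_bound_NN_contraction}), KL control via a heavy-tailed prior, and integration in the shrunk scale to produce $\Ical$. The only implementation differences are the time parametrization (the paper uses the exponential contraction $\mu_t(f)=w-f$ on $[0,\infty)$) and the prior: the paper takes the generalized $d$-dimensional Cauchy $\Ucal_d$, whose KL to $\Nb(w,r)$ is directly $\lesssim\frac{d+1}{2}\ln\bigl(1+\rho_0^2|w|^2/r^2\bigr)$ --- this is exactly what your Gaussian mixture yields after optimizing $\sigma_j$ \emph{separately for each $r$} rather than fixing it at the $r=\rho$ scale, and it is the resulting $r$-dependent scale $b=\rho_0|w|$, not a mean-offset constant absorbed into $c_2$, that generates both the $|w|/\Lbar(w)$ prefactor and the argument of $\Ical$.
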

The proof is found in Appendix~\ref{sec:app_nn_noise_reduction}.
Theorem~\ref{thm:noise_reduction_for_neural_network} tells us how much additional cost we have to pay
if we are to remove the noise of scale $\rho$.
Note that the first term is dominant if $m$ is large and $\rho$ is moderately small.
Specifically,
taking $\rho = 1$,
we have
\small
\begin{align*}
    d_{S,\Ub,\delta}\rbr{\Nb(w, \rho^2), \deltab_w}
    &=\tilde\Ocal\rbr{R(w)\frac{\abs{w}}{\Lbar(w)}\sqrt{\frac{mK^2}{n}}}
\end{align*}
\normalsize
since $\Ical(a)=\Ocal(\ln a)$.


Combining the above result with existing PAC-Bayesian bounds,
we can prove deviation bounds of the deterministic NNs;
Inserting the result of Theorem~\ref{thm:noise_reduction_for_neural_network}
with $\rho=1$ to that of Corollary~\ref{cor:transportation_based_risk_bound},
we obtain the following risk bound.

\begin{corollary}[Risk of spectrally normalized NNs]
    \label{cor:total_deviation_of_neural_networks}
    Suppose that $0\le \ell(f_w, z)\le 1$ 
    for all $w\in\RR^d$ and $z\in\Xcal$.
    Then, for large $m$,
    \begin{align*}
        \Pb_S \Delta(f_w)
        &=
        \tilde\Ocal\rbr{
            R(w)\frac{\abs{w}}{\Lbar(w)}\sqrt{\frac{mK^2}{n}}
        }
    \end{align*}
    with high probability
    for simultaneously all $w\in \RR^d$.
\end{corollary}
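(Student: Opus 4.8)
The plan is to combine Theorem~\ref{thm:noise_reduction_for_neural_network} with Corollary~\ref{cor:transportation_based_risk_bound}. Apply the corollary with the deterministic target $\Qb=\deltab_{f_w}$ and the stochastic reference $\Qb_0=\Nb(w,1)$, i.e.\ $\rho=1$ in~\eqref{eq:stochastic_predictor_NN}. Since $\deltab_{f_w}r=r(f_w)$, $\deltab_{f_w}\rhat_S=\rhat_S(f_w)$ and $\Pb_S\Delta(f_w)=r(f_w)-\rhat_S(f_w)$, the corollary gives, with probability $1-\delta$ and simultaneously over all $w\in\RR^d$,
\begin{align*}
    \Pb_S\Delta(f_w)\le d_{S,\Ub,\delta}\rbr{\deltab_{f_w},\Nb(w,1)}+\Nb(w,1)\,\Pb_S\Delta.
\end{align*}
So it suffices to bound each of the two right-hand terms by $\tilde\Ocal\rbr{R(w)\frac{\abs{w}}{\Lbar(w)}\sqrt{mK^2/n}}$.

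For the first (transportation) term I would invoke Theorem~\ref{thm:noise_reduction_for_neural_network} with $\rho=1$, after noting that $d_{S,\Ub,\delta}$ is symmetric: reversing time along a posterior flow, $\tilde\xi_t=-\xi_{1-t}$, produces the reversed snapshots and sends $\iota_t$ to $\iota_{1-t}$ (the velocities $W_t,V_t$ are quadratic in $\mu_t$ and the entropy is read off the same path), hence preserves $\int_0^1\iota_t\,\d t$. It then remains to simplify the bound for large $m$: one checks $\Ical(a)=\Ocal(\ln a)$ as $a\to\infty$ (since $\sqrt{\ln(1+s^{-2})}\le s^{-1}$ for $s\ge1$), so $\Ical\rbr{\sqrt m/(K\gamma_m)}=\Ocal(\ln m)$; and $c_2=\tilde\Ocal(\sqrt m)$, its leading growth being $\ln\sqrt d=\Ocal(\ln m)$ times $\sqrt{m/K}\,\Lbar(w)/\gamma_m$, so $\sqrt{c_2/m}\to0$ and the bracket $\frac{1+\sqrt{c_2/m}}{K\gamma_m}=\Ocal(1/\sqrt{\ln m})$ is dominated by $\frac{\abs{w}}{\Lbar(w)}\Ical\rbr{\sqrt m/(K\gamma_m)}$. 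This gives $d_{S,\Ub,\delta}(\deltab_{f_w},\Nb(w,1))=\tilde\Ocal\rbr{R(w)\frac{\abs{w}}{\Lbar(w)}\sqrt{mK^2/n}}$.

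For the second (reference-deviation) term I would fall back on the conventional PAC-Bayesian bound of~\cite{neyshabur2018a} for randomized networks. Since $\ell(f_w,z)\in[0,1]$, the deviation $\Delta(\cdot,z)$ lies in $[-1,1]$ and is $\Ocal(1)$-subgaussian, so the standard McAllester-type bound (\eqref{eq:decoupling} together with Hoeffding's inequality) applies to the Gaussian posterior $\Nb(w,1)$. The correction factors built into~\eqref{eq:stochastic_predictor_NN} are exactly those that make this posterior compatible with the data-independent Gaussian prior used in~\cite{neyshabur2018a}, up to a union bound over a logarithmic grid of candidate values of $\Lbar(w)$ (the discretization device needed because $\Lbar(w)$ is $w$-dependent); together with their perturbation analysis relating the KL scale to the effective loss scale $\Ocal(R(w))$, this yields $\Nb(w,1)\,\Pb_S\Delta=\tilde\Ocal\rbr{R(w)\frac{\abs{w}}{\Lbar(w)}\sqrt{mK^2/n}}$, i.e.\ the same order as the transportation term. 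In effect this step re-derives the randomized-network bound of~\cite{neyshabur2018a} under the present weaker (Lipschitz-loss) assumptions. Summing the two terms, and noting that Theorem~\ref{thm:noise_reduction_for_neural_network} is deterministic in $S$ while Corollary~\ref{cor:transportation_based_risk_bound} and the conventional bound hold uniformly over posteriors (the prior grid costing only a $\ln$ factor in the confidence), we obtain the claim with high probability simultaneously over all $w\in\RR^d$.

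I expect the main obstacle to be the reference-deviation term: faithfully importing the perturbation-and-prior-discretization argument of~\cite{neyshabur2018a} so that the prior is genuinely data-independent yet matched to the $w$-dependent scale of $\Nb(w,1)$, verifying that its contribution is no larger than the transportation term, and reconciling its confidence budget with that of Corollary~\ref{cor:transportation_based_risk_bound}. By contrast, the transportation term is a direct substitution into Theorem~\ref{thm:noise_reduction_for_neural_network} followed only by elementary asymptotics. One should also check that Assumption~\ref{asm:lipschitz_base} holds along the contraction flow underlying Theorem~\ref{thm:noise_reduction_for_neural_network} — this is where the homogeneity $a_k(\alpha x)=\alpha a_k(x)$ and the chosen inverse metric $\Sigma$ enter — but that is already discharged inside the proof of that theorem.
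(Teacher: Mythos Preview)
Your overall strategy matches the paper's: apply Corollary~\ref{cor:transportation_based_risk_bound} with $\Qb=\deltab_{f_w}$ and $\Qb_0=\Nb(w,1)$, bound the transportation term by Theorem~\ref{thm:noise_reduction_for_neural_network} with $\rho=1$, and bound the reference deviation $\Nb(w,1)\,\Pb_S\Delta$ by a standard PAC-Bayesian inequality. The asymptotic simplification of the transportation term is also as in the paper.

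The difference is in the reference-deviation step. The paper does \emph{not} import the Gaussian-prior-plus-discretization machinery of \cite{neyshabur2018a}. Instead it reuses the \emph{same} prior $\Ub=\Ucal_d$ (the generalized Cauchy distribution) that already appears in the proof of Theorem~\ref{thm:noise_reduction_for_neural_network}, applies McAllester's bound~\eqref{eq:conventional_pac_bound_mca}, and invokes Lemma~\ref{lem:kl_gauss_cauchy} to get $H_\delta(\Nb(w,1),\Ucal_d)=\Ocal\!\rbr{mK^2\gamma_m^2\abs{w}^2/\Lbar^2(w)}$. Because $\Ucal_d$ is heavy-tailed and $w$-independent, no union bound over a grid of $\Lbar(w)$ values is needed; this is the cleaner and self-contained route. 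Your approach would work too, but it is more circuitous and requires reconciling two different priors.

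One point to correct: your claim that the reference deviation carries an $R(w)$ factor via ``the perturbation analysis relating the KL scale to the effective loss scale'' is not right. In \cite{neyshabur2018a} the $R(w)$ dependence enters through the margin, not through the KL term; here the loss is bounded in $[0,1]$, so McAllester's bound yields $\Nb(w,1)\,\Pb_S\Delta=\tilde\Ocal\!\rbr{\frac{\abs{w}}{\Lbar(w)}\sqrt{mK^2/n}}$ \emph{without} $R(w)$. The paper obtains exactly this and simply absorbs it into the stated $\tilde\Ocal$ bound (which is legitimate whenever $R(w)$ is bounded away from zero, the only regime in which the corollary is informative).
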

The proof is also given in Appendix~\ref{sec:app_nn_risk_bound}.
Note that the confidence level parameter $\delta$ is still there,
but erased within the order notation $\tilde\Ocal$.

The bound is much smaller than the VC-dimension-based bound $\tilde\Theta(\sqrt{dK/n})$~\citep{harvey2017nearly}
when $m\gg K$ and
$\frac{\abs{w}}{\Lbar(w)}$ is moderate.
This is the case when the singular values of each weight matrix decay fast~(i.e., almost low rank).
However, in the worst case with the identical singular values, we have $\frac{\abs{w}}{\Lbar(w)}=\sqrt{mK}$
and the bound is not tighter anymore. 

Note that this is the same rate with the state-of-the-art PAC-Bayesian bound~\citep{neyshabur2018a}, which is derived under the margin assumption on $\ell(y, \cdot)$ to remove the stochastic noise of PAC-Bayesian predictors.
On the other hand, our requirement on the loss functions is the $L_\ell$-Lipschitz continuity,
which is strictly weaker than the margin assumption.
In other words,
the corollary implies that the stochasticity of the PAC-Bayesian predictor is not necessary
to achieve the same rate as long as the loss is Lipschitz continuous.

Note also that this result gives a similar upper bound as in \cite{bartlett2017spectrally},
albeit slightly looser.
One possible advantage of our bounds relative to theirs is
that,
according to Corollary~\ref{cor:transportation_based_risk_bound},
it gives risk bounds of both deterministic and stochastic predictors.
Hence, it may result in better risk bounds by optimizing the trade-off with respect to the amount of noise.

\section{CONCLUDING REMARKS}
\label{sec:conclusion}

We have presented the PAC-Bayesian transportation bound,
unifying the PAC-Bayesian analysis and the chaining analysis
with an infinitesimal limit.
It allows us to relate existing PAC-Bayesian bounds to the risk of deterministic predictors
by evaluating the cost of noise reduction.
As an example, we have given an upper bound on the noise reduction cost of neural networks,
which have given a negative answer to the necessity of the noise (and hence margins) in the recently-proposed PAC-Bayesian risk bound
for spectrally normalized neural networks.

One of the most significant implications of the transportation bound,
besides the de-randomization viewpoint,
is that it allows us to evaluate the risk of predictors
not only with what they are, but also with how we found them,
regarding the processes of parameter search as transportations.
This may be useful to analyze the generalization errors of iterative algorithms like SGDs.

As future work, we highlight two possible directions.
One is the characterization of the optimal posterior flow,
aiming to understand the geometry of the metric space $(\Pi(\Fcal), d_{S,\Ub,\delta})$.
Although a simple linear flow suffices to prove the results presented in this paper,
this direction of study may give more tight transportation bound.
It is also valuable to study whether the optimal posterior flow $\xi$
is approximated with a function of empirical values,
which hopefully links the transportation bound with new optimization algorithms.


\bibliographystyle{apalike}
\bibliography{reference}

\ifx\TRUNCATEAPP\undefined
\newpage

\appendix

In this appendix,
we provide the proofs for Theorem~\ref{thm:transportation_bound},
Theorem~\ref{thm:noise_reduction_for_neural_network} and Corollary~\ref{cor:total_deviation_of_neural_networks}
respectively.
First, in Section~\ref{sec:app_norm_based_notation},
we introduce the norm-based notation of the velocities $V_t(\xi;S)$ and $W_t(\xi)$ for convenience.
Second, in Section~\ref{sec:app_main_proof}, we prove the main result,
Theorem~\ref{thm:transportation_bound}.
Then, in Section~\ref{sec:app_neural_networks},
we give the proofs of the statements on the generalization error of neural networks, namely Theorem~\ref{thm:noise_reduction_for_neural_network} and Corollary~\ref{cor:total_deviation_of_neural_networks}.

\section{NORM-BASED NOTATION OF VELOCITIES}
\label{sec:app_norm_based_notation}

In this section, we introduce the formal notion of \emph{metric} and \emph{metric field}
as tools for measuring the velocity.

\head{Metrics}
Let $\Scal_+(\Fcal)$ denote the set of the symmetric nonnegative linear mappings on $\Fcal$,
i.e., if $A\in\Scal_+(\Fcal)$, $\inner{f}{Ag}=\inner{Af}{g}$ and $\inner{f}{Af}\ge 0$
for all $f,g\in\Fcal$.
We referred to such $A\in \Scal_+(\Fcal)$ as a \emph{metric}
as it defines a distance on $\Fcal$ as follows.
Let $\abs{\cdot}_\Lambda:\Fcal\to \RR_+$ denote the Mahalanobis' distance
with respect to $\Lambda\in \Scal_+(\Fcal)$, i.e.,
$\abs{h}_\Lambda=\sqrt{\inner{h}{\Lambda h}}$ for all $h\in\Fcal$.

\head{Metric Fields}
Next, we consider \emph{metric fields} on $\Fcal$,
a mapping of $\Fcal$ to $\Scal_+(\Fcal)$.
As metrics define the norm of vectors,
the metric fields define the norms of vector fields.

\begin{definition}
    Let $\Qb\in \Pi(\Fcal)$.
    Let $\mu:\Fcal\to \Fcal$ be a vector field on $\Fcal$
    and $\Lambda:\Fcal\to \Scal_+(\Fcal)$ be a metric field on $\Fcal$.
    Then, the $L^2$-norm of $\mu$ with respect to $\Lambda$ and $\Qb$ 
    is given by
    \begin{align*}
        \norm{\mu}_{\Lambda, \Qb}
        &\eqdef \norm{\abs{\mu}_{\Lambda}}_{L^2(\Qb)}
        =\sqrt{\Qb \abs{\mu}_{\Lambda}^2},
    \end{align*}
    where $\abs{\mu}_{\Lambda}$ denotes the function $f\mapsto \abs{\mu(f)}_{\Lambda(f)}$.
\end{definition}
Here, we implicitly assume that $\mu$ and $\Lambda$ are regular enough in the sense that
there exists an appropriate sigma algebra on $\Fcal$ that ensures the measurability of 
the integrand $\abs{\mu}_{\Lambda}$.

Finally, we introduce two metric fields, $\Lambda_S$ and $\Sigma$,
utilized in the main analysis.
One is
(i) a metric field induced from the distribution of the derivatives of loss functions,
defined as follows.

\begin{definition}[Data-dependent metric fields]
    Let $\Lambda_S:\Fcal\to \Scal_+(\Fcal)$ be metric fields
    given by
    \begin{align*}
        \Lambda_S(f)&\eqdef \frac{\Pb+\Pb_S}{2}\sbr{\nabla\Delta(f)\otimes \nabla\Delta(f)},
    \end{align*}
    where $\otimes$ denotes the tensor product such that $f\otimes g: \Fcal\to \Fcal$, $h \mapsto\inner{g}{h} f$,
    for all $f,g\in \Fcal$.
\end{definition}

The other is (ii) a metric field
denoted by $\Sigma:\Fcal\to \Scal_+(\Fcal)$, which is expected to be faithful to the data-independent structure of the loss function in the sense of 
the following assumption, which is equivalent to Assumption~\ref{asm:lipschitz_base}.

\begin{assumption}[Lipschitz condition, norm based]
    \label{asm:lipschitz}
    The loss function is Lipschitz continuous with respect to $\Sigma$,
    i.e., $\gamma(\Sigma)\eqdef \sup_{f\in\Fcal,z\in\Xcal}\abs{\nabla\Delta(f,z)}_{\Sigma(f)}^2< \infty$.
\end{assumption}

Note that, under this assumption, Assumption~\ref{asm:lipschitz_base} holds with $L_\Delta=\sqrt{\gamma(\Sigma)}$.
Moreover, the two velocities are written as
\begin{align*}
    V_t(\xi;S)&=\norm{\mu_t}_{\Lambda_S,\Qb_t^\xi},
    \\
    W_t(\xi)&=\norm{\mu_t}_{\Sigma^{-1},\Qb_t^\xi}.
\end{align*}
This way, it is easier to recall that these velocity measures satisfy homogeneity and triangle inequality.

\section{COMPLETE PROOF OF THEOREM~\ref{thm:transportation_bound}}
\label{sec:app_main_proof}

In this section, we present the detailed proofs for the main result.
First, in Section~\ref{sec:app_basic_pac_bayes}, we show an abstract PAC-Bayesian bound,
where the notion of \emph{centrality functions} plays an crucial role,
and then introduce several instance of centrality functions in Section~\ref{sec:app_centrality}.
Finally, in Section~\ref{sec:app_main_main},
we specialize the abstract bound into our transportation setting and
prove the main theorem.

\subsection{Abstract PAC-Bayesian bound}
\label{sec:app_basic_pac_bayes}

In this subsection, we introduce
fundamental inequalities of 
the PAC-Bayesian analysis.
Namely, the strong Fenchel duality of the KL divergence and log-integral-exp functions,
and its applications.

\begin{lemma}[\cite{donsker1975asymptotic}]
    \label{lem:duality_KL}
    For any measurable space $\Fcal$ and all $\Qb,\Ub\in\Pi(\Fcal)$,
    \begin{align*}
        \kld(\Qb,\Ub)=\sup_{X:\Fcal\to \RR} \Qb X - \ln \Ub\sbr{e^X},
    \end{align*}
    where the supremum is taken over all the measurable functions $X$.
    Therefore, we have
    \begin{align*}
        \Qb X\le \kld(\Qb,\Ub) + \ln \Ub\sbr{e^X}
    \end{align*}
    and it is impossible to improve it uniformly.
\end{lemma}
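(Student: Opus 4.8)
The plan is to prove the two inequalities $\kld(\Qb,\Ub)\ge\sup_X\{\Qb X-\ln\Ub[e^X]\}$ and $\kld(\Qb,\Ub)\le\sup_X\{\Qb X-\ln\Ub[e^X]\}$ separately, the second by exhibiting an explicit (near-)maximizer, which simultaneously gives the final ``impossible to improve uniformly'' assertion.

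For the inequality $\ge$, I would fix any measurable $X$; one may assume $0<\Ub[e^X]<\infty$ (otherwise the candidate value is $-\infty$) and $\Qb\ll\Ub$ (otherwise $\kld(\Qb,\Ub)=+\infty$ and there is nothing to prove), and write $p=\d\Qb/\d\Ub$. Introduce the Gibbs tilting $\Ub_X\in\Pi(\Fcal)$ with $\d\Ub_X/\d\Ub=e^X/\Ub[e^X]$; since $e^X>0$ everywhere, $\Ub_X$ and $\Ub$ are mutually absolutely continuous, so $\Qb\ll\Ub_X$ with $\d\Qb/\d\Ub_X=p\,\Ub[e^X]e^{-X}$. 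A one-line computation then gives the identity
\begin{align*}
    \kld(\Qb,\Ub)-\rbr{\Qb X-\ln\Ub[e^X]}
    &=\Qb\sbr{\ln\frac{p}{e^X/\Ub[e^X]}}
    =\kld(\Qb,\Ub_X)\ \ge\ 0,
\end{align*}
where the last step is Gibbs' inequality (nonnegativity of KL, itself Jensen applied to the convex function $-\ln$). Taking the supremum over $X$ closes this direction.

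For the inequality $\le$, I would plug in the natural candidate $X^\star=\ln p$ when $\kld(\Qb,\Ub)<\infty$ (hence $\Qb\ll\Ub$): then $\Ub[e^{X^\star}]=\Ub[p]=1$ and $\Qb X^\star-\ln\Ub[e^{X^\star}]=\Qb[\ln p]=\kld(\Qb,\Ub)$, so the supremum is attained. When $\kld(\Qb,\Ub)=+\infty$ I would produce a sequence along which the functional diverges: if $\Qb\not\ll\Ub$, pick $A$ with $\Ub(A)=0<\Qb(A)$ and take $X_M=M\1_A$, for which the functional equals $M\Qb(A)-\ln\Ub(A^c)\uparrow\infty$; if $\Qb\ll\Ub$ but $\Qb[\ln p]=+\infty$, take the truncation $X_M=\min(\ln p,M)$, note $\Ub[e^{X_M}]\le\Ub[p]=1$ so the $-\ln$ term is nonnegative, and apply monotone convergence to obtain $\Qb[X_M]\uparrow\Qb[\ln p]=\infty$.

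The only real obstacle is measure-theoretic bookkeeping rather than ideas: one must check that $\Qb[\ln p]$ is well defined in $(-\infty,+\infty]$ (its negative part is finite since $\int_{\{0<p<1\}}p\ln(1/p)\,\d\Ub\le e^{-1}$, using $x\ln(1/x)\le e^{-1}$ on $(0,1)$), handle the $\Ub$-null set $\{p=0\}$ consistently (where $e^{\ln p}=0=p$), and verify the hypotheses of monotone convergence in the divergent case; one should also restrict the supremum to $X$ for which $\Qb X-\ln\Ub[e^X]$ is meaningful, or equivalently to bounded $X$, without changing the value. Since we have exhibited an $X$ (or a sequence of $X$'s) attaining $\kld(\Qb,\Ub)$, the bound $\Qb X\le\kld(\Qb,\Ub)+\ln\Ub[e^X]$ cannot be improved uniformly, which is the final claim.
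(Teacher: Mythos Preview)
Your proof is correct and follows essentially the same route as the paper: the paper reparametrizes $X=\ln\frac{\d\Qb}{\d\Ub}+Y$ and uses Jensen to get $\Qb Y-\ln\Qb[e^Y]\le 0$, which is exactly your identity $\kld(\Qb,\Ub)-(\Qb X-\ln\Ub[e^X])=\kld(\Qb,\Ub_X)\ge 0$ in different notation, and both exhibit $X^\star=\ln\frac{\d\Qb}{\d\Ub}$ (respectively $Y$ constant) as the maximizer, with the indicator trick for the $\Qb\not\ll\Ub$ case. If anything, you are more careful than the paper about the edge case $\Qb\ll\Ub$ with $\kld=+\infty$ and about the well-definedness of $\Qb[\ln p]$.
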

\begin{proof}
    Assume that $\Qb$ is not absolutely continuous with respect to $\Ub$.
    Then, the LHS diverges.
    Note that there exists $E\subset \Fcal$ such that $\Qb(E)>0$ and $\Ub(E)=0$.
    Hence, the RHS also diverges by taking $X=\alpha \1_E$ with $\alpha\to \infty$.

    On the other hand, if $\Qb$ is absolutely continuous with respect to $\Ub$, 
    there exists the Radon--Nikodym derivative $\frac{\d \Qb}{\d \Ub}$.
    Now, let $X=\ln \frac{\d \Qb}{\d \Ub}+Y$.
    Then we have
    \begin{align*}
        \Qb X - \ln \Ub\sbr{e^X}
        &=
        \Qb \sbr{\ln \frac{\d \Qb}{\d \Ub}+Y} - \ln \Ub\sbr{\frac{\d \Qb}{\d \Ub}e^Y}
        \\
        &=
        \kld(\Qb, \Ub) + \Qb Y - \ln \Qb\sbr{e^Y}
        \\
        &\le
        \kld(\Qb, \Ub).
        \\&(\text{Jensen's inequality})
    \end{align*}
    The equality holds if $Y$ is constant $\Qb$-almost surely.
\end{proof}

Utilizing Lemma~\ref{lem:duality_KL},
we present an abstract PAC-Bayesian inequality.
To this end, we introduce the notion of \emph{centrality}.
\begin{definition}[Centrality]
    We say a stochastic process $X$ is $\eta$-central,
    $\eta:\Fcal\times \Zcal\to \RR$,
    if
    \begin{align*}
        \Pb \sbr{e^{X(f)- \eta(f)}}\le 1
    \end{align*}
    for all $f\in \Fcal$.
    Moreover, we call $\eta$ as a centrality function of $X$.
\end{definition}
The centrality functions allow us to estimate how stochastic processes deviate from their expected values.
For example,
$X$ is $\sigma$-subgaussian
if and only if $\lambda X$ is $\frac{\lambda^2\sigma^2}{2}$-central for all $\lambda\in \RR$.
In particular, the centrality is essential in the PAC-Bayesian analysis
as showcased in the following lemma.

\begin{lemma}[Abstract PAC-Bayesian bound]
    \label{lem:abstract_pac_bayesian}
    Let $X:\Fcal\times \Zcal\to \RR$ be an arbitrary $\eta$-central process.
    Fix any prior distributions $\Ub\in \Pi(\Fcal)$.
    Then, 
    simultaneously for all $\Qb\in\Pi(\Fcal)$,
    \begin{align}
        \Qb \Pb_S X
        &\le \Qb \Pb_S \eta + \frac1{n} H_\delta(\Qb,\Ub)
        \label{eq:abstract_pac_bayesian}
    \end{align}
    with probability $1-\delta$ on the draw of $S\sim \Pb^n$,
    where
    $H_\delta(\Qb,\Ub)\eqdef \kld(\Qb, \Ub)+\ln \frac1\delta$
    denotes the complexity of $\Qb$ with respect to $\Ub$.
\end{lemma}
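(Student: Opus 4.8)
The plan is to run the classical exponential-moment / change-of-measure argument of PAC-Bayesian analysis, using the centrality hypothesis to control a single exponential moment under the data distribution and then invoking the Donsker--Varadhan duality of Lemma~\ref{lem:duality_KL} to transfer that control from the fixed prior $\Ub$ to an arbitrary data-dependent posterior $\Qb$. The only genuinely probabilistic step is one application of Markov's inequality; everything else is bookkeeping.

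First I would introduce the nonnegative random function
$$M_S(f)\eqdef \exp\!\rbr{n\,\Pb_S\sbr{X(f)-\eta(f)}}=\prod_{i=1}^n \exp\!\rbr{X(f,z_i)-\eta(f,z_i)}.$$
Because the $z_i$ are \iid and $X$ is $\eta$-central, $\Pb\sbr{e^{X(f)-\eta(f)}}\le 1$ for every $f\in\Fcal$, so by independence $\EE_{S\sim\Pb^n}M_S(f)\le 1$ for each fixed $f$. Integrating this against $\Ub$ and swapping the two nonnegative integrals by Tonelli's theorem gives $\EE_{S\sim\Pb^n}\Ub\sbr{M_S}\le 1$. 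Applying Markov's inequality to the nonnegative random variable $\Ub\sbr{M_S}$ then yields that, with probability at least $1-\delta$ on the draw of $S$, $\Ub\sbr{M_S}\le \delta^{-1}$, i.e. $\ln\Ub\sbr{M_S}\le \ln(1/\delta)$. The point to emphasize is that this event does not involve $\Qb$, so the remaining deterministic steps hold simultaneously for all posteriors.

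On that event I would apply Lemma~\ref{lem:duality_KL} with the measurable function $\ln M_S=n\,\Pb_S(X-\eta)$: for every $\Qb\in\Pi(\Fcal)$,
$$\Qb\sbr{\ln M_S}\le \kld(\Qb,\Ub)+\ln\Ub\sbr{e^{\ln M_S}}=\kld(\Qb,\Ub)+\ln\Ub\sbr{M_S}\le \kld(\Qb,\Ub)+\ln\tfrac1\delta=H_\delta(\Qb,\Ub).$$
Since $\Qb\sbr{\ln M_S}=n\,\Qb\Pb_S(X-\eta)=n\rbr{\Qb\Pb_S X-\Qb\Pb_S\eta}$, dividing by $n$ gives exactly \eqref{eq:abstract_pac_bayesian}. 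I do not expect a real obstacle here: the two substantive facts — the variational identity for $\kld$ and the reduction of high-probability bounds to a single exponential moment — are already in hand. The only care-points are measurability (ensuring $M_S$ is jointly measurable so that Tonelli and Lemma~\ref{lem:duality_KL} apply, which is where the implicit regularity of $\Fcal$, $\Delta$ and $\eta$ is used) and the ordering of the quantifiers, namely fixing the $(1-\delta)$-event before quantifying over $\Qb$; getting that ordering right is what makes the bound uniform over all posteriors.
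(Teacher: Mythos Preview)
Your proof is correct and follows essentially the same approach as the paper: define the exponential $M_S(f)=e^{n\Pb_S(X-\eta)(f)}$, use centrality and independence to get $\EE_S\Ub[M_S]\le 1$, apply Markov to obtain $\ln\Ub[M_S]\le\ln(1/\delta)$ on a $\Qb$-free event, and then invoke the Donsker--Varadhan duality of Lemma~\ref{lem:duality_KL}. The only cosmetic difference is that the paper applies the duality first and then bounds the log-integral-exp term, whereas you bound that term first and then apply the duality; the steps are otherwise identical.
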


\begin{proof}
    To combine Lemma~\ref{lem:duality_KL} with the centrality property of $X$,
    we consider $X'(f,z)=X(f,z)-\eta(f,z)$.
    Thus, Lemma~\ref{lem:duality_KL} implies that
    \begin{align*}
        \Qb \Pb_S X
        &= \Qb \Pb_S\eta + \frac1{n}\Qb \Pb_S nX'
        \\
        &\le \Qb \Pb_S\eta
        + \frac1{n}\kld(\Qb,\Ub) + \frac1{n}\ln \Ub\sbr{e^{\Pb_S nX'}}.
    \end{align*}
    Now, the centrality condition combined with Markov's inequality yields
    \begin{align*}
        \Ub\sbr{e^{\Pb_S nX'}}
        &\le
        \frac{1}{\delta}\EE_{S\sim \Pb^n}\Ub\sbr{e^{\Pb_S nX'}}
        &(\text{Markov})
        \\
        &=
        \frac{1}{\delta}\prod_{i=1}^n \Pb\Ub\sbr{e^{X'}}
        \le \frac1\delta
        &(\text{centrality})
    \end{align*}
    with probability $1-\delta$, i.e.,
    \begin{align*}
        \Qb \Pb_S X
        &\le \Qb \Pb_S \eta + \frac1{n} H_\delta(\Qb,\Ub).
    \end{align*}
\end{proof}

One can confirm that Lemma~\ref{lem:abstract_pac_bayesian}
is an abstraction of the PAC-Bayesian analysis
as it derives one of the most standard PAC-Bayesian bound.
\footnote{The following theorem is complementary; It is just showing the connection of Lemma~\ref{lem:abstract_pac_bayesian} and the ordinary PAC-Bayesian bounds, and hence readers may skip it.}

\begin{theorem}
    \label{thm:standard_pac_bayesian}
    Let $\Delta:\Fcal\times \Zcal\to \RR$ be an arbitrary $\sigma$-subgaussian process,
    i.e., $\Pb e^{\lambda X(f)}\le \frac{\sigma^2\lambda^2}{2}$ for all $\lambda \in \RR$ and $f\in\Fcal$.
    Fix any prior distributions $\Ub\in \Pi(\Fcal)$.
    Then, 
    simultaneously for all $\Qb\in\Pi(\Fcal)$,
    \begin{align*}
        \Qb\Pb_S \Delta
        &\le \sigma \sqrt{
            2
            \frac{H_\delta(\Qb,\Ub) + \ln 2\sqrt{en}}{n-2},
        }
    \end{align*}
    with probability $1-\delta$ on the draw of $S\sim \Pb^n$.
\end{theorem}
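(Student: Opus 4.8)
The plan is to run the abstract PAC-Bayesian bound (Lemma~\ref{lem:abstract_pac_bayesian}) on a rescaled copy of the deviation and then to tune the scale in a data-independent fashion by mixing it out. Recall, from the remark following the definition of centrality, that $\Delta$ being $\sigma$-subgaussian is exactly the statement that $\lambda\Delta$ is $\tfrac{\lambda^2\sigma^2}{2}$-central for every $\lambda\in\RR$, i.e.\ $\EE_{S\sim\Pb^n}\bigl[e^{n\lambda\Pb_S\Delta(f)-n\lambda^2\sigma^2/2}\bigr]\le 1$ for all $f\in\Fcal$ and all $\lambda\in\RR$. Applied with $X=\lambda\Delta$ and the constant centrality $\eta\equiv\tfrac{\lambda^2\sigma^2}{2}$ for a single fixed $\lambda>0$, Lemma~\ref{lem:abstract_pac_bayesian} already gives $\Qb\Pb_S\Delta\le\tfrac{\lambda\sigma^2}{2}+\tfrac{1}{\lambda n}H_\delta(\Qb,\Ub)$ with probability $1-\delta$, simultaneously for all $\Qb$; optimising over $\lambda$ would yield $\sigma\sqrt{2H_\delta(\Qb,\Ub)/n}$, but the optimiser depends on $\Qb$, hence on $S$, which the derivation does not permit.

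To get around this I would integrate over $\lambda$ before invoking duality. Fix $n>2$ (otherwise the asserted bound is vacuous), let $\pi=\Ncal(0,\tau^2)$ with $\tau^2=\tfrac{n-2}{2n\sigma^2}$, and define the random function $g_S(f)\eqdef\int_{\RR}e^{n\lambda\Pb_S\Delta(f)-n\lambda^2\sigma^2/2}\,\pi(\d\lambda)$. By the subgaussian inequality above and Tonelli's theorem, $\EE_S[g_S(f)]\le\int_{\RR}\pi(\d\lambda)=1$ for every $f$; thus $\ln g_S$ is a measurable function of $f$ with $\EE_S[\Ub[g_S]]\le 1$, so by Lemma~\ref{lem:duality_KL} and Markov's inequality — exactly as in the proof of Lemma~\ref{lem:abstract_pac_bayesian} — with probability $1-\delta$ we have $\Qb[\ln g_S]\le\kld(\Qb,\Ub)+\ln\tfrac1\delta=H_\delta(\Qb,\Ub)$ simultaneously for all $\Qb$. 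Since $\pi$ is Gaussian, $g_S(f)$ is an explicit Gaussian integral in $\lambda$, and completing the square with the chosen $\tau^2$ gives the exact identity
\[
\ln g_S(f)=\frac{(n-2)\,\bigl(\Pb_S\Delta(f)\bigr)^2}{2\sigma^2}-\frac12\ln\frac n2 .
\]

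Taking $\Qb$-expectations in this identity and using Jensen's inequality for the convex map $\theta\mapsto\theta^2$, namely $\Qb\bigl[(\Pb_S\Delta)^2\bigr]\ge(\Qb\Pb_S\Delta)^2$, then combining with the high-probability bound on $\Qb[\ln g_S]$, yields
\[
\frac{(n-2)\,(\Qb\Pb_S\Delta)^2}{2\sigma^2}-\frac12\ln\frac n2\;\le\;H_\delta(\Qb,\Ub)
\]
with probability $1-\delta$, simultaneously for all $\Qb$. Rearranging gives $(\Qb\Pb_S\Delta)^2\le\tfrac{2\sigma^2}{n-2}\bigl(H_\delta(\Qb,\Ub)+\tfrac12\ln\tfrac n2\bigr)$, and since $\tfrac12\ln\tfrac n2\le\ln 2\sqrt{en}$, taking square roots produces the claimed inequality.

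The only genuinely delicate step, and the one I expect to be the crux, is the second paragraph: the inverse temperature $\lambda$ must be chosen without peeking at the sample. Mixing over $\lambda$ against the fixed prior $\pi$ (equivalently, a union bound over a countable grid of $\lambda$'s with a prior on the grid) is what resolves this, at the price of the model-independent additive term $\tfrac12\ln\tfrac n2=O(\ln n)$; the particular value $\tau^2=\tfrac{n-2}{2n\sigma^2}$ is engineered so that the multiplicative and logarithmic corrections collapse exactly to $n-2$ and $\ln 2\sqrt{en}$. Everything else is a routine application of Lemmas~\ref{lem:abstract_pac_bayesian} and~\ref{lem:duality_KL} together with Jensen's inequality.
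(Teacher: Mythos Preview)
Your proof is correct and takes a genuinely different route from the paper's.

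The paper extends the hypothesis space to $\RR\times\Fcal$ by treating the inverse temperature $\lambda$ as an additional parameter, places a \emph{Cauchy} prior on $\lambda$ and a Gaussian posterior $\Ncal(\bar\lambda,\rho^2)$, and then applies Lemma~\ref{lem:abstract_pac_bayesian} directly on the product space. Because the abstract bound holds simultaneously for all posteriors, the mean $\bar\lambda$ can be optimized \emph{after} the data are seen; the Cauchy prior is what keeps $\kld(\Ncal(\bar\lambda,\rho^2),\Ucal_1)$ logarithmic in $\bar\lambda$, and the final step is a pointwise minimisation in $\bar\lambda$.

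You instead use the method of mixtures: integrate the subgaussian exponential martingale against a fixed zero-mean Gaussian $\pi=\Ncal(0,\tau^2)$ \emph{before} applying Donsker--Varadhan, and compute the resulting Gaussian integral in closed form. With the specific choice $\tau^2=(n-2)/(2n\sigma^2)$ the mixture $\ln g_S(f)$ collapses to $\tfrac{n-2}{2\sigma^2}(\Pb_S\Delta(f))^2-\tfrac12\ln\tfrac n2$, and a single application of Jensen's inequality on $\theta\mapsto\theta^2$ finishes the job. This is cleaner and in fact slightly sharper: your additive constant is $\tfrac12\ln\tfrac n2$, strictly smaller than the paper's $\ln 2\sqrt{en}$, and you obtain a two-sided bound $|\Qb\Pb_S\Delta|\le\cdots$ for free. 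The trade-off is that the paper's extended-space argument is a direct illustration of how Lemma~\ref{lem:abstract_pac_bayesian} absorbs nuisance parameters into the posterior, which is the template reused later for the transportation bound; your argument is more self-contained but does not showcase that mechanism.
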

\begin{proof}
    Let $X((\lambda, f), z)=\lambda\Delta(f,z)$ and
    take the prior $\Qbtil(\d \lambda \d f)=\Nb(\d \lambda)\Qb(\d f)$ and
    the posterior $\Ubtil(\d \lambda \d f)=\Ucal_1(\d\lambda)\Ub(\d f)$.
    Here, $\Nb=\Ncal(\lambdabar, \rho^2)\in\Pi(\RR)$ is a normal distribution with mean $\lambdabar$ and variance $\rho^2$
    and $\Ucal_1\in\Pi(\RR)$ is the standard Cauchy distribution given by
    \begin{align*}
        \frac{\Ucal_1(\d t)}{\d t}=\frac{1}{\pi(1+t^2)}.
    \end{align*}
    Since $\Delta/\sigma$ is $1$-subgaussian,
    $X/\sigma$ is by definition $\eta$-central
    for $\eta((\lambda, f), z)=\frac{\lambda^2}{2}$.
    Hence Lemma~\ref{lem:abstract_pac_bayesian}
    with the posterior $\Qbtil$ and prior $\Ubtil$
    implies that, with probability $1-\delta$,
    \begin{align*}
        \frac{1}{\sigma}\Qb\Pb_S \Delta
        &= \frac{1}{\lambdabar}\Qbtil \Pb_S \frac{X}{\sigma}
        \\
        &\le \frac{1}{\lambdabar}\Qbtil\Pb_S \eta + \frac1{n\lambdabar} H_\delta(\Qbtil,\Ubtil)
        \\
        &= \frac{(\lambdabar+\frac{\rho^2}{\lambdabar})}{2}
        +\frac1{n\lambdabar} \sbr{H_\delta(\Qb,\Ub) +\kld(\Nb,\Ucal_1)}
    \end{align*}
    for simultaneously all $\lambdabar, \rho>0$ and $\Qb\in\Pi(\Fcal)$.
    Now, take $\rho=\frac{1}{\sqrt{n}}$
    and observe
    \begin{align*}
        \kld(\Nb,\Ucal_1)
        &\le \ln \pi (1+\lambdabar^2 + \rho^2) -\frac12 \ln2\pi e\rho^2
        \\
        &\le \ln \frac{1+\lambdabar^2+\rho^2}{\rho}
        \\
        &\le \ln (1+\lambdabar^2) + \ln \rbr{\rho+\frac{1}\rho}
        \\
        &\le \lambdabar^2 + \ln 2\sqrt{n}
    \end{align*}
    where the first line follows from Jensen's inequality.
    Putting this back to the original position, we have
    \begin{align*}
        \frac1\sigma\Qb\Pb_S \Delta
        &\le \lambdabar\rbr{\frac{1}{2}+\frac{1}{n}}
        +\frac{H_\delta(\Qb,\Ub) + \ln 2\sqrt{n} + \frac12}{n\lambdabar},
    \end{align*}
    which is minimized as
    \begin{align*}
        \frac1\sigma\Qb\Pb_S \Delta
        &\le \sqrt{
            2\rbr{1+\frac{2}{n}}
            \frac{H_\delta(\Qb,\Ub) + \ln 2\sqrt{en}}{n},
        }
        \\
        &\le \sqrt{
            2
            \frac{H_\delta(\Qb,\Ub) + \ln 2\sqrt{en}}{n-2},
        }
    \end{align*}
    taking $\lambdabar=\sqrt{\frac{H_\delta(\Qb,\Ub)+\ln 2\sqrt{n}+\frac12}{n\rbr{\frac12+\frac1n}}}$.
\end{proof}

Thus, the centrality function $\eta$ controls the scale of the upper bound.

\subsection{Centrality Characterizations}
\label{sec:app_centrality}

Now, we show several examples of centrality functions
based on different characterizations of stochastic processes $X:\Fcal\times \Zcal\to \RR$.
The following two propositions are owing to Hoeffding's and Bennett's lemma, respectively.

\begin{proposition}[Hoeffding's centrality]
    \label{prop:hoeffding_centrality}
    Suppose that $a(f)\le X(f, z)\le b(f)$ uniformly for $a,b:\Fcal\to \RR$.
    Then, $X$ is $\frac{(b-a)^2}{8}$-central.
\end{proposition}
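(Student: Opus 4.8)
The plan is to obtain the statement by specializing Hoeffding's lemma, applied pointwise in $f$. Fix $f\in\Fcal$ and regard $Y\eqdef X(f,z)$ as a real random variable with $z\sim\Pb$; by assumption $a(f)\le Y\le b(f)$ almost surely, and (as with the deviation process $\Delta$ to which this proposition is applied) $Y$ is centered, $\Pb X(f)=0$. Since the candidate centrality function $\eta(f,z)\eqdef (b(f)-a(f))^2/8$ does not depend on $z$, the inequality $\Pb\sbr{e^{X(f)-\eta(f)}}\le 1$ that we must establish is equivalent to the scalar bound $\Pb\sbr{e^{Y}}\le e^{(b(f)-a(f))^2/8}$, which is exactly Hoeffding's lemma for the bounded mean-zero variable $Y$.

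For completeness I would include the short proof of that scalar bound. Set $\psi(\lambda)\eqdef\ln\Pb\sbr{e^{\lambda Y}}$, which is finite and infinitely differentiable on $\RR$ because $Y$ is bounded. Differentiating under the integral, $\psi(0)=0$, $\psi'(0)=\Pb Y=0$, and $\psi''(\lambda)=\var_{\Pb_\lambda}(Y)$, where $\Pb_\lambda$ denotes the exponentially tilted law with $\d\Pb_\lambda\propto e^{\lambda Y}\,\d\Pb$. As $\Pb_\lambda$ is again supported in $[a(f),b(f)]$, Popoviciu's inequality gives $\psi''(\lambda)\le\tfrac14(b(f)-a(f))^2$ for all $\lambda$. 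Taylor's theorem with integral remainder then yields $\psi(1)=\int_0^1(1-s)\,\psi''(s)\,\d s\le\tfrac18(b(f)-a(f))^2$, i.e. $\Pb\sbr{e^{Y}}\le e^{(b(f)-a(f))^2/8}$. (Alternatively one may avoid the tilted measure entirely: convexity of $t\mapsto e^{\lambda t}$ gives $e^{\lambda y}\le\frac{b-y}{b-a}e^{\lambda a}+\frac{y-a}{b-a}e^{\lambda b}$ on $[a,b]$; taking expectations, using $\Pb Y=0$, and minimizing the resulting one-variable bound over $\lambda$ produces the same constant $(b-a)^2/8$.)

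Exponentiating and recalling that $\eta(f,\cdot)$ is constant in $z$, we conclude $\Pb\sbr{e^{X(f)-\eta(f)}}\le 1$ for every $f\in\Fcal$, so $X$ is $\tfrac18(b-a)^2$-central. I do not expect any real obstacle here: the argument is a direct specialization of Hoeffding's lemma, and the only slightly non-routine ingredients are the variance bound $\var(Y)\le\tfrac14(b-a)^2$ for $Y$ ranging over an interval of length $b-a$ (Popoviciu's inequality, or equivalently convexity of $\psi$ with the stated second-derivative bound), and keeping in mind the centering convention $\Pb X(f)=0$ under which these centrality estimates are used.
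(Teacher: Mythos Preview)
The paper omits the proof entirely, citing the standard Hoeffding's lemma in \cite{boucheron2013concentration}; your argument is exactly that standard proof (log-mgf, tilted measure, Popoviciu's variance bound), so it matches the intended approach. Your observation that the centering hypothesis $\Pb X(f)=0$ is needed---although not stated explicitly in the proposition---is correct and worth flagging: without it the claim is false (take $X\equiv c>0$, $a=b=c$), but in the paper's applications $X$ is always the centered deviation process~$\Delta$.
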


\begin{proposition}[Bennett's centrality]
    \label{prop:bennett_centrality}
    Suppose that $X(f, z)\le b(f)$ uniformly
    and let $\phi(x)=e^x-x-1$.
    Then, $X$ is $\frac{\phi(b)}{b^2}\Pb X^2$-central.
\end{proposition}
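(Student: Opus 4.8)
The plan is to fix a predictor $f$, reduce the claim to the classical Bennett moment-generating-function estimate for a single random variable, and verify that the resulting constant is exactly the stated $\eta$. Fix $f\in\Fcal$ and set $Y\eqdef X(f,\cdot)$, viewed as a real random variable on $(\Zcal,\Pb)$; by hypothesis $Y\le b$ almost surely, where $b\eqdef b(f)$, and we take $Y$ to be centered, $\Pb Y=0$ (as holds for the deviation-type processes to which this proposition is applied; otherwise one simply adds $\Pb Y$ to the centrality function). Write $v\eqdef \Pb Y^2$ and $g(x)\eqdef \phi(x)/x^2=(e^x-x-1)/x^2$, extended continuously by $g(0)\eqdef \tfrac12$. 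Since $\eta(f)=\tfrac{\phi(b)}{b^2}\,\Pb X(f)^2=g(b)\,v$ does not depend on $z$, the asserted centrality $\Pb\sbr{e^{X(f)-\eta(f)}}\le 1$ is precisely the inequality $\Pb\sbr{e^{Y}}\le e^{g(b)v}$, so it suffices to prove that.

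First I would establish the single analytic ingredient: $g$ is non-decreasing on $\RR$. A direct computation gives $g'(x)=h(x)/x^3$ with $h(x)\eqdef (x-2)e^x+x+2$; one checks $h(0)=0$, $h'(x)=(x-1)e^x+1$ with $h'(0)=0$, and $h''(x)=xe^x$, so $h'$ attains its global minimum $0$ at $x=0$, whence $h'\ge 0$, $h$ is non-decreasing, and $\sgn h(x)=\sgn x$. Therefore $g'(x)=h(x)/x^3\ge 0$ for every $x\ne 0$, and by continuity $g$ is non-decreasing on all of $\RR$; I need this on the whole half-line $(-\infty,b]$ since $Y$ may take large negative values.

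Next, I would use the identity $e^{x}=1+x+x^{2}g(x)$: on the event $Y\le b$, monotonicity of $g$ together with $Y^{2}\ge 0$ gives $e^{Y}=1+Y+Y^{2}g(Y)\le 1+Y+Y^{2}g(b)$ pointwise. Taking $\Pb$-expectations and using $\Pb Y=0$ yields $\Pb\sbr{e^{Y}}\le 1+g(b)\,v\le e^{g(b)v}$, the last step by $1+u\le e^{u}$. Substituting back $b=b(f)$ and $v=\Pb X(f)^2$ gives $\Pb\sbr{e^{X(f)-\eta(f)}}\le 1$ for every $f\in\Fcal$, that is, $X$ is $\tfrac{\phi(b)}{b^2}\Pb X^2$-central.

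The only genuinely delicate point is the global monotonicity of $g$ across $x=0$: the series $\phi(x)/x^{2}=\sum_{j\ge 0}x^{j}/(j+2)!$ is not monotone term-by-term for negative $x$, so the two-derivative argument on $h$ is the clean route (and the convention $g(0)=\tfrac12$ also handles the degenerate case $b(f)=0$). Everything else is an elementary expectation bound; Proposition~\ref{prop:hoeffding_centrality} is handled identically, with Hoeffding's lemma for bounded mean-zero variables replacing the Bennett pointwise estimate.
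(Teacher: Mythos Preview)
The paper omits the proof of this proposition entirely, deferring to \cite{boucheron2013concentration}; your argument is precisely the standard Bennett moment-generating-function bound found there, and it is correct. Your remark that the statement tacitly requires $\Pb X(f)=0$ (or $\le 0$) is well taken---without it the centrality function picks up an additive $\Pb X$ term, exactly as you note---and in the paper's applications the process is always the centered deviation $\Delta$, so this is harmless.
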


We omit the proofs (See \cite{boucheron2013concentration} for example).
These centralities have been typically used in the PAC-Bayesian literature so far
\footnote{Although Hoeffding's centrality is more popular,
Bennett's centrality also appears in \cite{seldin2012pac,tolstikhin2013pac} for example.
}.
However, since the uniform boundedness condition is inconvenient in our setting,
we are motivated to consider a centrality characterization
without any uniform boundedness conditions.

\begin{proposition}[Rademacher centrality]
    \label{prop:rademacher_centrality}
    Suppose that $X$ is centered,
    i.e., $\Pb X(f)=0$ for all $f\in \Fcal$.
    Then, $X$ is $\frac{X^2+\Pb X^2}2$-central.
\end{proposition}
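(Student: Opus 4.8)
The plan is to verify the defining inequality of centrality directly, namely to show that
\[
    \Pb\sbr{e^{X(f) - \frac{X(f)^2 + \Pb X(f)^2}{2}}} \le 1
\]
for every fixed $f\in\Fcal$, using a symmetrization (Rademacher) argument together with the elementary bound $\cosh(x)\le e^{x^2/2}$. Throughout, $f$ is fixed, so I will abbreviate $X=X(f,\cdot)$ and treat it as a single centered random variable $Z$ on $(\Zcal,\Pb)$ with $\Pb Z = 0$; write $v=\Pb Z^2$ for its second moment (assumed finite, else there is nothing to prove).

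\textbf{Step 1 (Symmetrization).} Let $Z'$ be an independent copy of $Z$, and let $\varepsilon$ be a Rademacher sign independent of both. Since $\Pb Z'=0$, Jensen's inequality applied to the convex function $u\mapsto e^{-u}$ conditionally on $Z$ gives, for the exponent we care about,
\[
    \Pb\sbr{e^{Z - \frac{Z^2 + v}{2}}}
    = \Pb\sbr{e^{Z - \frac{Z^2}{2}}\, e^{-\frac{\Pb (Z')^2}{2}}}
    \le \Pb\sbr{e^{Z - Z' - \frac{Z^2 + (Z')^2}{2}}}\,,
\]
using $e^{-v/2} = e^{-\Pb(Z')^2/2}\le \Pb[e^{-(Z')^2/2}]$ and independence to pull the $Z'$-expectation inside, and then bounding $e^{-Z'}$... actually I need to be slightly more careful: I would instead write $e^{-v/2}\le \EE_{Z'}[e^{-Z' - (Z')^2/2}]$? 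That is false in general. The cleaner route is: $\EE_{Z'}[e^{-Z'}]\ge e^{0}=1$ by Jensen (since $\Pb Z'=0$), so $1 \le \EE_{Z'}[e^{-Z'}]$, and also $e^{-v/2}=\EE[e^{-\Pb (Z')^2/2}]$. Combining,
\[
    \Pb\sbr{e^{Z-\frac{Z^2+v}{2}}}
    \le \Pb\sbr{e^{Z-\frac{Z^2}{2}}\EE_{Z'}\sbr{e^{-Z'-\frac{(Z')^2}{2}}}}
    = \EE\sbr{e^{(Z-Z') - \frac{Z^2+(Z')^2}{2}}}\,,
\]
where the inequality uses $\EE_{Z'}[e^{-Z'-(Z')^2/2}]\ge e^{-v/2}$; this last fact follows from Jensen applied to $u\mapsto e^{-u}$ together with $\EE[Z'+(Z')^2/2]=v/2$.

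\textbf{Step 2 (Rademacher symmetrization and $\cosh$ bound).} Since $Z-Z'$ is symmetric, it has the same law as $\varepsilon(Z-Z')$, so the last display equals $\EE[e^{\varepsilon(Z-Z') - \frac{Z^2+(Z')^2}{2}}]$. Condition on $(Z,Z')$ and average over $\varepsilon$: $\EE_\varepsilon e^{\varepsilon t} = \cosh t \le e^{t^2/2}$ with $t = Z-Z'$. Hence the quantity is at most $\EE[e^{\frac{(Z-Z')^2}{2} - \frac{Z^2+(Z')^2}{2}}] = \EE[e^{-ZZ'}]$. Finally, by Jensen in $Z'$ (conditionally on $Z$), $\EE_{Z'}[e^{-ZZ'}] \ge e^{-Z\cdot\Pb Z'} = e^0 = 1$... no — that gives a lower bound, the wrong direction. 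So I need $\EE[e^{-ZZ'}]\le 1$. But $\EE_{Z'}[e^{-ZZ'}]\ge 1$ pointwise, so $\EE[e^{-ZZ'}]\ge 1$, which is the \emph{opposite} of what I want.

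\textbf{The hard part} is therefore exactly this final sign: the naive symmetrization overshoots. The fix I would use is to keep the $-Z^2/2$ term \emph{attached to the same variable} rather than splitting symmetrically. Concretely, redo Step 1 keeping only $e^{Z-Z'-Z^2/2}$ (one square, not two): Jensen gives $e^{-v/2}\le\EE_{Z'}e^{-Z'}$, wait that reintroduces no square. The correct known statement (this is Proposition 3 in e.g.\ PAC-Bayes-with-unbounded-losses works) is proved by: symmetrize to get $\EE e^{\varepsilon(Z-Z') - Z^2/2 - (Z')^2/2}$, bound $\EE_\varepsilon$ by $\cosh$, but crucially also bound $\cosh(Z-Z')\le \cosh Z\cosh(Z')\cdot$(something) — or, more simply, use the tangent-line trick $\cosh t\le e^{t^2/2}$ only after noting $\EE_\varepsilon e^{\varepsilon(Z-Z')} = \cosh(Z-Z')\le \frac{1}{2}(e^{Z^2}e^{-Z'Z}+\dots)$; the genuinely clean proof applies $\cosh(a-b)\le e^{a^2/2+b^2/2}$ is what fails. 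I would instead cite the standard proof: write $\EE e^{Z - (Z^2+v)/2}$, symmetrize as $\EE e^{(Z-Z')-(Z^2+(Z')^2)/2}\cdot\EE e^{?}$... Given the delicacy, in the actual write-up I would present the argument as: (i) symmetrization reduces to showing $\EE e^{\varepsilon W - W^2/2}\le 1$ for $W=Z-Z'$ symmetrized, then (ii) $\EE_\varepsilon e^{\varepsilon W}=\cosh W\le e^{W^2/2}$ cancels the square exactly, giving $\le 1$. This works if the exponent has a \emph{single} $W^2/2$; getting from the two-square form $\frac{Z^2+(Z')^2}{2}$ to a single $\frac{(Z-Z')^2}{2}$ requires $Z^2+(Z')^2\ge (Z-Z')^2$... no, $\le 2ZZ'$ difference. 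So $e^{-(Z^2+(Z')^2)/2}\le e^{-(Z-Z')^2/2}$ iff $ZZ'\le 0$, false in general. Thus the main obstacle is routing the symmetrization so that exactly one quadratic term survives to be cancelled by $\cosh$; the resolution is to symmetrize \emph{only one} of the two variables — replace $Z$ by $\varepsilon Z$ keeping $Z'$ — which is legitimate because after conditioning on $Z'$, the map $Z\mapsto Z$ is not symmetric, but one can first center: let $Y = Z - Z'$ and observe $\EE[e^{Z - Z^2/2}]\cdot 1 \le \EE[e^{Z-Z'-Z^2/2}\cdot(\dots)]$. I will therefore, in the final proof, follow the established argument verbatim: symmetrize to $W$, apply $\EE_\varepsilon e^{\varepsilon W}=\cosh W\le e^{W^2/2}$, and absorb; the delicate bookkeeping of constants ($W^2$ vs.\ $W^2/2$, and the extra $e^{-v/2}$ factor from the second copy) is the one step warranting full care, and everything else is routine.
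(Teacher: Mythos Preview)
Your overall plan (symmetrize, then kill the linear part with $\cosh t\le e^{t^2/2}$) is exactly the paper's, but the execution has a genuine gap that you yourself flag and never resolve: after your Step~1 you land on the exponent $(Z-Z')-\tfrac{Z^2+(Z')^2}{2}$, and then the $\cosh$ bound leaves you with $\EE[e^{-ZZ'}]$, which is $\ge 1$, not $\le 1$. All your subsequent attempts to rescue this (keep only one square, symmetrize only one variable, etc.) don't work, and you end by deferring to ``the established argument'' without saying what it is.

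The missing idea is a one-line algebraic identity that makes Step~1 land on the \emph{right} quadratic. Because $\Pb Z'=0$ and $v=\Pb(Z')^2$, you have
\[
\frac{Z^2+v}{2}\;=\;\frac12\,\EE_{Z'}\bigl[(Z-Z')^2\bigr],
\qquad
Z\;=\;\EE_{Z'}\bigl[Z-Z'\bigr],
\]
so the entire exponent is already an expectation in $Z'$:
\[
Z-\frac{Z^2+v}{2}\;=\;\EE_{Z'}\!\left[(Z-Z')-\tfrac12(Z-Z')^2\right].
\]
A single application of Jensen (convexity of $e^u$) then gives
\[
e^{Z-\frac{Z^2+v}{2}}\;\le\;\EE_{Z'}\!\left[e^{(Z-Z')-\frac12(Z-Z')^2}\right],
\]
and after taking $\EE_Z$ and symmetrizing with a Rademacher sign $\varepsilon$ you get
\[
\EE\!\left[e^{-\frac12(Z-Z')^2}\cosh(Z-Z')\right]\;\le\;\EE\!\left[e^{-\frac12(Z-Z')^2}e^{\frac12(Z-Z')^2}\right]\;=\;1.
\]
The quadratic is $\tfrac12(Z-Z')^2$ from the start, so the $\cosh$ bound cancels it exactly; there is no leftover $e^{-ZZ'}$. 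Your error was to split $\tfrac{Z^2+v}{2}$ into $\tfrac{Z^2}{2}$ and $\tfrac{v}{2}$ and then Jensen the second piece separately into $\tfrac{(Z')^2}{2}$; that produces the sum $\tfrac{Z^2+(Z')^2}{2}$, which differs from $\tfrac12(Z-Z')^2$ by $ZZ'$ and breaks the cancellation.
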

\begin{proof}
    Let $\eta(f,z)=X^2(f,z)+\Pb X^2(f)$ and $D(f, z, z')\eqdef (X(f, z)-X(f, z'))^2$.
    Note that $\eta(f,z)=\frac12 \EE_{z'\sim \Pb} D(f,z,z')$.
    Moreover, let us introduce a Rademacher variable $\sigma$ that take $\pm 1$ with the same probability $1/2$ independently.
    Then, as inequalities of functions over $\Fcal$, we have
    \small
    \begin{align*}
        &\mathop{\EE}_{z\sim \Pb} \sbr{e^{X(z)-\eta(z)}}
        \\
        &\le
        \mathop{\EE}_{(z,z')\sim\Pb^2}\sbr{e^{X(z) - X(z') -\frac{1}{2}D(z,z')}}
        \\&(\text{Jensen's ineq.})
        \\
        &=
        \mathop{\EE}_{\substack{(z,z')\sim\Pb^{2}\\ \sigma\in\cbr{-1,+1}^n}}
        \sbr{e^{ \sigma (X(z)-X(z'))
        -\frac{1}{2}D(z,z')}}
        \\&(\text{Symmetry})
        \\
        &=
        \mathop{\EE}_{\substack{(z,z')\sim\Pb^{2}}}
        \sbr{e^{-\frac{1}{2}D(z,z')}\cosh \rbr{X(z)-X(z')}}
        \\
        &\le 1,
    \end{align*}
    \normalsize
    where the last inequality follows from the fact that $\cosh x \le e^{\frac12 x^2}$.
\end{proof}

Note that the result is corresponding to a deviation inequality given by~\cite{audibert2007combining},
but the major coefficient of the centrality function is halved
compared to theirs, $X^2+\Pb X^2$.

\subsection{PAC-Bayesian Transportation Bound}
\label{sec:app_main_main}

Now we apply the abstract PAC-Bayesian bound~(Lemma~\ref{lem:abstract_pac_bayesian}) to evaluate
the infinitesimal increment of deviation $\d D_t(\xi;S)$
in our transportation setting.

To this end, we first construct a posterior distribution.
Let us define a posterior $\Qbtil_t^\xi\in\Pi(\RR\times \Fcal^2)$ by
\begin{align}
    \Qbtil^{\xi}_t(\d \alpha \d f\d h)
    =
    \Ncal(\alpha;\alphabar,\sigma^2)
    \Qb^\xi_t(\d f)
    \Ncal(\d h;\mu_t(f),e^{\alpha}\Sigma(f))
    \label{eq:increment_posterior}
\end{align}
for $\alphabar\in \RR$ and $\sigma^2>0$.
Here, $\Ncal(\mu,\Sigma) \in \Pi(\Fcal)$ is a Gaussian distribution on $\Fcal$
with mean $\mu\in\Fcal$ and variance $\Sigma\in \Scal_+^1(\Fcal)$,
i.e.,
$\Ncal(\mu,\Sigma) \sbr{\inner{\cdot}{h}}=\inner{\mu}{h}$
and 
$\Ncal(\mu,\Sigma) \sbr{\inner{\cdot}{h}^2}=\inner{\mu}{h}^2+\inner{h}{\Sigma h}$
for all $h\in \Fcal$.

The following lemma shows that
the increment $\frac{\d}{\d t} D_t(\xi;S)$
can be transformed into a bilinear pairing of the above posterior and a data dependent function.

\begin{lemma}
    \label{lem:pac_bayesina_connection}
    Let $X:((\alpha, f,h),z)\mapsto\inner{h}{\nabla\Delta(f,z)}$.
    Then, with $\Qbtil_t^\xi$ given in \eqref{eq:increment_posterior},
    \begin{align*}
        \frac{\d}{\d t}D_t(\xi;S)=\Qbtil^{\xi}_t \Pb_S X.
    \end{align*}
\end{lemma}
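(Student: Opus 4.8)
The plan is to compute the derivative $\frac{\d}{\d t} D_t(\xi;S)$ directly via the chain rule, and then recognize the resulting expression as the expectation of $X$ against the augmented posterior $\Qbtil_t^\xi$. Recall that $D_t(\xi;S) = \Qb_t^\xi \Pb_S \Delta = \EE_{f_t}[\Pb_S\Delta(f_t)]$ where $f_t$ solves the ODE $\d f_t = \xi_t(f_t,\omega_0)\,\d t$. First I would differentiate under the expectation: since $f_t$ evolves smoothly, the pathwise chain rule gives $\frac{\d}{\d t}\Pb_S\Delta(f_t) = \inner{\xi_t(f_t,\omega_0)}{\Pb_S\nabla\Delta(f_t)}$, and taking expectations over $(f_0,\omega_0)$ yields
\begin{align*}
    \frac{\d}{\d t}D_t(\xi;S) = \EE_{f_0,\omega_0}\sbr{\inner{\xi_t(f_t,\omega_0)}{\Pb_S\nabla\Delta(f_t)}}.
\end{align*}
Now I would condition on $f_t = f$: by the tower property and the definition of the mean posterior flow $\mu_t(f) = \EE[\xi_t(f,\omega_0)\mid f_t = f]$, and since $\Pb_S\nabla\Delta(f_t)$ is a function of $f_t$ alone, this becomes $\EE_{f\sim\Qb_t^\xi}\inner{\mu_t(f)}{\Pb_S\nabla\Delta(f)} = \Qb_t^\xi\inner{\mu_t}{\Pb_S\nabla\Delta}$. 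This is exactly the chain-rule identity already quoted in the proof sketch in Section~\ref{sec:main_discussion}.

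It remains to match this against $\Qbtil_t^\xi \Pb_S X$. Writing out the augmented expectation, $\Qbtil_t^\xi\Pb_S X = \EE_{\alpha,f,h}\Pb_S\inner{h}{\nabla\Delta(f)}$ where, conditionally on $\alpha$ and $f$, $h\sim\Ncal(\mu_t(f), e^\alpha\Sigma(f))$. Since $\Pb_S\inner{h}{\nabla\Delta(f)} = \inner{h}{\Pb_S\nabla\Delta(f)}$ is linear in $h$, I would integrate out $h$ first using the stated first-moment property $\Ncal(\mu,\Sigma)[\inner{\cdot}{k}] = \inner{\mu}{k}$, obtaining $\inner{\mu_t(f)}{\Pb_S\nabla\Delta(f)}$, which is independent of $\alpha$. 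Then the $\alpha$-integral against $\Ncal(\alphabar,\sigma^2)$ is trivial, and the remaining $f$-integral against $\Qb_t^\xi$ reproduces $\Qb_t^\xi\inner{\mu_t}{\Pb_S\nabla\Delta}$, completing the identification.

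I do not expect any serious obstacle here; the lemma is essentially a bookkeeping step assembling the chain rule with the definition of $\mu_t$ and the Gaussian first-moment identity. The only point requiring a little care is the justification of differentiating under the expectation sign and interchanging the conditional expectation with the inner product — this needs the regularity already implicit in Definition~\ref{def:ODE} (existence and well-definedness of the snapshot distributions $\Qb_t^\xi$) together with the Lipschitz/integrability control on $\nabla\Delta$ from Assumption~\ref{asm:lipschitz}. I would state these interchanges as consequences of those standing assumptions rather than re-deriving dominated-convergence estimates in detail.
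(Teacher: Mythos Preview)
Your proposal is correct and mirrors the paper's own proof essentially step for step: differentiate under the expectation via the pathwise chain rule, collapse $\xi_t$ to $\mu_t$ by conditioning on $f_t$, and then identify the result with $\Qbtil_t^\xi \Pb_S X$ by integrating out the Gaussian auxiliary variable $h$ (and trivially $\alpha$) using linearity of the inner product. The paper is slightly terser---it writes $h\sim\Ncal(0,e^\alpha\Sigma(f))$ and adds $\mu_t(f)$ rather than invoking the first-moment identity for $\Ncal(\mu_t(f),e^\alpha\Sigma(f))$---but this is the same computation.
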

\begin{proof}
    Note that, by Definition~\ref{def:ODE},
    \begin{align*}
        \frac{\d}{\d t} D_t(\xi;S)
        &= \frac{\d}{\d t} \EE_{f_0\sim \Qb_0} \sbr{\Pb_S \Delta(f_t)}
        \\
        &= \EE_{f_0\sim \Qb_0,\omega_0\sim P_0}
        \sbr{\Pb_S \inner{\xi_t(f_t, \omega_0)}{\nabla \Delta(f_t)}}
        \\
        &= \Qb_t^\xi\Pb_S \inner{\mu_t}{\nabla \Delta}.
    \end{align*}
    Moreover, we have
    \begin{align*}
        &\Qb_t^\xi\Pb_S \inner{\mu_t}{\nabla \Delta}
        \\
        &=\EE_{f\sim \Qb^\xi_t} \sbr{\Pb_S\inner{\mu_t(f)}{\nabla\Delta(f)}}
        \\
        &=\EE_{f\sim \Qb^\xi_t, h\sim \Ncal(0,e^\alpha\Sigma(f))}
        \sbr{\Pb_S \inner{\mu_t(f)+h}{\nabla\Delta(f)}}
        \\
        &=\Qbtil^{\xi}_t \Pb_S X,
    \end{align*}
    where the second inequality follows from the linearity of the inner product.
\end{proof}

Next, we introduce the corresponding prior distribution.
Let us define $\Ubtil\in\Pi(\RR\times \Fcal^2)$ by
\begin{align}
    \Ubtil(\d \alpha\d f\d h)=\Ucal_1(\d \alpha;\sigma)\Ub(\d f)\Ncal(\d h;0,e^{\alpha}\Sigma(f))
    \label{eq:increment_prior}
\end{align}
for some $\Ub\in\Pi(\Fcal)$.
Here, $\Ucal_1(\sigma)$ denotes the Cauchy distribution with the scale parameter $\sigma>0$,
\begin{align*}
    \Ucal_1(\d \alpha;\sigma)\eqdef \frac{\d \alpha}{\sigma\pi(1 + \alpha^2/\sigma^2)}.
\end{align*}
Note that
\begin{align}
    &\kld(\Qbtil_t^\xi, \Ubtil)
    \nonumber
    \\
    &=
    \kld(\Ncal(\alphabar, \sigma^2), \Ucal_1(\sigma))+
    \kld(\Qb_t^\xi, \Ub)+
    \nonumber
    \\
    &
    \quad\mathop{\EE}_{
        \substack{
            f\sim \Qb_t^\xi\\
            \alpha\sim \Ncal(\alphabar,\sigma^2)
        }
    }
    \kld\sbr{\Ncal(\mu_t(f),e^{\alpha}\Sigma(f)),\;\Ncal(0,e^{\alpha}\Sigma(f))}
    \nonumber
    \\
    &=
    \kld(\Ncal(\alphabar, \sigma^2), \Ucal_1(\sigma))+
    \kld(\Qb_t^\xi, \Ub)+
    \frac{\norm{\mu_t}^2_{\Sigma^{-1},\Qb_t^\xi}}{2e^{\alphabar-\sigma^2/2}}.
    \nonumber
    \\
    &\le
    \kld(\Qb_t^\xi, \Ub)+
    \ln \rbr{2+\frac{\alphabar^2}{\sigma^2}}+
    \frac{\norm{\mu_t}^2_{\Sigma^{-1},\Qb_t^\xi}}{2e^{\alphabar-\sigma^2/2}}.
    \label{eq:kl_main}
\end{align}

Now we are ready to prove the main theorem.
Below, we restate the theorem for convenience
with the norm-based notation introduced in Section~\ref{sec:app_norm_based_notation}.

\begin{theorem}[Transportation bound, restated]
    \label{thm:transportation_bound_re}
    Fix any prior distribution $\Ub\in \Pi(\Fcal)$.
    Then, 
    with probability $1-\delta$ on the draw of $S\sim \Pb^n$,
    we have
    \begin{align*}
        &\frac{\d}{\d t}D_t(\xi;S)
        \\
        &\; \le
        2\norm{\mu_t}_{\Lambda_S,\Qb_t^\xi}
        \sqrt{
            \frac{H_\delta(\Qb_t^\xi,\Ub)+c(n)}{n}
        }
      \\&
      \quad +2\norm{\mu_t}_{\Sigma^{-1},\Qb_t^\xi}
      \sqrt{\frac{\gamma(\Sigma)}{n}}.
    \end{align*}
    simultaneously for all posterior flow $\xi$,
    where $c(n)\eqdef \ln (e + \ln^2 2n^2)$.
\end{theorem}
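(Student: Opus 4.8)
The statement concerns only the instantaneous derivative, so neither chaining nor the fundamental theorem of calculus enters here; the whole content is one application of the PAC--Bayesian machinery of Section~\ref{sec:app_basic_pac_bayes} on the augmented space $\RR\times\Fcal^2$, using the posterior $\Qbtil_t^\xi$ of~\eqref{eq:increment_posterior} and the prior $\Ubtil$ of~\eqref{eq:increment_prior}. First I would invoke Lemma~\ref{lem:pac_bayesina_connection} to write $\frac{\d}{\d t}D_t(\xi;S)=\Qbtil_t^\xi\Pb_S X$ with $X((\alpha,f,h),z)=\inner{h}{\nabla\Delta(f,z)}$. Since $r(f)=\Pb\ell(f)$ forces $\Pb\nabla\Delta(f)=0$, the process $X$ is centered in $z$ for every fixed $(\alpha,f,h)$, so Proposition~\ref{prop:rademacher_centrality} supplies the centrality function $\tfrac12(X^2+\Pb X^2)$ with no boundedness hypothesis. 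Feeding this, and the \emph{fixed} prior $\Ubtil$, into Lemma~\ref{lem:abstract_pac_bayesian} yields --- with probability $1-\delta$ over $S\sim\Pb^n$ --- an inequality that holds simultaneously for all posteriors on $\RR\times\Fcal^2$, hence for $\Qbtil_t^\xi$ at every flow $\xi$, every $t$, and every value of the tuning means inside $\Qbtil_t^\xi$. This ``free'' quantification over the posterior is exactly what produces the ``simultaneously for all $\xi$'' in the claim; the price is that $\sigma$ and anything else entering $\Ubtil$ must be fixed as a function of $n$, not tuned per flow.

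\textbf{The two evaluations, plus a rescaling.} One ingredient the excerpt leaves implicit is a scalar rescaling of the pairing (equivalently, an extra Cauchy-regularised posterior coordinate, handled exactly as $\lambdabar$ is in the proof of Theorem~\ref{thm:standard_pac_bayesian}); it is indispensable because the naive estimate is quadratic in $\norm{\mu_t}_{\Lambda_S,\Qb_t^\xi}$ whereas the target is linear. Writing the rescaling as $\theta>0$, Lemma~\ref{lem:abstract_pac_bayesian} gives $\frac{\d}{\d t}D_t(\xi;S)\le\theta\,\Qbtil_t^\xi\Pb_S[\tfrac12(X^2+\Pb X^2)]+\tfrac1{\theta n}H_\delta(\Qbtil_t^\xi,\Ubtil)$. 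For the centrality term I would use $\tfrac12(\Pb+\Pb_S)\inner{h}{\nabla\Delta(f)}^2=\inner{h}{\Lambda_S(f)h}$, the Gaussian identity $\Ncal(\mu,\Sigma_0)[\inner{\cdot}{v}^2]=\inner{\mu}{v}^2+\inner{v}{\Sigma_0 v}$ with $h\sim\Ncal(\mu_t(f),e^\alpha\Sigma(f))$, and $\tr(\Lambda_S(f)\Sigma(f))=\tfrac{\Pb+\Pb_S}2\abs{\nabla\Delta(f)}_{\Sigma(f)}^2\le\gamma(\Sigma)$ from Assumption~\ref{asm:lipschitz}, giving $\Qbtil_t^\xi\Pb_S[\tfrac12(X^2+\Pb X^2)]\le\norm{\mu_t}_{\Lambda_S,\Qb_t^\xi}^2+e^{\alphabar+\sigma^2/2}\gamma(\Sigma)$. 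For the complexity term I would quote \eqref{eq:kl_main}, together with the analogous Gaussian--Cauchy bound for the rescaling coordinate, so that $H_\delta(\Qbtil_t^\xi,\Ubtil)\le H_\delta(\Qb_t^\xi,\Ub)+\tfrac{\norm{\mu_t}_{\Sigma^{-1},\Qb_t^\xi}^2}{2e^{\alphabar-\sigma^2/2}}+(\text{logarithmic penalties in }\alphabar,\thetabar,n)$.

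\textbf{Optimisation and the main obstacle.} Substituting splits the right-hand side into a chaining block $\theta\,\norm{\mu_t}_{\Lambda_S,\Qb_t^\xi}^2+\tfrac1{\theta n}(H_\delta(\Qb_t^\xi,\Ub)+\text{penalties})$ and a transportation block $\theta\,e^{\alphabar+\sigma^2/2}\gamma(\Sigma)+\tfrac{\norm{\mu_t}_{\Sigma^{-1},\Qb_t^\xi}^2}{2\theta n\,e^{\alphabar-\sigma^2/2}}$. I would first optimise $e^{\alphabar}$ inside the transportation block; its balanced value is $e^{\sigma^2/2}\sqrt{2\gamma(\Sigma)/n}\,\norm{\mu_t}_{\Sigma^{-1},\Qb_t^\xi}$, which is \emph{independent of $\theta$}, and a near-constant choice of $\sigma$ brings it down to $2\norm{\mu_t}_{\Sigma^{-1},\Qb_t^\xi}\sqrt{\gamma(\Sigma)/n}$, the second term of the theorem. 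Then I would choose $\theta$ by the arithmetic--geometric-mean inequality against $H_\delta(\Qb_t^\xi,\Ub)+c(n)$, producing the first term $2\norm{\mu_t}_{\Lambda_S,\Qb_t^\xi}\sqrt{(H_\delta(\Qb_t^\xi,\Ub)+c(n))/n}$. The degenerate case $\norm{\mu_t}_{\Lambda_S,\Qb_t^\xi}=0$, and more generally the regime where the complexity swamps $n$, is handled separately by Cauchy--Schwarz together with $\Pb_S\le\Pb+\Pb_S$, which give $\frac{\d}{\d t}D_t(\xi;S)\le\sqrt2\,\norm{\mu_t}_{\Lambda_S,\Qb_t^\xi}$ unconditionally. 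The step I expect to be the real obstacle is showing that the residual logarithmic penalties $\ln(2+\alphabar^2/\sigma^2)$ and its $\thetabar$-analogue, taken together, do not exceed $c(n)=\ln(e+\ln^2 2n^2)$: since the AM--GM-optimal $\alphabar$ and $\thetabar$ depend on the model-dependent quantities $\norm{\mu_t}_{\Lambda_S,\Qb_t^\xi}$, $\norm{\mu_t}_{\Sigma^{-1},\Qb_t^\xi}$, $\gamma(\Sigma)$ and $H_\delta(\Qb_t^\xi,\Ub)$, these penalties are not a priori bounded by any function of $n$. I would resolve this by clamping $\alphabar$ and $\thetabar$ to an interval of half-width $\asymp\ln 2n^2$ about $0$; a short case analysis shows that whenever the optimum is clamped the relevant block is still pushed below its target (the ratio being squeezed below $1$ by the very inequality that triggered the clamp), while on the clamped interval the logarithmic factors are bounded by $c(n)$. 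After that only bookkeeping of the constant factors remains.
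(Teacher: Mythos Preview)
Your plan is sound and tracks the paper's proof closely: the augmented posterior/prior on $\RR\times\Fcal^2$, Rademacher centrality, Lemma~\ref{lem:abstract_pac_bayesian}, the Gaussian evaluation of the centrality term via $\tr(\Sigma\Lambda_S)\le\gamma(\Sigma)$, the KL computation~\eqref{eq:kl_main}, and the use of the unconditional Cauchy--Schwarz bound $\frac{\d}{\d t}D_t\le\sqrt{2}\,\norm{\mu_t}_{\Lambda_S,\Qb_t^\xi}$ to tame the logarithmic residuals are all exactly what the paper does. The one substantive mechanical difference is how the free rescaling enters. You graft on an extra Cauchy-regularised coordinate $\theta$ in the style of Theorem~\ref{thm:standard_pac_bayesian}, which buys you the linearisation but costs a \emph{second} Gaussian--Cauchy penalty and forces you to clamp two parameters; note also that your ``balanced value'' is $\theta$-independent only for the transportation \emph{block}, not for $\bar\alpha$ itself, whose penalty therefore feeds back into the chaining block and couples the two optimisations. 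The paper sidesteps this by observing that Lemma~\ref{lem:abstract_pac_bayesian} already holds for \emph{every} posterior, hence in particular for the one with $h$-mean $\lambda\mu_t/\norm{\mu_t}_{\Sigma^{-1},\Qb_t^\xi}$; since the left-hand side is linear in that mean, dividing by $\lambda$ produces a free scalar parameter at no additional KL cost. After substituting, the single $\alpha$-penalty becomes $l_n(\lambda)=\ln(e+\ln^2(\lambda^2/2n))$, and the confinement $\lambda^{2}\in[1/n,4n]$ (from $R_t\le 1$ below and the trivial bound above) pins it exactly to $c(n)$. Your route closes too, but the $\mu_t$-rescaling trick keeps the bookkeeping to one residual and delivers the stated constants without slack.
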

\begin{proof}
    First of all, we assume $\gamma(\Sigma)=1$ without loss of generality~(if not,
    normalize $\Sigma$ dividing it by $\gamma(\Sigma)$).
    Take $X$ as in Lemma~\ref{lem:pac_bayesina_connection}.
    Note that $\Pb X((\alpha,f,h))=0$ for all $\alpha\in \RR, f,h\in \Fcal$ since $\Pb\Delta\equiv 0$
    and hence the Rademacher centrality~(Proposition~\ref{prop:rademacher_centrality}) implies that
    $X$ is $\eta$-central with $\eta=X^2+\Pb X^2$.
    Let $\Qbtil_t^\xi,\Ubtil\in\Pi(\RR\times\Fcal^2)$ be the posterior and prior distribution given in
    \eqref{eq:increment_posterior} and \eqref{eq:increment_prior}, respectively.
    Then, by Lemma~\ref{lem:abstract_pac_bayesian}, Lemma~\ref{lem:pac_bayesina_connection} and \eqref{eq:kl_main},
    \begin{align*}
        &\frac{\d}{\d t}D_t(\xi;S)
        \\
        &=\Qbtil^\xi_t \Pb_S X
        \\
        &\le \Qbtil^\xi_t \Pb_S \eta + \frac1{n} H_\delta(\Qbtil^\xi_t,\Ubtil)
        \\
        &\le \mathop{\EE}_{
            \substack{
                f\sim \Qb_t^\xi\\
                \alpha\sim \Ncal(\alphabar,\sigma^2)\\
                h\sim \Ncal(0,\Sigma(f))
            }
        }\abs{\mu_t(f)+e^{\alpha/2}h}_{\Lambda_S(f)}^2
        + \frac1{n} H_\delta(\Qb_t^\xi,\Ub)
        \\
        &\quad +\frac1{n} \cbr{
            \ln \rbr{2+\frac{\alphabar^2}{\sigma^2}}+
            \frac{\norm{\mu_t}^2_{\Sigma^{-1},\Qb_t^\xi}}{2e^{\alphabar-\sigma^2/2}}
        }.
    \end{align*}
    Now, observe that $\Qb_t^\xi{\tr(\Sigma\Lambda_S)}= \Qb_t^\xi\frac{\Pb+\Pb_S}{2}\abs{\nabla\Delta}_{\Sigma}^2\le \gamma(\Sigma)= 1$ and hence
    \begin{align*}
        &\mathop{\EE}_{
            \substack{
                f\sim \Qb_t^\xi\\
                \alpha\sim \Ncal(\alphabar,\sigma^2)\\
                h\sim \Ncal(0,\Sigma(f))
            }
        }\abs{\mu_t(f)+e^{\alpha/2}h}_{\Lambda_S(f)}^2
        \\
        &=\norm{\mu_t}_{\Lambda_S,\Qb_t^\xi}^2+
        \mathop{\EE}_{
            \substack{
                \alpha\sim \Ncal(\alphabar,\sigma^2)\\
            }
        } \sbr{e^{\alpha}}
        \Qb_t^\xi\sbr{\tr \Sigma \Lambda_S}
        \\
        &\le {\norm{\mu_t}_{\Lambda_S,\Qb_t^\xi}^2+e^{\alphabar+\sigma^2/2}}.
    \end{align*}
    Combining above with
    $\alphabar=\alphabar^*_t\eqdef \frac12\ln (\norm{\mu_t}_{\Sigma^{-1}, \Qb_t^\xi}^2/2n)$
    and
    $\sigma^2=\frac14$,
    we have, with probability $1-\delta$,
    \begin{align*}
        &\frac{\d}{\d t}D_t(\xi;S)
        \rbr{=\Qbtil^\xi_t \Pb_S X}
        \\
        &\le \norm{\mu_t}_{\Lambda_S,\Qb_t^\xi}^2
        +
        e^{1/8}\norm{\mu_t}_{\Sigma^{-1},\Qb_t^\xi}\sqrt{\frac{2}{n}}
        + 
        \\
        &\quad
        \frac1{n} \cbr{
            H_\delta(\Qb_t^\xi,\Ub)+
            \ln \rbr{e+\ln^2\frac{\norm{\mu_t}_{\Sigma^{-1}, \Qb_t^\xi}^2}{2n}}
        }
    \end{align*}
    for simultaneously all $\Qb_t^\xi\in\Pi(\Fcal)$ and $\mu_t:\Fcal\to \Fcal$,
    where we have loosen the bound with the inequality $2<e$.
    Now, multiply $\mu_t$ by $\lambda/\norm{\mu_t}_{\Sigma^{-1},\Qb_t^\xi}$
    with $\lambda>0$ and divide the both sides with $\lambda$ to obtain
    \begin{align*}
        &{\norm{\mu_t}_{\Sigma^{-1},\Qb_t^\xi}^{-1}}\frac{\d}{\d t}D_t(\xi;S)
        \\
        &\le \lambda R_t^2
        + 
        \frac{H_\delta(\Qb_t^\xi,\Ub)+l_n(\lambda)}{n\lambda}
        +
        e^{1/8}\sqrt{\frac{2}{n}}
        ,
    \end{align*}
    where $R_t\eqdef \norm{\mu_t}_{\Lambda_S,\Qb_t^\xi}/\norm{\mu_t}_{\Sigma^{-1},\Qb_t^\xi}$
    and $l_n(\lambda)\eqdef\ln (e + \ln^2\frac{\lambda^2}{2n})~(\ge 1)$.
    Then, taking $\lambda=\lambda^*_t$
    such that\footnote{
        Note that $\lambda_t^*$ always exists by the intermediate value theorem.
    }
    \begin{align*}
        \lambda^*_t &= \inf\myset{\lambda>0}{\sqrt{\frac{{ H_{\delta}(\Qb_t^\xi,\Ub)+l_n(\lambda)} }{ n\Rbar_t^2 }}= \lambda},
    \end{align*}
    where $\Rbar_t=R_t\vee \frac{1}{\sqrt{8n}}$,
    we have
    \begin{align}
        &{\norm{\mu_t}_{\Sigma^{-1},\Qb_t^\xi}^{-1}}\frac{\d}{\d t}D_t(\xi;S)
        \nonumber
        \\
        &\le
        2R_t
        \sqrt{
            \frac{H_\delta(\Qb_t^\xi,\Ub)+l_n(\lambda^*_t)}{n}
        }+
        \nonumber
        \\
        &\quad
        \frac{1}{\sqrt{8n}}\sqrt{
            \frac{H_\delta(\Qb_t^\xi,\Ub)+l_n(\lambda^*_t)}{n}
        }+
        e^{1/8}\sqrt{\frac{2}{n}}
        .
        \label{eq:pre_final}
    \end{align}

    Now we bound $l_n(\lambda^*_t)$ from above.
    Note that
    \begin{align*}
        0\le R_t^2
        &=\frac{\Qb_t^\xi\abs{\mu_t}^2_{\Lambda_S}}{\Qb_t^\xi\abs{\mu_t}^2_{\Sigma^{-1}}}
        \\
        &\le \sup_{f\in\Fcal} \frac{\abs{\mu_t(f)}^2_{\Lambda_S(f)}}{\abs{\mu_t(f)}^2_{\Sigma^{-1}(f)}}
        \\
        &\le \sup_{f\in\Fcal} \norm{\Lambda_S^{1/2}(f)\Sigma^{1/2}(f)}^2_2
        \\
        &\le \sup_{f\in \Fcal} \norm{\Lambda_S^{1/2}(f)\Sigma^{1/2}(f)}^2_{\rm F}
        \\
        &= \sup_{f\in \Fcal} \tr {\Lambda_S(f)\Sigma(f)}
        \\
        &\le \gamma(\Sigma)=1,
    \end{align*}
    where $\norm{A}_{\rm F}=\sqrt{\tr A^\top A}$ denote the Frobenius norm of $A$,
    which implies $\frac1{8n}\le \Rbar_t^2 \le 1$
    and hence
    \begin{align}
        \lambda_t^{*2}
        \ge 
        \frac{l_n(\lambda_t^*)}{n\Rbar_t^2}
        \ge 
        \frac{ 1 }{ n }
        \label{eq:lambda_lower}
    \end{align}
    by the definition of $\lambda_t^*$.
    Moreover, we have a trivial non-statistical bound
    \begin{align*}
        \frac{\d}{\d t}D_t(\xi;S)
        &=\Qb_t^\xi \Pb_S \inner{\mu_t}{\nabla\Delta}
        \\
        &\le\sqrt{\Qb_t^\xi \Pb_S \inner{\mu_t}{\nabla\Delta}^2}
        \\
        &\le\sqrt{2\Qb_t^\xi \frac{\Pb+\Pb_S}{2} \inner{\mu_t}{\nabla\Delta}^2}
        \\
        &=\sqrt{2}\norm{\mu_t}_{\Lambda_S,\Qb_t^\xi},
    \end{align*}
    which, compared to \eqref{eq:pre_final}, allows us to confine ourselves to\footnote{
        Otherwise, we get a worse risk bound than the trivial bound
        and \eqref{eq:pre_final} holds unconditionally.
    }
    \begin{align}
        \sqrt{
            2\frac{H_\delta(\Qb_t^\xi,\Ub)+l_n(\lambda^*_t)}{n}
        }\le 1.
        \label{eq:confinement}
    \end{align}
    Once again, by the definition of $\lambda^*_t$,
    this implies
    \begin{align}
        \lambda_t^{*2}={ \frac{H_\delta(\Qb_t^\xi, \Ub) + l_n(\lambda_t^*)}{n\Rbar_t^2} }\le \frac{1}{2\Rbar_t^2}\le 4n.
        \label{eq:lambda_upper}
    \end{align}
    To sum up the inequalities on $\lambda_t^*$, \eqref{eq:lambda_lower} and \eqref{eq:lambda_upper},
    we have
    \begin{align*}
        l(\lambda_t^*)
        &= \ln \rbr{e+\ln^2\frac{\lambda_t^{*2}}{2n}}
        \\
        &\le \ln \rbr{e+\ln^22n^2}
        = c(n).
    \end{align*}

    Finally, noting that $1/4+e^{1/8}\sqrt{2}\approx 1.85< 2$, the sum of the last two terms on the RHS of~\eqref{eq:pre_final} is bounded with
    $\frac{2}{\sqrt{n}}\norm{\mu_t}_{\Sigma^{-1},\Qb_t^\xi}$
    owing to \eqref{eq:confinement}.
    This concludes the proof.
\end{proof}

Note that, strictly speaking,
the image of the metric field $\Sigma$ should be in the trace class, $\tr \Sigma(f)\le \infty$ for all $f\in \Fcal$,
in order to ensure the existence of the distribution
$\Ncal(\mu_t(f),e^{\alpha}\Sigma(f))$ in \eqref{eq:kl_main}.
However, as in the definition of $\Sigma$ given in the main body,
this condition can be relaxed to the case of general metric fields $\Sigma(f)\in \Scal_+(\Fcal)$, $\forall f\in\Fcal$.
To see this, for any non-trace class metric fields $\Sigma:\Fcal\to \Scal_+(\Fcal)$,
take a trace class sequence $\cbr{\Sigma_i}_{i=1}^\infty$
such that
$\abs{h}_{\Sigma_i^{-1}(f)}$ is monotone decreasing for all $f,h\in\Fcal$
and $\abs{h}_{\Sigma_i^{-1}(f)} \to \abs{h}_{\Sigma^{-1}(f)}$ as $i\to \infty$ in a point-wise manner.
For example,
$\Sigma_i(f)=\sum_{k=1}^\infty (\lambda_k(f) \wedge 2^{i-k}) e_k(f)\otimes e_k(f)$
where $\cbr{\lambda_k(f)}_{k=1}^\infty$ and $\cbr{e_k(f)}_{k=1}^\infty$
are the eigenvalues and eigenvectors of $\Sigma(f)$.\footnote{
    Such countable eigen decomposition is always possible because $\Fcal$ is separable.
}
Then, we have $\tr \Sigma_i = 2^i < \infty$ and
Theorem~\ref{thm:transportation_bound_re} holds for each $\Sigma=\Sigma_i$.
In other words, for all $i\ge 1$, $\Pb^n \sbr{\chi_{m_i}}\le \delta$, where
\begin{align*}
    \chi_{F}(S)&\eqdef 1\cbr{F(S)> 0},\qquad S\in\Zcal^n,~F:\Zcal^n\to \RR,
    \\
    m_i(S)&\eqdef \sup_{\xi,t\in T} \cbr{\frac{\d}{\d t}D_t(\xi;S)-\iota_t(\xi;S,\Ub,\delta,\Sigma=\Sigma_i)}.
\end{align*}
Therefore, since $m_i$ and $\chi_{m_i}$ is increasing with respect to $i\ge 1$,
we have
by the monotone convergence theorem
\begin{align*}
    \delta
    &\ge \lim_{i\to \infty}\Pb^n \sbr{\chi_{m_i}}
    \\
    &=\Pb^n \sbr{\lim_{i\to \infty}\chi_{m_i}}
    \\
    &=\Pb^n \sbr{1_{m_\infty}},
\end{align*}
where $m_\infty\eqdef \sup_{\xi,t\in T} \cbr{\frac{\d}{\d t}D_t(\xi;S)-\iota_t(\xi;S,\Ub,\delta,\Sigma=\Sigma)}$.

\section{PROOFS OF THE RISK BOUNDS FOR NEURAL NETWORKS}
\label{sec:app_neural_networks}

In this section,
we first prove Theorem~\ref{thm:noise_reduction_for_neural_network} in Section~\ref{sec:app_nn_noise_reduction}
and then give a short proof of Corollary~\ref{cor:total_deviation_of_neural_networks}
in Section~\ref{sec:app_nn_risk_bound}
on the basis of the previous result.
Section~\ref{sec:app_nn_utility} provides a utility lemma used to prove
Theorem~\ref{thm:noise_reduction_for_neural_network}.

\subsection{Proof of Theorem~\ref{thm:noise_reduction_for_neural_network}}
\label{sec:app_nn_noise_reduction}

To prove Theorem~\ref{thm:noise_reduction_for_neural_network},
we start with showing upper bounds on the norms
$\abs{\cdot}_{\Lambda_S(\cdot)}$
and $\abs{\cdot}_{\Sigma^{-1}(\cdot)}$, and then
on $\norm{\cdot}_{\Lambda_S, \Qb_t^\xi}$
and $\norm{\cdot}_{\Sigma^{-1}, \Qb_t^\xi}$.
Then we construct the corresponding bounds in the increment bound $\iota_t(\xi;S,\Ub,\delta)$ with an appropriate configuration
of the posterior flow $\xi$,
which are utilized to bound the de-randomization cost $d_{S,\Ub,\delta}(\cdot, \cdot)$.

Take any $w\in\RR^d$ and $k\in[K]$.
We denote by $w[k]=W_k\in \RR^{m\times m}$ the $k$-th weight matrix of the network $f_w$.
Let $\lambda_k(w)=\norm{w[k]}_2$ be the spectral norm of $w[k]$
and let $\lambda(w)=\rbr{\sum_{k=1}^K \lambda_k^2(w)}^{1/2}$ be the aggregated spectral norm of $w$.
Let $\phi_{p,q;k}(w)=\norm{w[k]^\top}_{p,q}$ be the $(p,q)$-norm of $w[k]^\top$,
i.e., $\phi_{p,q;k}(w)=\rbr{\sum_{i=1}^m\rbr{\sum_{j=1}^m \abs{w[k]_{ij}}^p}^{\frac{q}{p}}}^{\frac1q}$.
Specifically, we reserve the shorthand 
$\phi_k(w)\eqdef \phi_{2,2;k}(w)$
for the Frobenius norm.

\subsubsection{Norm bounds}
Let $s_{\uparrow k}(x,w^{k-1})\in \RR^m$
be the normalized forward signal fed into the $k$-th layer,
\begin{align*}
    s_{\uparrow k}(x,w)
    &=\frac{a_{k-1}\circ  W_{k-1}\circ \cdots \circ a_1(x)}{R_{\Xcal}\prod_{l=1}^{k-1} \lambda_l(w)},
\end{align*}
Note that $\abs{s_{\uparrow k}(x,w)}\le 1$ for all $x\in\Xcal, w\in \RR^d$ and $k\in [K]$.
Then, we have upper bounds on the norms of interest.

\begin{lemma}[Norm bounds for NNs]
    \label{lem:norm_bound_NN}
    There exists a metric field $\Sigma:\Fcal\to \Scal_0^+(\Fcal)$ with $\gamma(\Sigma)\le 1$
    such that,
    for all $w,v\in \RR^{d}$,
    \begin{align*}
        \abs{v}_{\Sigma^{-1}(w)}&\le 2R(w) \sqrt{
            K\sum_{k=1}^K \frac{\phi_k^2(v)}{\lambda_k^2(w)}
        }.
    \end{align*}
    Moreover,
    for all $w,v\in \RR^{d}$,
    \begin{align*}
        \abs{v}_{\Lambda_S(w)}&\le 2R(w) \sqrt{
            K\sum_{k=1}^K \frac{\frac{3\Pb+\Pb_S}{4}\abs{
                    v[k]\;s_{\uparrow k}(w)
            }^2}{\lambda_k^2(w)}
        }
        \\
        &\le 2R(w) \sqrt{
            K\sum_{k=1}^K \frac{\lambda_k^2(v)}{\lambda_k^2(w)}
        }.
    \end{align*}
\end{lemma}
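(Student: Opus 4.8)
The plan is to reduce both estimates to one structural fact about the network gradient: for each layer $k$, the Fréchet derivative of the loss with respect to the weight matrix $W_k$ is a rank-one matrix $g_k h_k^\top$, the outer product of a backpropagated signal $g_k$ and the forward signal $h_k$ feeding layer $k$. First I would make this explicit via the chain rule, writing $\nabla_{W_k}\ell(f_w,(x,y))=g_k h_k^\top$ where $h_k=h_k(x,w)$ is the input to $W_k$ and $g_k=L_k^{*}\nabla_2\ell(y,f_w(x))$, with $L_k$ the Jacobian (at the relevant base point) of the sub-network sitting above layer $k$. Since each $a_j$ is $1$-Lipschitz and homogeneous — in particular $a_j(0)=0$, hence $|a_j(u)|\le|u|$, and its Jacobian $J_j$ satisfies $\|J_j\|_2\le1$ — I would bound $|h_k|\le R_\Xcal\prod_{l<k}\lambda_l(w)$ and $|g_k|\le L_\ell\prod_{j>k}\lambda_j(w)$, so that $\|\nabla_{W_k}\ell(f_w,z)\|_F=|g_k|\,|h_k|\le R(w)/\lambda_k(w)$ uniformly in $z$, and therefore $\|\nabla_{W_k}\Delta(f_w,z)\|_F\le2R(w)/\lambda_k(w)$ by the triangle inequality applied to $\nabla\Delta=\Pb[\nabla\ell]-\nabla\ell$. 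Writing $h_k(x,w)=R_\Xcal\big(\prod_{l<k}\lambda_l(w)\big)\,s_{\uparrow k}(x,w)$ with $|s_{\uparrow k}(x,w)|\le1$ records the finer, data-dependent information needed below.

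For the $\Sigma^{-1}$ bound I would take $\Sigma(w)$ block-diagonal over the $K$ layers (restricting to $w$ with $\lambda_k(w)>0$ for all $k$), the $k$-th block being $\tfrac{\lambda_k^2(w)}{4KR^2(w)}I$. This is symmetric and nonnegative, so it is an admissible metric field, and $\gamma(\Sigma)=\sup_{w,z}|\nabla\Delta(f_w,z)|_{\Sigma(w)}^2=\sup_{w,z}\sum_k\tfrac{\lambda_k^2(w)}{4KR^2(w)}\|\nabla_{W_k}\Delta(f_w,z)\|_F^2\le\sum_k\tfrac1K=1$. The dual norm is then $|v|_{\Sigma^{-1}(w)}^2=\sum_k\tfrac{4KR^2(w)}{\lambda_k^2(w)}\phi_k^2(v)=4KR^2(w)\sum_k\phi_k^2(v)/\lambda_k^2(w)$, which is exactly the claimed inequality (indeed with equality).

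For the $\Lambda_S$ bound I would expand the quadratic form, $|v|_{\Lambda_S(w)}^2=\tfrac{\Pb+\Pb_S}{2}\big[\langle v,\nabla\Delta(f_w,\cdot)\rangle^2\big]$. Setting $G(z)=\langle v,\nabla\ell(f_w,z)\rangle$, so that $\langle v,\nabla\Delta(f_w,z)\rangle=\Pb G-G(z)$, the elementary bound $(\Pb G-G)^2\le2(\Pb G)^2+2G^2$ together with Jensen's inequality $(\Pb G)^2\le\Pb G^2$ gives $\tfrac{\Pb+\Pb_S}{2}[(\Pb G-G)^2]\le3\Pb G^2+\Pb_S G^2=4\cdot\tfrac{3\Pb+\Pb_S}{4}[G^2]$, which is where the weighting $\tfrac{3\Pb+\Pb_S}{4}$ comes from. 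From the rank-one structure, $G(z)=\sum_k g_k^\top v[k]h_k$ with $|g_k^\top v[k]h_k|\le|g_k|\,|v[k]h_k|\le\tfrac{R(w)}{\lambda_k(w)}\,|v[k]\,s_{\uparrow k}(x,w)|$, so Cauchy--Schwarz over $k$ gives $G(z)^2\le KR^2(w)\sum_k|v[k]s_{\uparrow k}(x,w)|^2/\lambda_k^2(w)$; averaging under $\tfrac{3\Pb+\Pb_S}{4}$ and taking square roots yields the first $\Lambda_S$ bound. The second follows from the trivial estimate $|v[k]s_{\uparrow k}(x,w)|\le\|v[k]\|_2\,|s_{\uparrow k}(x,w)|\le\lambda_k(v)$.

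The main obstacle is the first paragraph: getting the backpropagation bookkeeping exactly right — precisely which weight matrices and activation Jacobians enter $L_k$, and resolving the apparent off-by-one between the input $h_k$ to layer $k$ and the definition of $s_{\uparrow k}$ — and dealing cleanly with the non-smoothness of ReLU-type activations (working with the a.e.\ Fréchet derivative, or a subgradient selection, and noting the forward/backward norm bounds are insensitive to this). Everything afterwards is short: the choice of $\Sigma$ is essentially forced by the requirement $\gamma(\Sigma)\le1$, and the $\Lambda_S$ estimate is a two-line application of $(a-b)^2\le2a^2+2b^2$, Jensen, and Cauchy--Schwarz.
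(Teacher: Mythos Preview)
Your proposal is correct and follows essentially the same route as the paper: the same block-diagonal choice $\Sigma_k(w)=\tfrac{\lambda_k^2(w)}{4KR^2(w)}I$, the same rank-one backpropagation identity $\nabla_{W_k}\ell=g_k h_k^\top$ with $|g_k|\,|h_k|\le R(w)/\lambda_k(w)$, and the same three-ingredient $\Lambda_S$ bound ($(a-b)^2\le 2a^2+2b^2$, Jensen to produce the $\tfrac{3\Pb+\Pb_S}{4}$ weighting, Cauchy--Schwarz over layers). The only cosmetic difference is ordering: the paper applies Cauchy--Schwarz over $k$ before splitting $G_k-\Pb G_k$, whereas you split the full sum $G-\Pb G$ first and then Cauchy--Schwarz, but both yield the identical constant $4K$.
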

\begin{proof}
    The first claim follows if we take
    $\Sigma=\diag \cbr{\Sigma_k}_{k=1}^K$
    with $\Sigma_k(w)\eqdef \frac{\lambda_k^2(w)}{4KR^2(w)}I_{m^2}$.
    
    To see $\gamma(\Sigma)\le 1$, let $s_{\uparrow\downarrow k}(x,y,w)\in \RR^{m\times m}$ be the normalized backward signal fed into the $k$-th layer,
    \small
    \begin{align*}
        &s_{\uparrow\downarrow k}(x,y,w)
        \\
        &=\frac{
            \rbr{W_K\circ J_K\circ \cdots \circ W_{k+1}\circ J_{k+1}}^\top \nabla \ell(y,f_w(x))
        }{L_\ell \prod_{l=k+1}^K\lambda_k(w)},
    \end{align*}
    \normalsize
    where $J_k=\frac{\partial a_k(s)}{\partial s}_{s=s_{\uparrow k}(x,w^{k-1})}\in \RR^{m\times m}$
    is the Jacobian matrix of the $k$-th activation.
    Since $\norm{J_k}_2\le 1$,
    we have $\abs{s_{\uparrow \downarrow k}(x,y,w)}\le 1$.
    Now note that the gradient of the $k$-th layer is given by
    the product of backward and forward signals,
    $G_k(w,z)\eqdef \sbr{\nabla_w \ell(f_w, (x,y))}_k= \frac{R(w)}{\lambda_k(w)} s_{\uparrow\downarrow k}(x,y,w)\otimes s_{\uparrow k}(x,w)$.
    Therefore,
    we have
    \begin{align*}
        \gamma(\Sigma)
        &=
        \sup_{w\in\RR^d,z\in\Zcal} \abs{\nabla \Delta(f_w,z)}_{\Sigma(f)}^2
        \\
        &\le
        4\sup_{w\in\RR^d,z\in\Zcal} \abs{\nabla \ell(f_w,z)}_{\Sigma(f)}^2
        \\
        &=
        \sup_{w\in\RR^d,z\in\Zcal} \sum_{k=1}^K
        \frac{\lambda_k^2(w)\abs{G_k(w,z)}^2}
        {KR^2(w)}
        \\
        &=
        \sup_{w\in\RR^d,z\in\Zcal} \sum_{k=1}^K
        \frac{\abs{s_{\uparrow\downarrow k}(x,y,w)}^2\abs{s_{\uparrow k}(x,w)}^2}
        {K}
        \\
        &\le 1,
    \end{align*}
    where we utilized $R(w)=L_\ell R_\Xcal \prod_{k=1}^K\lambda_k(w)$.

    Now we prove the second claim, the bound on $\abs{v}_{\Lambda_S(w)}$.
    Observe that $\nabla\Delta(f_w,z)=\cbr{G_k(w,z)-\Pb G_k(w)}_{k=1}^K$ and hence
    \begin{align*}
        &\abs{v}_{\Lambda_S(w)}^2
        \\
        &=\frac{\Pb+\Pb_S}{2} \rbr{\sum_{k=1}^K \inner{v[k]}{G_k(w)-\Pb G_k(w)}}^2
        \\
        &\le 2K\frac{\Pb+\Pb_S}{2} \sum_{k=1}^K \rbr{\inner{v[k]}{G_k(w)}^2+\inner{V_k}{\Pb G_k(w)}^2}
        \\
        &\le 4K\frac{3\Pb+\Pb_S}{4} \sum_{k=1}^K \inner{v[k]}{G_k(w)}^2,
    \end{align*}
    where both inequalities follow from Jensen's inequality exchanging the order of expectation/summation and square.
    Finally, by the Cauchy--Schwarz' inequality,
    we have
    \begin{align*}
        \inner{V_k}{G_k(w,z)}
        &=R(w) \frac{s_{\uparrow\downarrow k}(x,y,w)^\top v[k] \;s_{\uparrow k}(x,w)}
        {\lambda_k(w)}
        \\
        &\le R(w) \frac{\abs{v[k]\; s_{\uparrow k}(x,w)}}
        {\lambda_k(w)}
        \\
        &\le R(w) \frac{\lambda_k(v)}
        {\lambda_k(w)},
    \end{align*}
    which concludes the proof.
\end{proof}

\subsubsection{$L^2$-norm bounds}

Next,
we specify posterior flows $\xi$ and
evaluate the corresponding $L^2$-norms.
Let $\Qb_0\in\Pi(\RR^d)$ be the initial posteriors
with mean $\wbar_0\in\RR^d$ 
and (diagonal) standard deviatioin $s_0\in \RR^d_+$,
\begin{align}
    \Qb_0=\Ncal(\wbar_0, {\diag}^2 s_0).
    \label{eq:initial_posterior}
\end{align}
Then, consider the contraction flow given by
\begin{align}
    \xi_t(w)=\mu_t(w)=\wbar_\infty-w
    \label{eq:contraction_flow}
\end{align}
for some $\wbar_\infty\in\RR^d$.
Note that $\Qb_t^\xi=\Ncal(\wbar_t, \diag^2 s_t)$
with $\wbar_t\eqdef \wbar_\infty+e^{-t}(\wbar_0-\wbar_\infty)$
and $s_t\eqdef e^{-t}s_0$.

We measure the $k$-th spectral radius of the flow $\xi$ at time $t\in T$ 
with
\begin{align*}
    \lambdabar_{t,k}(\xi)\eqdef \lambda_k(\wbar_t)+
    e^{-t} \gamma_m \psi_k(s_0),
\end{align*}
where
$\gamma_m=\sqrt{2\ln 2em}$
and 
$\psi_k(s_0)\eqdef \norm{s_0[k]}_{2,\infty}\vee \norm{s_0[k]^\top}_{2,\infty}$.
Let $\Rbar_t(\xi)$ denote the total spectral radius of $\xi$ at time $t\in T$,
$\Rbar_t(\xi)\eqdef L_\ell R_\Xcal \prod_{k=1}^K \lambdabar_{t,k}(\xi)$.

\begin{lemma}[$L^2$-norm bounds with contraction]
    \label{lem:l2_norm_bound_NN_contraction}
    Let $t\in T$.
    Let $\lambdabar_{t,k}(\xi)$ and $\Rbar_t(\xi)$ be given as above.
    Take the initial distribution and the posterior flow as in~\eqref{eq:initial_posterior}
    and \eqref{eq:contraction_flow}.
    Then, there exists a metric field $\Sigma:\Fcal\to \Scal_0^+(\Fcal)$ with $\gamma(\Sigma)\le 1$
    such that
    \begin{align*}
        &\norm{\mu_t}_{\Sigma^{-1},\Qb_t^\xi}
        \\
        &\le 2e^{-t}
        \Rbar_t(\xi) 
        \sqrt{
            K\sum_{k=1}^K \frac{\phi_k^2(\wbar_\infty-\wbar_0)+\phi_k^2(s_0)}{\lambdabar_{t,k}^2(\xi)}
        }.
    \end{align*}
    Moreover,
    \begin{align*}
        &\norm{\mu_t}_{\Lambda_S,\Qb_t^\xi}
        \\
        &\le 2e^{-t}
        \Rbar_t(\xi) 
        \sqrt{
            K\sum_{k=1}^K \frac{\lambda_k^2(\wbar_\infty-\wbar_0)+\phi_{\infty,2;k}^2(s_0)}{\lambdabar_{t,k}^2(\xi)}
        }.
    \end{align*}
\end{lemma}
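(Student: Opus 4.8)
The plan is to turn both $L^2$-norm bounds into explicit Gaussian integrals, using the pointwise estimates of Lemma~\ref{lem:norm_bound_NN} together with the fact that $\Qb_t^\xi=\Ncal(\wbar_t,\diag^2 s_t)$ factorizes over the layer blocks $w[1],\dots,w[K]$. I would keep the metric field $\Sigma$ exhibited in Lemma~\ref{lem:norm_bound_NN}, so that $\gamma(\Sigma)\le1$ is automatic, substitute $\mu_t(w)=\wbar_\infty-w$, and use $\wbar_t=\wbar_\infty+e^{-t}(\wbar_0-\wbar_\infty)$ and $s_t=e^{-t}s_0$ to write, for each layer $k$, $(\wbar_\infty-w)[k]=e^{-t}(\wbar_\infty-\wbar_0)[k]-g[k]$ with $g[k]\sim\Ncal(0,\diag^2 s_t[k])$ and the $g[k]$ mutually independent.

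For the $\Sigma^{-1}$ bound, squaring the first inequality of Lemma~\ref{lem:norm_bound_NN} and using $R^2(w)/\lambda_k^2(w)=L_\ell^2R_\Xcal^2\prod_{l\ne k}\lambda_l^2(w[l])$ gives $\norm{\mu_t}_{\Sigma^{-1},\Qb_t^\xi}^2\le 4KL_\ell^2R_\Xcal^2\sum_{k}\EE_w\!\left[\prod_{l\ne k}\lambda_l^2(w[l])\cdot\phi_k^2(\wbar_\infty-w)\right]$. The product depends only on the blocks $l\ne k$ and $\phi_k(\wbar_\infty-w)$ only on block $k$, so independence splits the expectation as $\big(\prod_{l\ne k}\EE\,\lambda_l^2(w[l])\big)\,\EE\,\phi_k^2(\wbar_\infty-w)$. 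The single-block factor is a plain Frobenius moment: the cross term vanishes and $\EE\,\phi_k^2(\wbar_\infty-w)=e^{-2t}\big(\phi_k^2(\wbar_\infty-\wbar_0)+\phi_k^2(s_0)\big)$. For the products, $\lambda_l(w[l])\le\lambda_l(\wbar_t)+\norm{g[l]}_2$, so Cauchy--Schwarz yields $\EE\,\lambda_l^2(w[l])\le\big(\lambda_l(\wbar_t)+\sqrt{\EE\norm{g[l]}_2^2}\,\big)^2$, and the utility lemma of Section~\ref{sec:app_nn_utility} (a spectral-norm bound for structured Gaussian matrices) supplies $\sqrt{\EE\norm{g[l]}_2^2}\le\gamma_m\psi_l(s_t)=e^{-t}\gamma_m\psi_l(s_0)$, hence $\EE\,\lambda_l^2(w[l])\le\lambdabar_{t,l}^2(\xi)$. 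Substituting $\prod_{l\ne k}\lambdabar_{t,l}^2(\xi)=\Rbar_t^2(\xi)\big/\big(L_\ell^2R_\Xcal^2\,\lambdabar_{t,k}^2(\xi)\big)$ and taking a square root gives the first claim.

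The $\Lambda_S$ bound follows the same template, except that the single-block term now carries $\frac{3\Pb+\Pb_S}{4}\abs{(\wbar_\infty-w)[k]\,s_{\uparrow k}(w)}^2$, and $s_{\uparrow k}(w)$ depends on the lower blocks $w[1],\dots,w[k-1]$, which prevents an immediate factorization. The remedy is to integrate out $g[k]$ first: conditioned on the remaining blocks, $s_{\uparrow k}(x,w)$ is a fixed vector $u$ with $\abs{u}\le1$, independent of $g[k]$, so the cross term again vanishes and $\EE_{g[k]}\abs{(e^{-t}(\wbar_\infty-\wbar_0)[k]-g[k])\,u}^2=e^{-2t}\abs{(\wbar_\infty-\wbar_0)[k]\,u}^2+\EE_{g[k]}\abs{g[k]\,u}^2$. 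A column-wise estimate gives $\EE_{g[k]}\abs{g[k]\,u}^2\le\phi_{\infty,2;k}^2(s_t)\abs{u}^2$, and $\abs{(\wbar_\infty-\wbar_0)[k]\,u}^2\le\lambda_k^2(\wbar_\infty-\wbar_0)\abs{u}^2$; then, using $\frac{3\Pb+\Pb_S}{4}\abs{s_{\uparrow k}(x,w)}^2\le1$, the per-block term is bounded by the constant $e^{-2t}\big(\lambda_k^2(\wbar_\infty-\wbar_0)+\phi_{\infty,2;k}^2(s_0)\big)$, which no longer involves the other blocks. Pulling this constant out of $\EE\prod_{l\ne k}\lambda_l^2(w[l])$ and finishing as before gives the second claim.

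The main obstacle is this decoupling step in the $\Lambda_S$ bound: one must resist factorizing prematurely and instead eliminate the $k$-th-layer noise so that the data average of $\abs{s_{\uparrow k}}^2\le1$ separates the layers. Everything else reduces to the triangle inequality, Cauchy--Schwarz, elementary Frobenius Gaussian moments, and the Gaussian spectral-norm estimate $\sqrt{\EE\norm{g[l]}_2^2}\le\gamma_m\psi_l(s_t)$, which I would cite from the utility lemma of Section~\ref{sec:app_nn_utility} rather than reprove here.
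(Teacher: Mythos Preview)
Your proposal is correct and follows essentially the same approach as the paper: both parametrize $w\sim\Qb_t^\xi$ as the mean $\wbar_t$ plus layer-wise independent Gaussian noise, apply Lemma~\ref{lem:norm_bound_NN} pointwise, use layer independence to factor the expectation, invoke Lemma~\ref{lem:gaussian_spectral_norm} for $\EE\lambda_l^2(w[l])\le\lambdabar_{t,l}^2(\xi)$, and---for the $\Lambda_S$ bound---integrate out the $k$-th block first so that the independence of $z[k]$ from $s_{\uparrow k}$ kills the cross term before bounding by $\lambda_k^2(\wbar_\infty-\wbar_0)+\phi_{\infty,2;k}^2(s_0)$. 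The only cosmetic difference is that the paper applies Lemma~\ref{lem:gaussian_spectral_norm} directly to the non-centered matrix $w_t[l]$, whereas you first split off the mean via the triangle inequality and then apply the lemma with mean zero; since the lemma's own proof begins with that same triangle inequality, the two routes are equivalent.
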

\begin{proof}
    Note that $w_t\eqdef\wbar_\infty + e^{-t}(c+z)\sim \Qb_t^\xi$
    with $c=\wbar_0-\wbar_\infty$ and $z\sim \Ncal(0, \diag^2 s_0)$.
    Thus, by Lemma~\ref{lem:norm_bound_NN},
    \begin{align*}
        &\norm{\mu_t}_{\Sigma^{-1},\Qb_t^\xi}^2
        \\
        &=\EE_{w\sim \Qb_t^\xi} \abs{\wbar_\infty-w}_{\Sigma^{-1}(w)}^2
        \\
        &\le 4KL_\ell^2 R_\Xcal^2
        \sum_{k=1}^K
        e^{-2t}
        \EE\;
        \phi_k^2(c+z)
        \prod_{l\neq k}\lambda_{t,l}^2(w_t)
        .
    \end{align*}
    Note that the random parameter $z$ is layer-wise independent
    and Lemma~\ref{lem:gaussian_spectral_norm} yields, for all $l\in[K]$ and $t\in T$,
    \begin{align*}
        &\sqrt{\EE\lambda_l^2(w_t)}
        \\
        &=\sqrt{\EE\lambda_l^2(\wbar_\infty+e^{-t}(c+z))}
        \\
        &\le
        \lambda_l((1-e^{-t})\wbar_\infty+e^{-t}\wbar_0)
        + e^{-t}\psi_l(s_0)\sqrt{2\ln 2em}
        \\
        &= \lambdabar_{t,l}(\xi).
    \end{align*}
    Therefore, 
    the first claim is proved
    with the equality
    \begin{align*}
        \EE\;\phi_k^2(c+z)
        &= \phi_k^2(c) + \phi_k^2(s_0), \quad \forall k\in[K].
    \end{align*}
    Similarly, the second claim is also proved with
    \begin{align*}
        &\EE\abs{(c+z)[k]\; s_{\uparrow k}(x,w_t)}^2
        \\
        &=
        \EE\sbr{
            \abs{c[k]\; s_{\uparrow k}(x,w_t)}^2
    \right.\\&\qquad\quad\left.
            + 2\inner{c[k]\; s_{\uparrow k}(x,w_t)}{z[k]\; s_{\uparrow k}(x,w_t)}
    \right.\\&\qquad\quad\left.
            +\abs{z[k]\; s_{\uparrow k}(x,w_t)}^2
        }
        \\
        &=
        \EE\sbr{
            \abs{c[k]\; s_{\uparrow k}(x,w_t)}^2
            +\abs{z[k]\; s_{\uparrow k}(x,w_t)}^2
        }
        \\
        &\le
        \lambda^2_k(c)+\phi_{\infty,2;k}^2(s_0),\quad \forall x\in\Xcal,\;\forall k\in[K],
    \end{align*}
    where we utilized that $z[k]$ and $s_{\uparrow k}(x,w_t)$ are mutually independent
    as $s_{\uparrow k}(x,w_t)$ only depends on the layers $w_t[l]$ before the $k$-th layer,
    $l<k$, which are independent of $w_t[k]$.
\end{proof}

\subsubsection{De-randomization bound}

We now show the de-randomization bound, Theorem~\ref{thm:noise_reduction_for_neural_network},
on the basis of
Lemma~\ref{lem:l2_norm_bound_NN_contraction}.
To this end, we first specialize the general result of the lemma
to our simplified setting.
Then, we also evaluate a specific value of the entropy term $H_\delta(\Qb_t^\xi,\Ub)$
in order to compute the increment bound $\iota_t(\xi;S,\Ub,\delta)$.
Finally, we conclude the proof by integrating it from $t=0$ to $t=\infty$.

Take $\wbar_0=\wbar_\infty=w$ for $w\in\RR^d$.
Since the activation function is homogeneous,
we can assume $\lambda_1(w)=\ldots\lambda_K(w)=\Lbar(w)$
without loss of generality.
To have the initial posterior as specified, $\Qb_0=\Nb(w, \rho)=\Ncal(w, \rho^2\frac{\Lbar^2(w)}{mK^2\gamma_m^2}I_d)$,
take $s_0=\sigma\one_d$
for $\sigma=\frac{\Lbar(w) \rho}{\sqrt{m}K\gamma_m}$,
where $\one_d\in\RR^d$ is the vector of ones.
Then, we have $\phi_k(s_0)=\frac{\Lbar(w) \sqrt{m}}{K\gamma_m}\rho$, $\phi_{\infty,2;k}(s_0)=\psi_k(s_0)=\frac{\Lbar(w) \rho}{K\gamma_m}$.

Note that $\lambdabar_{t,k}(\xi)\le \lambda_{0,k}(\xi)=\Lbar(w)(1+\frac{\rho}{K})\le \Lbar(w)e^{\rho/K}$
and hence Lemma~\ref{lem:l2_norm_bound_NN_contraction} is simplified as
\begin{align}
    \norm{\mu_t}_{\Sigma^{-1},\Qb_t^\xi}
    &\le
    \rho e^{-t}
    \frac{
        2e^\rho R(w) 
    }
    {\gamma_m}
    \sqrt{m},
    \label{eq:tmp_2345vtrwe}
    \\
    \norm{\mu_t}_{\Lambda_S,\Qb_t^\xi}
    &\le \rho e^{-t}
    \frac{
        2e^\rho R(w) 
    }
    { \gamma_m}
    .
    \label{eq:tmp_23w34refefwe}
\end{align}

Next, we evaluate $H_\delta(\Qb_t^\xi,\Ub)$.
Let $\Ub=\Ucal_d$ be the generalized Cauchy distribution.
Then, noting that $\Qb_t^\xi=\Ncal(w, e^{-2t}\rho^2\frac{\Lbar^2(w)}{mK^2\gamma_m^2}I_d)$,
Lemma~\ref{lem:kl_gauss_cauchy} implies
\begin{align}
    H_\delta(\Qb_t^\xi,\Ub)
    &\le
    \frac{d+1}{2}\ln \rbr{1+\frac{\rho_0^2\abs{w}^2}{\rho^2e^{-2t}}}
    \nonumber\\
    &\quad+\ln \frac{\sqrt{d}}{\delta}\rbr{\frac{\rho e^{-t}}{\rho_0} +\frac{\rho_0}{\rho e^{-t}}}
    ,
    \label{eq:tmp_adsfadsffdgfbbsv}
\end{align}
where $\rho_0\eqdef \sqrt{\frac{K}{m}}\frac{\gamma_m}{\Lbar(w)}$.

Let $c_1(n,d,\delta)=c(n)+\ln \frac{\sqrt{d}}{\delta}$
and $c_2(n,d,\delta,u)=c_1(n,d,\delta) + 1 + \ln (u+1/u)$.
Then, 
with \eqref{eq:tmp_2345vtrwe}, \eqref{eq:tmp_23w34refefwe} and \eqref{eq:tmp_adsfadsffdgfbbsv}
combined with Corollary~\ref{cor:transportation_based_risk_bound}
and the inequality $\sqrt{a+b}\le \sqrt{a}+\sqrt{b}$,
we have
\small
\begin{align*}
    &d_{S,\Ub,\delta}\rbr{\Nb(w,\rho), \deltab_w}
    \\
    &\le
    \frac{
        4e^\rho R(w) 
    }
    { \gamma_m \sqrt{n}}
    \Bigg[
        \rho\sqrt{m}+
  \\&\quad
        \int_0^\infty \d t\;\rho e^{-t} \Bigg\{
            \sqrt{
                \frac{d+1}{2}\ln \rbr{1+\frac{\rho_0^2\abs{w}^2}{\rho^2e^{-2t}}}
            }+
  \\&\quad\qquad \qquad \qquad \;
            \sqrt{
                \ln \rbr{\frac{\rho e^{-t}}{\rho_0} +\frac{\rho_0}{\rho e^{-t}}}
                + c_1(n,d,\delta)
            }
        \Bigg\}
    \Bigg]
    \\
    &=
    \frac{
        4e^\rho R(w) 
    }
    { \gamma_m \sqrt{n}}
    \Bigg[
        \rho\sqrt{m} +
  \\&\qquad
  \rho_0 \abs{w}\sqrt{d}\; \Ical\rbr{\frac{\rho}{\rho_0\abs{w}}}
        +\rho_0 \Jcal\rbr{\frac{\rho}{\rho_0}, c_1(n,d,\delta)}
    \Bigg]
    \\
    &\le
    4 e^\rho R(w)
    \sqrt{\frac{mK^2}{n}}
    \\
    &\quad 
    \times
    \sbr{
        \frac{\abs{w}}{\Lbar(w)} \;\Ical\rbr{\sqrt{\frac{m}{K}}\frac{\Lbar(w)\rho}{\abs{w}\gamma_m}} +
\right.\\&\qquad \quad\left.
        \frac{\rho}{K\gamma_m} \rbr{
            1+\sqrt{\frac{c_2(n,d,\delta,\rho/\rho_0)}{m}}
        }
    }
    ,
\end{align*}
\normalsize
where $\Ical(a)\eqdef \int_0^a \d s\sqrt{\frac{1+1/d}{2}\ln(1+1/s^2)}$
and 
$\Jcal(a, b)\eqdef \int_0^a \d s\sqrt{\ln (s+1/s)+b}$.
The last inequality is shown by Jensen's inequality,
\begin{align*}
    \Jcal(a, b)
    &\le a\sqrt{\frac1a\int_0^a \d s\; \ln \rbr{s+\frac1{s}}+b}
    \\
    &= a\sqrt{\ln \rbr{a+\frac1a}-1+2\frac{\arctan a}{a}+b}
    \\
    &\le a\sqrt{\ln \rbr{a+\frac1a}+1+b}.
\end{align*}

Finally,
noting that $\Lbar(w)\le \abs{w}/\sqrt{K}$ and $\frac{1+1/d}{2}\le 1$,
Theorem~\ref{thm:noise_reduction_for_neural_network} is proved.

\subsection{Proof of Corollary~\ref{cor:total_deviation_of_neural_networks}}
\label{sec:app_nn_risk_bound}

Corollary~\ref{cor:total_deviation_of_neural_networks}
is shown combining Theorem~\ref{thm:noise_reduction_for_neural_network}
with a standard PAC-Bayesian risk bound in view of
the transportation based risk bound given as Corollary~\ref{cor:transportation_based_risk_bound}.

\begin{proof}
    According to \cite{mcallester1999some},
    for simultaneously all $\Qb_0\in \Pi(\Fcal)$,
    \begin{align}
        \Qb_0 \Pb_S \Delta
        =\tilde\Ocal\rbr{\sqrt{\frac{H_\delta(\Qb_0, \Ub)}{n}}}
        \label{eq:conventional_pac_bound_mca}
    \end{align}
    with probability at least $1-\delta$,
    where $\Ub\in \Pi(\Fcal)$ is an arbitrary data-independent prior.

    Now, take $\Qb_0=\Nb(w, 1)=\Ncal(w, \frac{\Lbar^2(w)}{mK^2\gamma_m^2}I_d)$ and $\Ub=\Ucal_d$.
    Then, according to Lemma~\ref{lem:kl_gauss_cauchy},
    we have
    \begin{align*}
        H_\delta(\Qb_0, \Ub)
        &=\Ocal\rbr{\frac{mK^2\gamma_m^2\abs{w}^2}{2\Lbar^2(w)}}
    \end{align*}
    for sufficiently large $m$.
    Therefore, inserting this back to~\eqref{eq:conventional_pac_bound_mca},
    we have
    \begin{align*}
        \Qb_0 \Pb_S \Delta
        =\tilde\Ocal\rbr{\frac{\abs{w}}{\Lbar(w)}\sqrt{\frac{mK^2}{n}}},
    \end{align*}
    which, combined with Corollary~\ref{cor:transportation_based_risk_bound}
    and Theorem~\ref{thm:noise_reduction_for_neural_network},
    concludes the proof..
\end{proof}

\subsection{Utility Lemmas}
\label{sec:app_nn_utility}

\begin{lemma}[Expected Gaussian spectral norms]
    \label{lem:gaussian_spectral_norm}
    Let $G=(g_{ij})$ be a $m\times m$ random matrices whose entries $g_{ij}$ are
    drawn from independent Gaussian distributions with mean $m_{ij}$ and variance $v_{ij}\ge 0$ for $i,j\in [m]$.
    Then,
    \begin{align*}
        \sqrt{\EE \norm{G}_2^2}
        &\le
        \norm{M}_2+\gamma_m\sqrt{v},
    \end{align*}
    where $M=(m_{ij})$ and 
    $v=\max_i\sum_j v_{ij}\vee \max_j \sum_i v_{ij}$.
\end{lemma}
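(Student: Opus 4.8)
The plan is to peel off the deterministic mean, reduce to a centred Gaussian matrix, and then estimate the expected squared operator norm of that matrix by rewriting it as the supremum of a Gaussian process whose variance is uniformly controlled.

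First I would split $G=M+Z$ with $Z\eqdef G-M$, so that $Z$ has independent centred entries $z_{ij}\sim\Ncal(0,v_{ij})$. Since $\norm{M}_2$ is deterministic, sub-additivity of the operator norm followed by Minkowski's inequality in $L^2$ gives
\begin{align*}
    \sqrt{\EE\norm{G}_2^2}
    \le \sqrt{\EE\sbr{\rbr{\norm{M}_2+\norm{Z}_2}^2}}
    \le \norm{M}_2+\sqrt{\EE\norm{Z}_2^2},
\end{align*}
so it remains to prove $\sqrt{\EE\norm{Z}_2^2}\le\gamma_m\sqrt v$.

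For this I would use the variational formula $\norm{Z}_2=\sup\cbr{\inner{u}{Zw}:\abs{u}=\abs{w}=1}$. For any fixed unit pair $(u,w)$ the scalar $\inner{u}{Zw}=\sum_{i,j}u_iw_jz_{ij}$ is centred Gaussian with variance $\sum_{i,j}u_i^2w_j^2v_{ij}$; applying the elementary inequality $u_i^2w_j^2\le\tfrac12(u_i^2+w_j^2)$ (legitimate because $u_i^2,w_j^2\in[0,1]$) and summing yields variance at most $\tfrac12\max_i\sum_j v_{ij}+\tfrac12\max_j\sum_i v_{ij}\le v$. Hence $(u,w)\mapsto\inner{u}{Zw}$ is a centred Gaussian process whose pointwise variance never exceeds $v$, and the claim reduces to showing $\EE\sup_{u,w}\inner{u}{Zw}^2\le 2v\ln(2em)$.

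This last step is the technical heart, and pinning down the explicit constant $\gamma_m^2=2\ln(2em)$ is the main obstacle. A brute-force route — covering the unit sphere by an $\varepsilon$-net so that $\norm{Z}_2$ is comparable to a maximum of finitely many of the above Gaussian functionals, then invoking the elementary maximal inequality $\sqrt{\EE\max_{k\le N}g_k^2}\le\sqrt{2v\ln(eN)}$ for Gaussians of variance at most $v$ — does not suffice, since any net of the sphere has cardinality $e^{\Theta(m)}$ and would only produce $\sqrt{\EE\norm{Z}_2^2}=O(\sqrt{vm})$, losing a $\sqrt m$ factor. To obtain the genuine $\sqrt{\ln m}$ dependence one must exploit the independence of the entries of $Z$, for instance via the moment (trace) method: bound $\EE\norm{Z}_2^{2p}\le\EE\tr\rbr{(ZZ^\top)^p}$, evaluate the expectation through the Gaussian pairings of the arising walks, and optimise over the integer $p\asymp\ln m$; a variance-adapted chaining argument is an alternative. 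I expect this Gaussian-structure step, and especially nailing the constant, to be by far the most delicate part, the reduction carried out in the first two paragraphs being routine.
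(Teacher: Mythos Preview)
Your reduction $G=M+Z$ followed by Minkowski in $L^2$ matches the paper exactly (the paper dresses it up via the identity $(a+b)^2=\inf_{1/p+1/q\le1}pa^2+qb^2$ and an exchange of infimum and expectation, but the content is the same). The divergence is in how the centred piece $\sqrt{\EE\norm{Z}_2^2}$ is handled. The paper proves nothing here: it simply cites the ready-made bound $\EE\norm{Z}_2^2\le 2v(1+\ln 2m)=\gamma_m^2 v$ from Tropp's monograph on matrix concentration (Eq.~(4.1.7) there). You instead set out to rederive this, correctly diagnosing that the pointwise variance bound on $\inner{u}{Zw}$ combined with a sphere net is too weak, and that one must exploit the entrywise independence through the trace-moment method with $p\asymp\ln m$. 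That is indeed how Tropp's result is obtained (via the matrix Laplace transform and Lieb's concavity theorem, of which the moment method is a close relative), so your plan is sound but amounts to reproving a standard black box; the efficient route is to cite it, as the paper does. Your variance computation is correct --- and your justification of $u_i^2w_j^2\le\tfrac12(u_i^2+w_j^2)$ via $u_i^2w_j^2\le\min(u_i^2,w_j^2)$ when both factors lie in $[0,1]$ is fine --- but note that it plays no role in the moment-method argument; it would only enter if you pursued the chaining alternative, where matching the sharp constant $\gamma_m$ is considerably more delicate.
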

\begin{proof}
    Note that $(a+b)^2=\inf_{(p,q)\in A} pa^2+qb^2$ for all $a,b\ge 0$,
    where $A\eqdef \mysetinline{(p,q)\in(1,\infty)^2}{\frac1p+\frac1q\le 1}$.
    Therefore, the triangle inequality implies
    \begin{align*}
        \sqrt{\EE \norm{M+G}_2^2}
        &\le
        \sqrt{\EE \cbr{\norm{M}_2+\norm{G}_2}^2}
        \\
        &\le
        \sqrt{\inf_{p,q\in A}p\norm{M}_2^2+q\EE \norm{G}_2^2},
        \\
        &=
        \norm{M}_2+\sqrt{\EE \norm{G}_2^2},
    \end{align*}
    where the second line follows from the exchange of the order of infimum and expectation.
    This, combined with $\EE \norm{G}_2^2\le 2v(1+\ln 2m)$~(see Eq.~(4.1.7), \cite{tropp2015introduction}),
    verifies the claim.
\end{proof}

\begin{lemma}[Gaussian--Cauchy KL divergence]
    \label{lem:kl_gauss_cauchy}
    Let $\Ncal_d(\mu,\rho^2I_d)\in\Pi(\RR^d)$ be a $d$-dimensional spherical Gaussian distribution
    with mean $\mu\in\RR^d$ and variance $\rho^2>0$.
    Let $\Ucal_d\in\Pi(\RR^d)$ be a generalized Cauchy distribution
    given by
    \begin{align*}
        \Ucal_d(\d x)=\frac{2\nu_d(\d x)}{\pi S_{d-1}\rbr{\abs{x}^{d-1}+\abs{x}^{d+1}}},
    \end{align*}
    where $\nu_d$ is the Lebesgue measure on $\RR^d$
    and $S_{d-1}=\frac{d\sqrt{\pi}^d}{\Gamma(\frac{d}2+1)}$ is the surface area of the unit ball in $\RR^d$.
    Then,
    \begin{align*}
        &\kld(\Ncal(\mu,\rho^2I_d), \Ucal_d)
        \\
        &\le\frac{d+1}{2}\ln \rbr{1+\frac{\abs{\mu}^2}{d\rho^2}}
        +\ln \frac{1+d\rho^2}{\rho}
    \end{align*}
    Moreover,
    \begin{align*}
        \kld(\Ncal(\mu,\rho^2I_d), \Ucal_d)
        &\le
        \frac{\abs{\mu}^2 }{2\rho^2}
        + \ln\frac{1+ 2d\rho^2 }{\rho}.
    \end{align*}
\end{lemma}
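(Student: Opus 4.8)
The plan is to evaluate the relative entropy directly from the two densities and then to isolate the mean-dependent part in two different ways for the two inequalities. Plugging in the Gaussian density $(2\pi\rho^2)^{-d/2}e^{-\abs{x-\mu}^2/2\rho^2}$ and the Cauchy density $\frac{2}{\pi S_{d-1}\abs{x}^{d-1}(1+\abs{x}^2)}$ and using $\EE\abs{x-\mu}^2=d\rho^2$, one obtains for $x\sim\Ncal(\mu,\rho^2I_d)$
\begin{align*}
\kld(\Ncal(\mu,\rho^2I_d),\Ucal_d)=\ln\frac{\pi S_{d-1}}{2(2\pi\rho^2)^{d/2}}-\frac d2+(d-1)\EE[\ln\abs{x}]+\EE[\ln(1+\abs{x}^2)].
\end{align*}
By the polar form of the Gaussian integral, $(2\pi\rho^2)^{d/2}=\tfrac12 S_{d-1}(2\rho^2)^{d/2}\Gamma(d/2)$, so the leading constant equals $\ln\pi-\tfrac d2\ln(2\rho^2)-\ln\Gamma(d/2)$, and I would bound $-\ln\Gamma(d/2)$ from above by Stirling's lower bound $\Gamma(z)\ge\sqrt{2\pi/z}\,(z/e)^z$ (with the exact values of $\Gamma(d/2)$ used for the handful of small $d$ where Stirling is not sharp enough).

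For the first inequality I would apply Jensen's inequality to both random terms, $\EE[\ln\abs{x}]=\tfrac12\EE[\ln\abs{x}^2]\le\tfrac12\ln(\abs{\mu}^2+d\rho^2)$ and $\EE[\ln(1+\abs{x}^2)]\le\ln(1+\abs{\mu}^2+d\rho^2)$, then substitute the $\Gamma$-bound, write $\abs{\mu}^2+d\rho^2=d\rho^2(1+\abs{\mu}^2/d\rho^2)$ to extract the factor $(1+\abs{\mu}^2/d\rho^2)^{(d-1)/2}$, and finish with the identity $\bigl(1+\tfrac{\abs{\mu}^2}{d\rho^2}\bigr)(1+d\rho^2)=1+\abs{\mu}^2+d\rho^2+\tfrac{\abs{\mu}^2}{d\rho^2}\ge 1+\abs{\mu}^2+d\rho^2$ together with $\sqrt\pi\le2$; the dimension-dependent constants then cancel and the stated bound falls out.

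The second inequality needs the mean-dependence to appear as exactly $\abs{\mu}^2/2\rho^2$, so the Jensen step above (which would inflate it by $\tfrac{d+1}{d}$) is too lossy; instead I would use the chain rule of relative entropy through the intermediate measure $\Ncal(0,\rho^2I_d)$,
\begin{align*}
\kld(\Ncal(\mu,\rho^2I_d),\Ucal_d)=\frac{\abs{\mu}^2}{2\rho^2}+\EE_{x\sim\Ncal(\mu,\rho^2I_d)}\Bigl[\ln\tfrac{\d\Ncal(0,\rho^2I_d)}{\d\Ucal_d}(x)\Bigr],
\end{align*}
and bound the second term by its essential supremum, which no longer depends on $\mu$. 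Writing $r=\abs{x}$, this supremum equals $\ln\tfrac{\pi S_{d-1}}{2(2\pi\rho^2)^{d/2}}+\sup_{r>0}h(r)$ with $h(r)=(d-1)\ln r+\ln(1+r^2)-\tfrac{r^2}{2\rho^2}$; since $h\to-\infty$ as $r\to0$ and $r\to\infty$, the supremum is attained at a stationary point $r^*$ where $\tfrac{r^{*2}}{2\rho^2}=\tfrac{d-1}{2}+\tfrac{r^{*2}}{1+r^{*2}}$. Substituting this relation back cancels the quadratic term, and with $\beta:=\tfrac{r^{*2}}{1+r^{*2}}\in(0,1)$ (so $r^{*2}=\rho^2((d-1)+2\beta)\le(d+1)\rho^2$ and $1+r^{*2}=(1-\beta)^{-1}$) the elementary bound $\ln(1+x)\le x$ gives $h(r^*)\le(d-1)\ln\rho+\tfrac{d-1}{2}\ln(d-1)-\tfrac{d-1}{2}+\ln(1+2d\rho^2)$. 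Adding the Stirling-controlled constant, the $-d\ln\rho$ and $(d-1)\ln\rho$ terms leave a net $-\ln\rho$, and the residual purely dimensional constant is nonpositive by $\tfrac d2\ln d-\tfrac{d-1}{2}\ln(d-1)\ge\tfrac12\ln(d-1)+\tfrac12$, yielding the claim.

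The main obstacle throughout is the bookkeeping of dimension-dependent constants: neither bound is tight, so there is just enough slack, but collapsing the crude estimates into the clean closed forms $\ln\tfrac{1+d\rho^2}{\rho}$ and $\ln\tfrac{1+2d\rho^2}{\rho}$ forces the sharp Stirling constant together with a finite check for small $d$. The analytically delicate point is the stationarity argument for $\sup_{r>0}h(r)$: a naive estimate such as $\ln(1+r^2)\le r^2$, or replacing $(d-1)\ln r$ by $\tfrac{d-1}{2}\ln(1+r^2)$, destroys precisely the $(d-1)\ln\rho$ term that must cancel $-d\ln\rho$ coming from the normalising constant, so one genuinely has to exploit the first-order condition.
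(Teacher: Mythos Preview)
Your approach to the \emph{first} inequality is essentially the paper's: Jensen on $\ln\abs{x}^2$ and $\ln(1+\abs{x}^2)$, Stirling on $S_{d-1}$, then the same algebraic repackaging via $\abs{\mu}^2+d\rho^2=d\rho^2\bigl(1+\tfrac{\abs{\mu}^2}{d\rho^2}\bigr)$ and $(1+d\rho^2)\bigl(1+\tfrac{\abs{\mu}^2}{d\rho^2}\bigr)\ge 1+\abs{\mu}^2+d\rho^2$.

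For the \emph{second} inequality you depart from the paper, and the departure is based on a mistaken premise. You assert that the Jensen step ``would inflate it by $\tfrac{d+1}{d}$'' and is therefore too lossy; this is true only if you first pass through the \emph{final} form of the first bound. The paper instead goes back to the intermediate Jensen estimate
\[
\kld \le \tfrac{d-1}{2}\ln\Bigl(1+\tfrac{\abs{\mu}^2}{d\rho^2}\Bigr)+\ln\tfrac{1+\abs{\mu}^2+d\rho^2}{\rho},
\]
bounds the first term by $\tfrac{(d-1)\abs{\mu}^2}{2d\rho^2}$, and for the second uses the trivial enlargement $1+\abs{\mu}^2+d\rho^2\le 1+\abs{\mu}^2+2d\rho^2\le (1+2d\rho^2)\bigl(1+\tfrac{\abs{\mu}^2}{2d\rho^2}\bigr)$ followed by $\ln(1+x)\le x$. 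The two $\abs{\mu}^2$-terms then add up to exactly $\tfrac{\abs{\mu}^2}{2\rho^2}$. So the second claim is a two-line corollary of the first computation; no new stationarity analysis is needed.

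Your alternative route---decompose through $\Ncal(0,\rho^2I_d)$ and bound $\EE_{\Ncal(\mu)}\bigl[\ln\tfrac{\d\Ncal(0)}{\d\Ucal_d}\bigr]$ by its essential supremum---has a genuine gap at $d=1$. The residual dimensional constant you need to be nonpositive is, exactly,
\[
\ln\pi-\tfrac{d}{2}\ln 2-\ln\Gamma(d/2)+\tfrac{d-1}{2}\ln(d-1)-\tfrac{d-1}{2},
\]
which for $d=1$ equals $\tfrac12\ln(\pi/2)\approx 0.226>0$. Concretely, for $d=1$ and small $\rho$ the supremum of $\ln\tfrac{\d\Ncal(0,\rho^2)}{\d\Ucal_1}$ is attained at $x=0$ and equals $\tfrac12\ln(\pi/2)-\ln\rho$, which exceeds the target $\ln\tfrac{1+2\rho^2}{\rho}$ once $\rho^2<\tfrac{\sqrt{\pi/2}-1}{2}$. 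The slack that makes the paper's argument work is the $-\tfrac{d}{2}$ coming from $\EE\abs{x-\mu}^2=d\rho^2$ in the Gaussian entropy; replacing the expectation by the supremum discards precisely this term. For $d\ge 2$ your constant \emph{is} nonpositive (one checks $d=2,3,4$ directly and uses your Stirling argument for $d\ge 5$), so the approach is salvageable there, but as stated the lemma covers all $d\ge 1$.
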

\begin{proof}
    Let $\Nb=\Ncal(\mu,\rho^2I_d)$.
    Then, by Jensen's inequality, we have
    \begin{align*}
        &\kld(\Nb, \Ucal_d)
        \\
        &=\Nb\sbr{\ln \frac{\d\Nb}{\d\Ucal}}
        \\
        &=\Nb \sbr{\ln \frac{\pi S_{d-1}\rbr{\abs{\cdot}^{d-1}+\abs{\cdot}^{d+1}}}{2}}
        +\frac{d}{2}\ln 2\pi e \rho^2
        \\
        &=\Nb \sbr{\ln \abs{\cdot}^{d-1}+\ln \rbr{1+\abs{\cdot}^{2}}}+
        \ln \frac{\pi S_{d-1}}{2}-\frac{d}{2}\ln 2\pi e \rho^2
        \\
        &\le\frac{d-1}{2}\ln \rbr{\abs{\mu}^2+d\rho^2}+\ln \rbr{1+\abs{\mu}^2+d\rho^2}
        \\
        &\;+\ln \frac{\pi S_{d-1}}{2}-\frac{d}{2}\ln 2\pi e \rho^2.
    \end{align*}
    Moreover, according to Stirling's approximation,
    we have
    \begin{align*}
        \ln S_{d-1}
        &=\ln d+\frac{d}{2}\ln \pi-\ln \Gamma\rbr{\frac{d}{2}+1}
        \\
        &\le\ln d+\frac{d}{2}\ln \pi-\frac{d}{2}\ln\frac{d}{2e} - \frac12 \ln \pi d
        \\
        &=\frac12 \ln \frac{d}{\pi}+\frac{d}{2}\ln \frac{2\pi e}{d}.
    \end{align*}
    Combining above, we have
    \begin{align*}
        &\kld(\Nb, \Ucal_d)
        \\
        &\le\frac{d-1}{2}\ln \rbr{\abs{\mu}^2+d\rho^2}+\ln \rbr{1+\abs{\mu}^2+d\rho^2}
        \\
        &\quad+\frac12 \ln \frac{\pi d}{4}+\frac{d}{2}\ln \frac{1}{d \rho^2}.
        \\
        &\le\frac{d-1}{2}\ln \rbr{1+\frac{\abs{\mu}^2}{d\rho^2}}+
        \ln \frac{1+\abs{\mu}^2+d\rho^2}{\rho},
    \end{align*}
    where the last inequality follows from $\pi \le 4$.
    Now, the first claim is proved with
    \begin{align*}
        \ln \frac{1+\abs{\mu}^2+d\rho^2}{\rho}
        &=\ln \frac{1+d\rho^2}{\rho}\rbr{1+\frac{\abs{\mu}^2}{1+d\rho^2}}
        \\
        &\le
        \ln \rbr{1+\frac{\abs{\mu}^2}{d\rho^2}}+\ln \frac{1+d\rho^2}{\rho},
    \end{align*}
    whereas the second claim follows from $\ln (1+x)\le x~(x\ge 0)$,
    i.e.,
    $\frac{d-1}{2}\ln \rbr{1+\frac{\abs{\mu}^2}{d\rho^2}}\le \frac{(d-1)\abs{\mu}^2}{2d\rho^2}$ and
    \begin{align*}
        \ln \frac{1+\abs{\mu}^2+d\rho^2}{\rho}
        &\le \ln \frac{1+\abs{\mu}^2+2d\rho^2}{\rho}
        \\
        &\le
        \ln \rbr{1+\frac{\abs{\mu}^2}{2d\rho^2}}+\ln \frac{1+2d\rho^2}{\rho}
        \\
        &\le
        \frac{\abs{\mu}^2}{2d\rho^2}+\ln \frac{1+2d\rho^2}{\rho}.
    \end{align*}
\end{proof}

\fi
\end{document}